\declaretheorem[name=Theorem]{Theorem}
\declaretheorem[name=Lemma, numberlike=Theorem]{Lemma}
\newcommand{\R}{\mathbb{R}}
\newcommand{\E}{\mathop{\mathbb{E}}}
\newcommand{\argmin}{\mathop{\text{argmin}}}
\newcommand{\argmax}{\mathop{\text{argmax}}}
\newcommand{\bw}{\mathbf{w}}
\newcommand{\bu}{\mathbf{u}}
\newcommand{\bx}{\mathbf{x}}
\newcommand{\bv}{\mathbf{v}}
\newcommand{\bg}{\mathbf{g}}
\newcommand{\sigmaunit}{\nu}
\newcommand{\cA}{\mathcal{A}}
\newcommand\numberthis{\addtocounter{equation}{1}\tag{\theequation}}
\newcommand{\power}{\mathfrak{p}}
\newcommand{\one}{{\mathbf 1}}
\DeclareMathOperator{\sign}{sign}
\algnewcommand{\Initialize}[1]{%
  \State \textbf{Initialize:}
  \Statex \hspace*{\algorithmicindent}\parbox[t]{.6\linewidth}{\raggedright #1}
}
\title{Parameter-free Regret in High Probability with Heavy Tails}
\author{%
  Jiujia Zhang\\
  Electrical and Computer Engineering\\
  Boston University\\
  \texttt{jiujiaz@bu.edu}\\
  \And
  Ashok Cutkosky\\
  Electrical and Computer Engineering\\
  Boston University\\
  \texttt{ashok@cutkosky.com}
  % examples of more authors
  % \And
  % Coauthor \\
  % Affiliation \\
  % Address \\
  % \texttt{email} \\
  % \AND
  % Coauthor \\
  % Affiliation \\
  % Address \\
  % \texttt{email} \\
  % \And
  % Coauthor \\
  % Affiliation \\
  % Address \\
  % \texttt{email} \\
  % \And
  % Coauthor \\
  % Affiliation \\
  % Address \\
  % \texttt{email} \\
}
\begin{document}

\maketitle

\begin{abstract}
We present new algorithms for online convex optimization over unbounded domains that obtain parameter-free regret in high-probability given access only to potentially heavy-tailed subgradient estimates. Previous work in unbounded domains considers only in-expectation results for sub-exponential subgradients. Unlike in the bounded domain case, we cannot rely on straight-forward martingale concentration due to exponentially large iterates produced by the algorithm. We develop new regularization techniques to overcome these problems. Overall, with probability at most $\delta$, for all comparators $\bu$ our algorithm achieves regret $\tilde O( \|\bu\|T^{1/\power}\log(1/\delta))$ for subgradients with bounded $\power^{th}$ moments for some $\power \in (1, 2]$.
\end{abstract}

\section{Introduction}
In this paper, we consider the problem of online learning with convex losses, also called online convex optimization, with heavy-tailed stochastic subgradients. In the classical online convex optimization setting, given a convex set $\mathcal{W}$, a learning algorithm must repeatedly output a vector $\bw_t\in \mathcal{W}$, and then observe a convex loss function $\ell_t:\mathcal{W}\to \R$ and incur a loss of $\ell_t(\bw_t)$. After $T$ such rounds, the algorithm's quality is measured by the \emph{regret} with respect to a fixed competitor $\bu \in \mathcal{W}$:
\begin{align*}
    R_T(\bu) & = \sum_{t=1}^{T} \ell_t(\bw_t) - \sum_{t=1}^{T} \ell_t(\bu)
\end{align*}
Online convex optimization is widely applicable, and has been used to design popular stochastic optimization algorithms (\citep{duchi10adagrad, kingma2014adam,reddi2018on}), for control of linear dynamical systems \citep{agarwal2019online}, or even building concentration inequalities \citep{vovk2007hoeffding,waudby2020estimating,orabona2021tight}.

A popular approach to this problem reduces it to \emph{online linear optimization} (OLO): if $\bg_t$ is a subgradient of $\ell_t$ at $\bw_t$, then  $R_T(\bu)\le \sum_{t=1}^T 
\langle \bg_t,\bw_t-\bu\rangle$ so that it suffices to design an algorithm that considers only linear losses $\bw\mapsto  \langle \bg_t, \bw\rangle$. Then, by assuming that the domain $\mathcal{W}$ has some finite diameter $D$, standard arguments show that online gradient descent \citep{zinkevich2003online} and its variants achieve $R_T(\bu)\le O(D\sqrt{T})$ for all $\bu\in \mathcal{W}$. See the excellent books \cite{cesabianchi06prediction, shalev2011online, hazan2019introduction, orabona2019modern} for more detail.

Deviating from the classical setting, we study the more difficult case in which, (1) the domain $\mathcal{W}$ may have \emph{infinite} diameter (such as $\mathcal{W}=\R^d$), and (2) instead of observing the loss $\ell_t$, the algorithm is presented only with a potentially heavy-tailed stochastic subgradient estimate $\bg_t$ with $\E[\bg_t|\bw_t] \in \partial \ell_t(\bw_t)$. Our goal is to develop algorithms that, with high probability, obtain essentially the same regret bound that would be achievable even if the full information was available. 

Considering only the setting of infinite diameter $\mathcal{W}$ with \emph{exact} subgradients $\bg_t\in \partial \ell_t(\bw_t)$, past work has achieved bounds of the form $R_T(\bu)\le \tilde O(\epsilon+\|\bu\|\sqrt{T})$ for all $\bu\in \mathcal{W}$ simultaneously for any user-specified $\epsilon$, directly generalizing the $O(D\sqrt{T})$ rate available when $D<\infty$ \citep{orabona2016coin, cutkosky2018black, foster2017parameter, mhammedi2020lipschitz, chen2021impossible}. As such algorithms do not require knowledge of the norm $\|\bu\|$ that is usually used to specify a learning rate for gradient descent, we will call them \emph{parameter-free}. Note that such algorithms typically guarantee constant $R_T(0)$, which is not achieved by any known form of gradient descent.

While parameter-free algorithms appear to fully generalize the finite-diameter case, they fall short when $\bg_t$ is a stochastic subgradient estimate. In particular, lower-bounds suggest that parameter-free algorithms must require Lipschitz $\ell_t$ \citep{cutkosky2017online}, which means that care must be taken when using $\bg_t$ with unbounded noise as this may make $\ell_t$ ``appear'' to be non-Lipschitz. In the case of \emph{sub-exponential} $\bg_t$, \cite{jun2019parameter, van2019user} provide parameter-free algorithms that achieve $\E[R_T(\bu)]\le \tilde O(\epsilon + \|\bu\|\sqrt{T})$, but these techniques do not easily extend to heavy-tailed $\bg_t$ or to high-probability bounds. The high-probability statement is particularly elusive (even with sub-exponential $\bg_t$) because standard martingale concentration approaches appear to fail spectacularly. This failure may be counterintuitive: for \emph{finite diameter} $\mathcal{W}$, one can observe that $\langle \bg_t-\E[\bg_t], \bw_t-\bu\rangle$ forms a martingale difference sequence with variance determined by $\|\bw_t-\bu\|\le D$, which allows for relatively straightforward high-probability bounds. However, parameter-free algorithms typically exhibit \emph{exponentially growing $\|\bw_t\|$} in order to compete with all possible scales of $\|\bu\|$, which appears to stymie such arguments.

Our work overcomes these issues. Requiring only that $\bg_t$ have a bounded $\power^{th}$ moment for some $\power\in(1,2]$, we devise a new algorithm whose regret with probability at least $1-\delta$ is $R_T(\bu)\le \tilde O(\epsilon + \|\bu\|T^{1/\power}\log(1/\delta))$ for all $\bu$ simultaneously. The $T^{1/\power}$ dependency is unimprovable \cite{bubeck2013bandits, vural2022mirror}. Moreover, we achieve these results simply by adding novel and carefully designed regularizers to the losses $\ell_t$ in a way that converts any parameter-free algorithm with sufficiently small regret into one with the desired high probability guarantee.

\textbf{Motivation: } \textit{High-probability} analysis is appealing since it provides a confidence guarantee for an algorithm over a single run. This is crucially important in the online setting in which we must make irrevocable decisions. It is also important in the standard stochastic optimization setting encountered throughout machine learning as it ensures that even a single potentially very expensive training run will produce a good result. See \cite{harvey2019simple, li2020high, madden2020high, kavis2022high} for more discussion on the importance of high-probability bounds in this setting. This goal naturally synergizes with the overall objective of \textit{parameter-free} algorithms, which attempt to provide the best-tuned performance after a single pass over the data. In addition, we consider the presence of \textit{heavy-tailed} stochastic gradients, which are empirically observed in large neural network architectures \cite{zhang2020adaptive, zhou2020towards}. The online optimization problem we consider is actually fundamentally more difficult than the stochastic optimization problem: indeed \cite{carmon2022making} show that lower bounds for parameter-free online optimization to not apply to stochastic optimization, and provide a high-probability analysis for the latter setting. In contrast, the more flexible online setting allows us build more robust algorithms that can perform well in non-stationary or even adversarial environments.

\textbf{Contribution and Organization:} After formally introducing and discussing our setup in Sections~\ref{sec:prelim}, we then proceed to conduct an initial analysis for the 1-D case $\mathcal{W}=\R$ in \ref{sec:nontrivial}. First (Section~\ref{sec:sub_exp}), we introduce a parameter-free algorithm for \emph{sub-exponential} $g_t$ that achieves regret $\tilde{O}(\epsilon+|u|\sqrt{T})$ in high probability. This already improves significantly on prior work, and is accomplished by introducing a novel regularizer that ``cancels'' some unbounded martingale concentration terms, a technique that may have wider application. Secondly (Section~\ref{sec:heavy-tailed}), we extend to \textit{heavy-tailed} $g_t$ by employing clipping, which has been used in prior work on optimization \citep{bubeck2013bandits,gorbunov2020stochastic, zhang2020adaptive, cutkosky2021high} to convert heavy-tailed estimates into sub-exponential ones. This clipping introduces some bias that must be carefully offset by yet another novel regularization (which may again be of independent interest) in order to yield our final $\tilde O(\epsilon + |u|T^{1/\power})$ parameter-free regret guarantee. Finally (Section~\ref{sec:dim_free}), we extend to arbitrary dimensions via the reduction from \cite{cutkosky2018black}.

\section{Preliminaries}\label{sec:prelim}
Our algorithms interact with an adversary in which for $t=1\dots T$ the algorithm first outputs a vector $\bw_t\in \mathcal{W}$ for $\mathcal{W}$ a convex subset of some real Hilbert space, and then the adversary chooses a convex and $G$-Lipschitz loss function $\ell_t:\mathcal{W}\to \R$ and a distribution $P_t$ such that for $\bg_t \sim P_t$, $\E[\bg_t]\in \partial \ell_t(\bw_t)$ and $\E[\|\bg_t-\E[\bg_t]\|^\power]\le \sigma^\power$ for some $\power\in(1,2]$. The algorithm then observes a random sample $\bg_t\sim P_t$. After $t$ rounds, we compute the \emph{regret}, which is a function $R_t(\bu) = \sum_{i=1}^t \ell_i(\bw_i)-\ell_i(\bu)$. Our goal is to guarantee $R_T(\bu)\le \epsilon +\tilde O( \|\bu\|T^{1/\power})$ for all $\bu$ simultaneously with high probability.

Throughout this paper we will employ the notion of a \emph{sub-exponential} random sequence:
\begin{restatable}{Definition}{defsubexp}\label{def:subexp}
Suppose $\{X_t\}$ is a sequence of random variables adapted to a filtration $\mathcal{F}_t$ such that $\{X_t,\mathcal{F}_t\}$ is a martingale difference sequence. Further, suppose $\{\sigma_t,b_t\}$ are random variables such that $\sigma_t,b_t$ are both $\mathcal{F}_{t-1}$-measurable for all $t$. Then, $\{X_t,\mathcal{F}_t\}$ is $\{\sigma_t,b_t\}$ sub-exponential if
\[
\mathbb{E}[\exp(\lambda X_t)|\mathcal{F}_{t-1}]\le \exp(\lambda^2 \sigma_t^2/2)
\]
almost everywhere for all $\mathcal{F}_{t-1}$-measurable $\lambda$ satisfying $\lambda<1/b_t$.
\end{restatable}
We drop the subscript $t$ when we have uniform (not time-varying) sub-exponential parameters $(\sigma, b)$. 
We use bold font ($\bg_t$) to refer to vectors and normal font $(g_t)$ to refer to scalars. Occasionally, we abuse notation to write $\nabla \ell_t(\bw_t)$ for an arbitrary element of $\partial \ell_t(\bw_t)$. 

We present our results using $O(\cdot)$ to hide constant factors, and $\tilde{O}(\cdot)$ to hide $\log$ factors (such as some power of $\log T$ dependence) in the main text, the exact results are left at the last line of the proof for interested readers.

Finally, observe that by the unconstrained-to-constrained conversion of \cite{cutkosky2018black}, we need only consider the case that $\mathcal{W}$ is an entire vector space. By solving the problem for this case, the reduction implies a high-probability regret algorithm for any convex $\mathcal{W}$.

\section{Challenges}\label{sec:nontrivial}
A reader experienced with high probability bounds in online optimization may suspect that one could apply fairly standard approaches such as gradient clipping and martingale concentration to easily achieve high probability bounds with heavy tails. While such techniques do appear in our development, the story is far from straightforward. In this section, we will outline these non-intuitive difficulties. For a further discussion, see Section 3 of \cite{jun2019parameter}.

For simplicity, consider $w_t \in \R$. Before attempting a high probability bound, one may try to derive a regret bound in expectation with heavy-tailed (or even light-tailed) gradient $g_t$ via the following calculation:
\begin{align*}
   \E[R_T(u)] & =  \E \left[ \sum_{t=1}^{T} \ell_t(w_t) - \ell_t(u) \right] \leq \sum_{t=1}^{T} \E \left[ \langle g_t, w_t - u \rangle \right] +  \sum_{t=1}^{T} \E \left[ \langle \nabla \ell_t(w_t) - g_t, w_t- u \rangle \right]
\end{align*}
The second sum from above vanishes, so one is tempted to send $g_t$ directly to some existing parameter-free algorithm to obtain low regret. Unfortunately, most parameter-free algorithms require a uniform bound on $|g_t|$ - even a \emph{single} bound-violating $g_t$ could be catastrophic \citep{cutkosky2017online}. With heavy-tailed $g_t$, we are quite likely to encounter such a bound-violating $g_t$ for any reasonable uniform bound. In fact, the issue is difficult even for light-tailed $g_t$, as described in detail by \cite{jun2019parameter}.

A natural approach to overcome this uniform bound issue is to incorporate some form of clipping, a commonly used technique controlling for heavy-tailed subgradients. The clipped subgradient $\hat{g}_t$ is defined below with a positive clipping parameter $\tau$ as:
\begin{align*}
    \hat{g}_t = \frac{g_t}{|g_t|} \min (\tau, |g_t|)
\end{align*}
If we run algorithms on uniformly bounded $\hat{g}_t$ instead, the expected regret can now be written as:
\begin{align*}
   \E[R_T(u)] & \leq \underbrace{\sum_{t=1}^{T} \E \left[ \langle \hat{g}_t, w_t - u \rangle \right]}_{\text{parameter-free regret}} +   \underbrace{\sum_{t=1}^{T} \E \left[ \langle \E[\hat{g}_t] - \hat{g}_t, w_t - u \rangle \right]}_{\text{martingale concentration?}} + \underbrace{\sum_{t=1}^{T} \E \left[ \langle \nabla \ell_t(w_t) - \E[\hat{g}_t], w_t- u \rangle \right]}_{\text{bias}}\numberthis\label{eqn:difficultregret}
\end{align*}
Since $|\hat{g}_t| \leq \tau$, the first term can in fact be controlled for appropriate $\tau$ at a rate of $\tilde O(\epsilon + |u|\sqrt{T})$ using sufficiently advanced parameter-free algorithms (e.g. \cite{cutkosky2018black}). However, now bias accumulates in the last term, which is difficult to bound due to the dependency on $w_t$. On the surface, understanding this dependency appears to require detailed (and difficult) analysis of the dynamics of the parameter-free algorithm. In fact, from naive inspection of the updates for standard parameter-free algorithms, one expects that $|w_t|$ could actually grow exponentially fast in $t$, leading to a very large bias term.

Finally, disregarding these challenges faced even in expectation, to derive a high-probability bound the natural approach is to bound the middle sum in (\ref{eqn:difficultregret}) via some martingale concentration argument. Unfortunately, the variance process for this martingale depends on $w_t$ just like the bias term. In fact, this issue appears even if the original $g_t$ already have bounded norm, which is the most extreme version of \emph{light} tails! Thus, we again appear to encounter a need for small $w_t$, which may instead grow exponentially. In summary, the unbounded nature of $w_t$ makes dealing with any kind of stochasticity in the $g_t$ very difficult. In this work we will develop techniques based on regularization that intuitively force the $w_t$ to behave well, eventually enabling our high-probability regret bounds.

\section{Bounded Sub-exponential Noise via Cancellation}\label{sec:sub_exp}
In this section, we describe how to obtain regret bound in high probability for stochastic subgradients $g_t$ for which $\E[g_t^2]\le \sigma^2$ and $|g_t|\le b$ for some $\sigma$ and $b$ (in particular, $g_t$  exhibits $(\sigma,4b)$ sub-exponential noise). We focus on the 1-dimensional case with $\mathcal{W}=\R$. The extension to more general $\mathcal{W}$ is covered in Section~\ref{sec:dim_free}. Our method involves two coordinated techniques. First, we introduce a carefully designed regularizer $\psi_t$ such that \emph{any algorithm} that achieves low regret with respect to the losses $w\mapsto g_t w + \psi_t(w)$ will automatically ensure low regret with high probability on the original losses $\ell_t$. Unfortunately, $\psi_t$ is not Lipschitz and so it is still not obvious how to obtain low regret. We overcome this final issue by an ``implicit'' modification of the optimistic parameter-free algorithm of \cite{cutkosky2019combining}.  Our overall goal is a regret bound of $R_T(u)\le \tilde O(\epsilon + |u|(\sigma+G)\sqrt{T} + b|u|)$ for all $u$ with high probability. Note that with this bound, $b$ can be $O(\sqrt{T})$ before it becomes a significant factor in the regret.

Let us proceed to sketch the first (and most critical) part of this procedure: Define $\epsilon_t = \nabla \ell_t(w_t) - g_t$, so that $\epsilon_t$ captures the ``noise'' in the gradient estimate $g_t$. In this section, we assume that $\epsilon_t$ is $(\sigma,4b)$ sub-exponential for all $t$ for some given $\sigma,b$ and $|g_t|\le b$. Then we can write:
% while algorithms deterministically controls regret incurred by $g_t$, the term related to $\epsilon_t$ are random and accumulates bias. Since $\epsilon_t$ is sub-exponential random variable, with probability at least $1-\delta$ a tail bound $B(\delta)$ could be derived. 
\begin{align*}
    R_T(u) &\leq \sum_{t=1}^{T} \langle \nabla \ell_t(w_t), w_t-u \rangle  = \sum_{t=1}^{T} \langle g_t, w_t - u\rangle + \sum_{t=1}^{T} \langle \epsilon_t, w_t \rangle - \sum_{t=1}^{T} \langle \epsilon_t,  u\rangle \\
    & \leq\sum_{t=1}^{T} \langle g_t, w_t - u\rangle + \underbrace{\left|\sum_{t=1}^{T} \epsilon_t w_t \right| + |u| \left|\sum_{t=1}^{T} \epsilon_t\right|}_{\text{``noise term'', }\textsc{Noise}} \numberthis \label{eqn:motivation1}
\end{align*}
Now, the natural strategy is to run an OLO algorithm $\cA$ on the observed $g_t$, which will obtain some regret $R^{\cA}_T(u)=\sum_{t=1}^{T} \langle g_t, w_t - u\rangle$, and then show that the remaining \textsc{Noise} terms are small. To this end, from sub-exponential martingale concentration, we might hope to show that with probability $1-\delta$, we have an identity similar to:
\begin{align*}
    \textsc{Noise} &\le \sigma \sqrt{\sum_{t=1}^T w_t^2\log(1/\delta)} + b\max_t |w_t| \log(1/\delta)+ |u|\sigma \sqrt{T\log(1/\delta)} + |u|b\log(1/\delta)
\end{align*}

The dependency of $|u|$ above appears to be relatively innocuous as it only contributes $\tilde O(|u|\sigma \sqrt{T} + |u|b)$ to the regret. The $w_t$-dependent term is more difficult as it involves a dependency on the algorithm $\cA$. This captures the complexity of our unbounded setting: in a \emph{bounded domain}, the situation is far simpler as we can uniformly bound $|w_t|\le D$, ideally leaving us with an $\tilde O(D\sqrt{T})$ bound overall. 

Unfortunately, in the unconstrained case, $|w_t|$ could grow exponentially ($|w_t|\sim 2^t)$ even when $u$ is very small, so we cannot rely on a uniform bound. In fact, even in the finite-diameter case, if we wish to guarantee $R_T(0)\le \epsilon$, the bound $|w_t|\le D$ is still too coarse. The resolution is to instead feed the algorithm $\cA$ a \emph{regularized} loss $\hat{\ell}_t(w) = \langle g_t, w \rangle + \psi_t(w)$, where $\psi_t$ will ``cancel'' the $w_t$ dependency in the martingale concentration. That is, we now define $R^{\cA}_T(u)=\sum_{t=1}^T \hat \ell_t(w_t)-\hat\ell_t(u)$ and rearrange:
\begin{align*}
    \sum_{t=1}^{T} \langle g_t, w_t-u \rangle & \leq R_T^{\mathcal{A}}(u) - \sum_{t=1}^{T} \psi_t(w_t) + \sum_{t=1}^{T} \psi_t(u) \numberthis \label{eqn:motivation2}
\end{align*}
And now combine equations (\ref{eqn:motivation1}) and (\ref{eqn:motivation2}):
\begin{align*}
    R_T(u) &\leq R_T^{\mathcal{A}}(u) - \sum_{t=1}^{T} \psi_t(w_t)+ \sum_{t=1}^{T} \psi_t(u) + \textsc{Noise}\\
    &\leq R_T^{\mathcal{A}}(u) + \sigma \sqrt{\sum_{t=1}^T w_t^2\log(1/\delta)} + b\max_t |w_t| \log(1/\delta) - \sum_{t=1}^{T} \psi_t(w_t)\\
    &\qquad + |u|\sigma \sqrt{T\log(1/\delta)} + |u|b\log(1/\delta)+\sum_{t=1}^{T} \psi_t(u) \numberthis \label{eqn:motivation3}
\end{align*}
From this, we  can read off the desired properties of $\psi_t$: (1) $\psi_t$ should be large enough that $\sum_{t=1}^T \psi_t(w_t)\ge \sigma \sqrt{\sum_{t=1}^T w_t^2\log(1/\delta)} + b\max_t |w_t| \log(1/\delta)$, (2) $\psi_t$ should be small enough that $\sum_{t=1}^T \psi_t(u)\le\tilde O( |u|\sqrt{T})$, and (3) $\psi_t$ should be such that $R_T^{\cA}(u)=\tilde O(\epsilon + |u|\sqrt{T})$ for an appropriate algorithm $\cA$. If we can exhibit a $\psi_t$ satisfying all three properties, we will have developed a regret bound of $\tilde O(\epsilon+|u|\sqrt{T})$ in high probability.

It turns out that the modified Huber loss $r_t(w)$ defined in equation (\ref{eqn:huber_loss}) and (\ref{eqn:reg_with_number}) with appropriately chosen constants $c_1, c_2, p_1, p_2, \alpha_1, \alpha_2$ satisfies criterion (1) and (2). %The pesudocode is shown in algorithm \ref{alg:bernstein_algo}.
\begin{align*}
    r_t(w; c,p,\alpha_0) & =  
    \begin{cases}
    c \left(p |w| - (p-1) |w_t| \right) \frac{|w_t|^{p - 1}}{( \sum_{i=1}^{t} |w_i|^{p} + \alpha_0^p )^{1- 1/p} }, & |w| > |w_t|\\
    c |w|^{p} \frac{1}{( \sum_{i=1}^{t} |w_i|^{p} + \alpha_0^p )^{1-1/p} }, & |w| \leq |w_t|
    \end{cases} \numberthis \label{eqn:huber_loss}\\
    \psi_t(w) & = r_t(w; c_1,p_1,\alpha_1)+ r_t(w; c_2,p_2,\alpha_2) \numberthis \label{eqn:reg_with_number}
\end{align*}
Let us take a moment to gain some intuition for these functions $r_t$ and $\psi_t$. First, observe that $r_t$ is always continuously differentiable, and that $r_t$'s definition requires knowledge of $w_t$. This is acceptable because online learning algorithms must be able to handle even adaptively chosen losses. In particular, consider the $p=2$ case, $r_t(w;c,2,\alpha)$ for some positive constants $c$ and $\alpha$. We plot this function in Figure~\ref{fig:rplot}, where one can see that $r_t$ grows quadratically for $|w|\le |w_t|$,  but  grows only linearly afterwards so that $r_t$ is Lipschitz.

\begin{SCfigure}[][h]
  \centering
  \includegraphics[width=0.55\textwidth]{}
  \caption{$r_t(w;1,2,1)$ when $\sum_{i=1}^t w_i^2=10$ and $w_t=2$. The dashed line has slope $cp\tfrac{|w_t|^{p-1}}{\left(\sum_{i=1}^t |w_i|^p + \alpha_0^p\right)^{1/p}}$, so that $r_t$ is quadratic for $|w|\le |w_t|$ and linear otherwise. Notice that $w_t$ is a constant used to define $r_t$ - it is not the argument of the function.}
  \label{fig:rplot}
\end{SCfigure}

% \begin{wrapfigure}[17]{l}{0.45\textwidth}
%     %\vspace{1cm}
%    \includegraphics[width=0.45\textwidth]{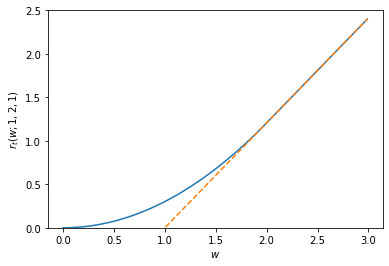}
%     \caption{$r_t(w;1,2,1)$ when $\sum_{i=1}^t w_i^2=10$} and $w_t=2$. The dashed line has slope $cp\tfrac{|w_t|^{p-1}}{\left(\sum_{i=1}^t |w_i|^p + \alpha_0^p\right)^{1/p}}$, so that $r_t$ is quadratic for $|w|\le |w_t|$ and linear otherwise. Notice that $w_t$ is a constant used to define $r_t$ - it is not the argument of the function.
%    \label{fig:rplot}
% \end{wrapfigure}

Eventually, in Lemma~\ref{lemma:reg_ub_lb} we will show that this functions satisfies 
\begin{align*}
    &\sum_{t=1}^T r_t(w_t; c,2,\alpha)\ge c\sqrt{\sum_{t=1}^T w_t^2}-\alpha, &
    \sum_{t=1}^T r_t(u:c,2,\alpha)\le \tilde O(u\sqrt{T})
\end{align*}
so that for appropriate choice of $c$ and $\alpha$, $r_t(w;c,2,\alpha)$ will cancel the $O(\sqrt{\sum_{t=1}^T w_t^2})$ martingale concentration term while not adding too much to the regret - it satisfies criteria (1) and (2). The lower-bound follows from the standard inequality $\sqrt{a+b}\le \sqrt{a} + \frac{b}{\sqrt{a+b}}$ since $r_t(w_t) = c\frac{w_t^2}{\sqrt{\alpha^2+\sum_{i=1}^t w_i^2}}$. The upper-bound is more subtle, and involves the piece-wise definition. For simplicity, suppose it were true that either $|w_t| < |u|$ for all $t$ or $|w_t| \geq |u| $ for all $t$. In the former case, $\sum_{t=1}^T r_t(u)=O\left(|u|\sum_{t=1}^T \frac{|w_t|}{\sqrt{\alpha+\sum_{i=1}^t w_i^2}}\right)$, which via algebraic manipulation can be bounded as $\tilde O(|u|\sqrt{T})$. In the latter case, we have $\sum_{t=1}^T \frac{|u|^2}{\sqrt{\alpha^2+\sum_{i=1}^t w_t^2}}\le \sum_{t=1}^T \frac{|u|^2}{\sqrt{\alpha^2+tu^2}} = \tilde O(|u|\sqrt{T})$ so that both cases result in the desired bound on $\sum_{t=1}^T r_t(u)$. The general setting is handled by partitioning the sum into two sets depending on whether $|w_t|\le |u|$. In order to cancel the $\max_t |w_t|$ term in the martingale concentration, we employ $p=\log T$. This choice is motivated by the observation that $\|\bv\|_{\log T}\in[\|\bv\|_\infty, \exp(1)\|\bv\|_\infty]$ for all $\bv\in \R^{\log T}$. With this identity in hand, the argument is very similar to the $p=2$ case.

The correct values for the constants are provided in Theorem~\ref{thm:main_sub_exp}. Again, at a high level, the important constants are $p_1$ and $p_2$. With $p_1=2$, we allow $\sum_t r_t(w_t;p=2)$ to cancel out the $\sqrt{\sum_t w_t^2}$ martingale concentration term, while with $p_2=\log T$, $\sum_t r_t(w_t;p=\log T)$ cancels that $\max_t |w_t|$ term.

It remains to show that $\psi_t$ also allows for small $R_T^{\mathcal{A}}(u)$ and so satisfies criterion (3). Unfortunately, our setting for $c_2$ in the definition of $\psi_t$ is $\tilde O(b)$, which means that $\psi_t$ is $\tilde O(b)$-Lipschitz. Since we wish to allow for $b=\Theta(\sqrt{T})$, this means that we cannot simply let $\cA$ linearize $\psi_t$ and apply an arbitrary OLO algorithm. Instead, we must exploit the fact that $\psi_t$ is known \emph{before} $g_t$ is revealed. That is, algorithm $\mathcal{A}$ is chosen to exploit the structure composite loss $\hat{\ell}_t(w)$. Intuitively, the regret of a composite loss should depend only on the non-composite $g_t$ terms (as in e.g. \cite{duchi2010composite}). Our situation is slightly more complicated as $\psi_t$ depends on $w_t$ as well, but we nevertheless achieve the desired result via a modification of the parameter-free optimistic reduction in \cite{cutkosky2019combining}. 
% \textcolor{blue}{In brief, combining regret guarantees from two independent algorithms algebraically resultant to a tighter regret bound for the regulairized loss $\hat{\ell}_t(w)$ of our kind.} 
For technical reasons, this algorithm still requires $|g_t|\le b$ with probability 1, but obtains regret only $R^{\cA}_T(u)\le \tilde O(\epsilon + |u|\sigma\sqrt{T} +|u|b)$. This technical limitation is lifted in the following section. 

\begin{algorithm}[H]
\caption{Sub-exponential Noisy Gradients with Optimistic Online Learning}\label{alg:sub_exp_combine}
\begin{algorithmic}[1]
\Require $E[g_t] = \nabla \ell_t(w_t)$, $|g_t| \leq b, \E[g_t | w_t] \leq \sigma^2$ almost surely. Two online learning algorithms (e.g. copies of Algorithm 1 from \cite{cutkosky2018black}) labelled as $\mathcal{A}_1, \mathcal{A}_2$ with domains $\R$ and $\R_{\ge 0}$ respectively. Time horizon $T$, $0 < \delta \leq 1$.
\Initialize{Constants $ \{c_1, c_2, p_1, p_2, \alpha_1, \alpha_2\}$ from Theorem~\ref{thm:main_sub_exp}.\\ 
$H=c_1p_1 + c_2 p_2$} \Comment{for defining $\psi_t$ in equation (\ref{eqn:reg_with_number})}
\For{$t=1$ to $T$}
    \State Receive $x_{t}'$ from $\mathcal{A}_1$, $y_t' $ from $\mathcal{A}_2$ 
    \State Rescale $x_t =  x_t'/(b+H) $, $y_t =y_t'/( H(b+H) )$
    \State Solve for $w_t$: $w_t = x_t - y_t \nabla \psi_t(w_t)$ \Comment{The solution exists by Lemma \ref{lemma:existence_sol}}
    \State Play $w_t$ to, suffer loss $\ell_t(w_t)$
    \State Receive $g_t$  with $\E[g_t] \in \partial\ell_t(w_t)$ 
    \State Compute $\psi_t(w) = r_t(w;c_1,p_1,\alpha_1)+r_t(w;c_2,p_2,\alpha_2)$ and $\nabla \psi_t(w_t)$ \Comment{equations (\ref{eqn:huber_loss}), (\ref{eqn:reg_with_number})}
    \State Send $(g_t + \nabla \psi_t(w_t))/(b+H)$ to $\mathcal{A}_1$ 
    \State Send $-\langle {g}_t + \nabla \psi_t(w_t), \nabla \psi_t(w_t) \rangle/H(b+H) $ to $\mathcal{A}_2$
\EndFor
\end{algorithmic}
\end{algorithm}

We display the method as Algorithm \ref{alg:sub_exp_combine}, which provides a regularization that cancels the $|w_t|$ dependent part of the \textsc{Noise} term in (\ref{eqn:motivation_heavy_1}). It also allows us to control $R_T^{\cA}(u)$ to order $\tilde O(\epsilon + |u|\sigma\sqrt{T} + b|u|)$ by taking account into the predictable structure of regularizer $\psi_t(w)$. The algorithm requires black-box access to two base online learning algorithms, which we denote $\cA_1$ and $\cA_2$ with domains $(-\infty,\infty)$ and $[0,\infty)$ respectively. These can be any algorithms that obtain so-called ``second-order'' parameter-free regret bounds, such as available in \cite{cutkosky2018black, van2019user, kempka2019adaptive, mhammedi2020lipschitz}. Roughly speaking, the role of $\cA_1$ is to provide an initial candidate ouput $x_t$ that is then ``corrected'' by $\cA_2$ using the regularization to obtain the final $w_t$.

Following the intuition previously outlined in this section, We first provide a deterministic regret guarantee on the quantity $R_T^{\mathcal{A}}(u) = \sum_{t=1}^T \hat \ell_t(w_t)-\hat\ell_t(u)$ as an intermediate result (Theorem~\ref{thm:base_algo}). Then, we provide the analysis of the full procedure of Algorithm~\ref{alg:sub_exp_combine} for the final high probability result (Theorem~\ref{thm:main_sub_exp}). Missing proofs are provided in the Appendix \ref{sec:app_optmistic} and \ref{sec:app_bernstein}.

\begin{restatable}{Theorem}{basealgo}
\label{thm:base_algo}
    Suppose $\mathcal{A}_1$  ensure that given some $\epsilon>0$ and a sequence  $c_t$ with $|c_t| \leq 1$:
    \begin{align*}
        \sum_{t=1}^{T} \langle c_t, w_t - u \rangle \leq \epsilon + A |u| \sqrt{\sum_{t=1}^{T} |c_t|^2 \left( 1 + \log \left( \frac{|u|^2 T^C}{\epsilon^2} + 1 \right) \right)} + B|u| \log \left(\frac{|u|T^C}{\epsilon} + 1 \right)
    \end{align*}
    for all $u$ for some positive constants $A, B, C$, and that $\mathcal{A}_2$ obtains the same guarantee for all $u\ge 0$, % Define a constant $N$
    % \begin{align*}
    %     N = 1 + \log \left(\frac{(G+H)^2 |u|^2 T^C }{\epsilon^2} + 1 \right)
    % \end{align*}
    then for $|g_t| \leq b $, $| \nabla \psi_t(w_t) | \leq H$, we have the following guarantee from Algorithm \ref{alg:sub_exp_combine},
    \begin{align*}
         R_T^{\mathcal{A}}(u)& \leq O\left[ \epsilon + |u| \left(\sqrt{\max \left(0, \sum_{t=1}^{T} | g_t |^2  - | \nabla \psi_t(w_t) |^2 \right) } + (b+H) \log T \right)\right]
    \end{align*}
    % \begin{align*}
    %     R_T^{\mathcal{A}}(u) & = \sum_{t=1}^{T} \langle g_t + \nabla \psi_t(w_t), w_t - u \rangle\\
    %     & \leq 2\epsilon + |u| \left[ \frac{3A}{2} \sqrt{N \max \left(0, \sum_{t=1}^{T} | g_t |^2  - | \nabla \psi_t(w_t) |^2 \right)} + 2 \left(A^2H  + B (G+H) \right) N \right]
    % \end{align*}
\end{restatable}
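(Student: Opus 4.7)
The plan is to linearize the regret on the composite loss $\hat\ell_t$ via convexity, then exploit the implicit update $w_t = x_t - y_t \nabla\psi_t(w_t)$ to split the resulting bound into two pieces that correspond exactly to the inputs supplied to $\mathcal{A}_1$ and $\mathcal{A}_2$. The technical heart will be an AM-GM cancellation that uses $\mathcal{A}_2$'s regret to ``subtract off'' an excess term and replace $\sqrt{\sum_t |\hat g_t|^2}$ by $\sqrt{\max(0,\sum_t(|g_t|^2-|\nabla\psi_t(w_t)|^2))}$ as claimed.

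\textbf{Setup.} Write $\hat g_t := g_t + \nabla\psi_t(w_t)$ and $V_t := \langle \hat g_t,\nabla\psi_t(w_t)\rangle$. Convexity of $\psi_t$ gives $\psi_t(w_t)-\psi_t(u)\le \langle \nabla\psi_t(w_t), w_t-u\rangle$, so substituting $w_t-u = (x_t-u) - y_t\nabla\psi_t(w_t)$,
\begin{align*}
R_T^{\mathcal{A}}(u) \;\le\; \sum_{t=1}^T \langle \hat g_t,w_t-u\rangle \;=\; \underbrace{\sum_{t=1}^T \langle \hat g_t,x_t-u\rangle}_{\text{(I)}} \;-\; \underbrace{\sum_{t=1}^T y_tV_t}_{\text{(II)}}.
\end{align*}
Expanding the square yields the identity $\sum_t|\hat g_t|^2 = S+2V$ with $S:=\sum_t(|g_t|^2-|\nabla\psi_t(w_t)|^2)$ and $V:=\sum_t V_t$, which will drive the rest of the argument.

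\textbf{Apply the black-box bounds.} Since $|\hat g_t|\le b+H$, the scaled input $c_t^{(1)}=\hat g_t/(b+H)$ to $\mathcal{A}_1$ satisfies $|c_t^{(1)}|\le 1$; invoking its second-order guarantee with comparator $u'=u(b+H)$ bounds (I) by $\epsilon + A|u|\sqrt{(S+2V)\log(\cdot)} + B|u|(b+H)\log(\cdot)$. Similarly $|V_t|\le H(b+H)$ ensures $|c_t^{(2)}|\le 1$; applying $\mathcal{A}_2$'s guarantee with comparator $u_2'=|u|(b+H)\ge 0$ and the inequality $\sum_t V_t^2 \le H^2\sum_t|\hat g_t|^2 = H^2(S+2V)$ bounds (II) by $-|u|V/H + \epsilon + A|u|\sqrt{(S+2V)\log(\cdot)} + B|u|(b+H)\log(\cdot)$. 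When $V<0$, we instead choose $u_2'=0$, obtaining the trivial $\text{(II)}\le \epsilon$ while using $\sqrt{S+2V}\le \sqrt{S}$ in (I).

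\textbf{AM-GM cancellation (the main obstacle).} Summing the two bounds leaves the awkward combination $2A|u|\sqrt{(S+2V)\log(\cdot)}$ tied to $-|u|V/H$. Using subadditivity $\sqrt{S+2V}\le \sqrt{\max(0,S)} + \sqrt{\max(0,2V)}$ and the AM-GM inequality
\begin{align*}
A|u|\sqrt{V\log(\cdot)} \;\le\; \frac{|u|V}{4H} + O\bigl(H|u|\log(\cdot)\bigr),
\end{align*}
the $V$-dependent square root is absorbed into $|u|V/H$, leaving a nonpositive net $V$-contribution that is discarded; the $O(H|u|\log(\cdot))$ residue collapses into the $|u|(b+H)\log T$ term. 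A short case split on the signs of $V$ and $S$ (noting that $\sum_t|\hat g_t|^2 = S+2V \ge 0$ forces at least one to be nonnegative) covers every regime and yields the stated bound. The delicate point is that the comparator choice $u_2'=|u|(b+H)$ must be tied to $u$, and the rescalings inside the algorithm are calibrated so that the $-|u|V/H$ term produced by $\mathcal{A}_2$ is precisely large enough to consume the $\sqrt{V}$ piece extracted from $\mathcal{A}_1$'s bound.
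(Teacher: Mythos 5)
Your proof is correct, and while the scaffolding (linearize $\psi_t$ by convexity, substitute the fixed-point $w_t=x_t-y_t\nabla\psi_t(w_t)$, invoke $\mathcal{A}_1$ and $\mathcal{A}_2$'s second-order bounds, exploit the identity $\sum_t|\hat g_t|^2 = S+2V$ which is just the polarization identity $-2\langle\hat g_t,\nabla\psi_t\rangle=|g_t|^2-|\hat g_t|^2-|\nabla\psi_t|^2$ summed over $t$) matches the paper's, the final optimization step is genuinely different. The paper works with an explicit $\inf_{y_\star\ge 0}\sup_{X\ge 0}\sup_{Z\ge 0}$ formulation: it splits $\tfrac{y_\star}{2}X$ into $\tfrac{y_\star}{4}(X+Z)$, replaces $X$ by $Z$ in one square-root, uses $\sup_x a\sqrt{x}-bx=a^2/4b$ on both, and then plugs in a data-dependent $y_\star=\min\bigl(2A|u|\sqrt{N}/\sqrt{\max(0,S)},\,|u|/H\bigr)$. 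Your version fixes the $\mathcal{A}_2$ comparator to $|u|/H$ (or $0$ when $V<0$), splits $\sqrt{S+2V}\le\sqrt{\max(0,S)}+\sqrt{\max(0,2V)}$, and cancels the $\sqrt{V}$ contribution against $-|u|V/H$ by a single AM-GM with a short sign case-split. The two routes yield the same bound up to constants; yours is more elementary and avoids the slack-variable bookkeeping, though you should state explicitly that the data-dependent choice of comparator ($|u|/H$ versus $0$) is legitimate because $\mathcal{A}_2$'s guarantee holds for \emph{all} $y_\star\ge 0$ simultaneously, so you are free to pick the comparator in hindsight. A second minor point worth making explicit is the domain restriction of the base algorithms guaranteed by Lemma~\ref{lemma:regret_transform} (the paper invokes it to rescale to $1$-Lipschitz inputs); you implicitly use it when asserting $|c_t^{(1)}|\le 1$ and $|c_t^{(2)}|\le 1$, so you should cite it.
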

Although this Theorem~\ref{thm:base_algo} is rather technical, the overall message is not too complicated. If we ignore the negative $|\nabla \psi_t(w_t)|^2$ terms, the bound simply says that the regret on the ``composite'' loss $\langle g_t,w\rangle + \psi_t(w)$ only increases with the apriori-unknown $g_t$, and \emph{not} with $\nabla \psi_t(w_t)$. With this result, we can formalize the intuition in this section to  provide the following high probability regret bound:

\begin{restatable}{Theorem}{mainsubexp}
\label{thm:main_sub_exp}
    Suppose $\{g_t\}$ are stochastic subgradients such that  $\E[g_t]\in \partial \ell_t(w_t)$, $|g_t| \leq b$ and $\E[g_t^2 | w_t] \leq \sigma^2$ almost surely for all $t$. Set the following constants for $\psi_t(w)$ shown in equation (\ref{eqn:reg_with_number}) for any $0 < \delta \leq 1$, $\epsilon > 0$,
    \begin{align*}
        \begin{matrix}
            c_1 = 2 \sigma \sqrt{ \log\left(\frac{32}{\delta}\left[\log\left( 2^{T+1} \right)+2\right]^2\right)},  & c_2 =  32b \log\left(\frac{224}{\delta}\left[\log\left( 1 + \frac{b}{\sigma} 2^{T+2}\right)+2\right]^2\right),\\
         p_1 = 2, ~~~~~~~~ p_2 = \log T, &\alpha_{1} = \epsilon/c_1, ~~~~~~~ \alpha_{2} = \epsilon \sigma/(4b(b+H)) \\
        \end{matrix}
    \end{align*}
    where $H = c_1p_1+c_2p_2$, $| \nabla \psi_t(w_t) | \leq  H$. Then, with probability at least $1-\delta$, algorithm \ref{alg:sub_exp_combine} guarantees 
\begin{align*}
    R_T(u) & \leq \tilde{O}\left[\epsilon \log \frac{1}{\delta} + |u|b \log\frac{1}{\delta} + |u|\sigma \sqrt{T\log \frac{1}{\delta}}\right]
\end{align*}
% \begin{align*}
%     R_T(u) & \leq \tilde{O}\left[\log \frac{1}{\delta} \left( \epsilon + |u|b \right) + |u|\sigma \sqrt{T \log \frac{1}{\delta}} + |u| \sqrt{\max(0, \sum_{t=1}^{T} |g_t|^2 - |\nabla \psi_t (w_t)|^2 )}\right]
% \end{align*}

% ------ delete below ---
% \begin{align*}
%  R_T(u) & \leq \epsilon \left( 3 + \frac{8\sigma}{b+H}  \log\left(\frac{112}{\delta}\left[\log\left( 2 + \frac{b}{\sigma} 2^{T+2}\right)+2\right]^2\right)  
%         \right) \\
%         & \quad + |u| \left[4c_1(A^2+B)N  + \frac{3A}{2} \sqrt{N \max \left(0, \sum_{t=1}^{T} | g_t |^2  - | \nabla \psi_t(w_t) |^2 \right)} \right]\\
%         & \quad + |u|b \Bigg[ 2BN + 2 \log \frac{4}{\delta} +  \frac{c_2 \log T}{b}\left(2 (A^2+B)N + 3\max \left( 0,  \log T \log \left( \frac{3 |u|}{\alpha_2 } \right) \right) + 4 \right)\Bigg]\\
%         & \quad + |u|\sqrt{T} \left[ 2c_1 \left(\sqrt{\log \left(\frac{T|u|c_1^2}{\epsilon^2} + 1 \right)} +1\right) + \sigma\sqrt{2 \log \frac{4}{\delta}}  \right]
%     where $N = 1 + \log \left(\frac{(b+H)^2 |u|^2 T^C }{\epsilon^2} + 1 \right)$, $| \nabla \psi_t(w_t) | \leq  H$.
% ------ delete above ---
\end{restatable}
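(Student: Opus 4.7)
The plan is to combine the deterministic decomposition derived in equations (\ref{eqn:motivation1})--(\ref{eqn:motivation3}) with three ingredients: (a) sub-exponential martingale concentration for the noise terms, (b) the cancellation and growth estimates for $\psi_t$ from Lemma~\ref{lemma:reg_ub_lb}, and (c) the deterministic composite-loss regret bound from Theorem~\ref{thm:base_algo}. Writing $\epsilon_t = \nabla \ell_t(w_t) - g_t$, the identity (\ref{eqn:motivation3}) already reduces the task to controlling four pieces: $R_T^{\cA}(u)$, the $u$-dependent noise $|u\sum_t \epsilon_t|$, the $w_t$-dependent noise $|\sum_t \epsilon_t w_t|$, and the regularizer surplus $\sum_t \psi_t(u) - \sum_t \psi_t(w_t)$.

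For the noise, since $|g_t|\le b$ and $\E[g_t^2\mid w_t]\le \sigma^2$, $\epsilon_t$ is $(\sigma, 4b)$ sub-exponential. A standard Freedman/Bernstein sub-exponential inequality yields $|u\sum_t \epsilon_t|\le |u|(\sigma\sqrt{T\log(1/\delta)} + b\log(1/\delta))$ with high probability. The more delicate term $\sum_t \epsilon_t w_t$ is a martingale difference sequence with $w_t$-scaled, time-varying parameters $(\sigma |w_t|, 4b|w_t|)$, whose variance $\sigma^2\sum_t w_t^2$ and effective scale $b\max_t|w_t|$ are themselves random; a peeling/stitching argument over dyadic ranges of these two quantities (this is the source of the iterated-log factors visible in $c_1,c_2$) delivers $|\sum_t \epsilon_t w_t|\le \sigma\sqrt{(\sum_t w_t^2)\log(1/\delta)} + b(\max_t |w_t|)\log(1/\delta)$.

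Next I invoke Lemma~\ref{lemma:reg_ub_lb} to cancel the random quantities appearing above. The $p_1=2$ term satisfies $\sum_t r_t(w_t; c_1, 2, \alpha_1)\ge c_1\sqrt{\sum_t w_t^2} - \alpha_1$ via the $\sqrt{a+b}\le \sqrt{a}+b/\sqrt{a+b}$ telescope sketched in the paper, so with $c_1=\Theta(\sigma\sqrt{\log(1/\delta)})$ it kills the $\sqrt{\sum w_t^2}$ noise contribution. The $p_2=\log T$ term exploits $\|v\|_{\log T}\asymp \|v\|_\infty$ to give $\sum_t r_t(w_t; c_2,\log T,\alpha_2)\ge \Omega(c_2 \max_t |w_t|) - \alpha_2$, so with $c_2=\Theta(b\log(1/\delta))$ it absorbs the $b\max_t|w_t|\log(1/\delta)$ noise. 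The upper-bound half of the same lemma then gives $\sum_t r_t(u;c_1,2,\alpha_1)\le \tilde O(|u|c_1\sqrt{T}) = \tilde O(|u|\sigma\sqrt{T\log(1/\delta)})$ and $\sum_t r_t(u;c_2,\log T,\alpha_2)\le \tilde O(|u|c_2) = \tilde O(|u|b\log(1/\delta))$, both within the target regret. Finally, Theorem~\ref{thm:base_algo} applied with $|\nabla\psi_t(w_t)|\le H$ and the bound $\sum_t g_t^2\le Tb^2$ (or sharpened via Bernstein on $g_t^2$ to $T\sigma^2 + \tilde O(b\sigma\sqrt{T}+b^2)$) yields $R_T^{\cA}(u)\le O(\epsilon + |u|\sigma\sqrt{T} + |u|(b+H)\log T)$; summing all pieces and folding $\log T$, $\log(1/\delta)$, $\log\log T$ factors into $\tilde O$ produces the stated bound, with the choices $\alpha_1=\epsilon/c_1$ and $\alpha_2=\epsilon\sigma/(4b(b+H))$ ensuring the $\alpha_j$ residuals contribute at most $O(\epsilon)$.

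The principal obstacle is the $w_t$-dependent noise sum: a direct Freedman bound fails because both the conditional variance and the sub-exponential scale parameter depend on $w_t$, which in a parameter-free algorithm can grow doubly exponentially in $t$. Resolving this requires a scale-uniform sub-exponential concentration (the peeling argument), and the constants $c_1,c_2$ stated in the theorem are tuned precisely to absorb the $\log(1/\delta)$ from the concentration and the $\log\log$ overhead from peeling, while still allowing Lemma~\ref{lemma:reg_ub_lb} to furnish $\tilde O(|u|\sqrt{T})$-sized upper bounds at the comparator. The remaining work is bookkeeping: a union bound so that the two concentration events and Theorem~\ref{thm:base_algo} hold simultaneously, and careful tracking of the composite constants $H=c_1 p_1+c_2 p_2$ in the scalings $x_t=x_t'/(b+H)$, $y_t=y_t'/(H(b+H))$ used by Algorithm~\ref{alg:sub_exp_combine}.
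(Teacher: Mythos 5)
The overall plan — decompose the regret per equation~(\ref{eqn:motivation3}), use self-normalized sub-exponential concentration for the two noise sums, cancel the $w_t$-dependent part with the Huber regularizer via Lemma~\ref{lemma:reg_ub_lb}, control the composite regret via Theorem~\ref{thm:base_algo}, then sharpen $\sum g_t^2$ with a Bernstein bound and union-bound everything — is the same strategy the paper uses. However, there is a genuine gap in how you handle the constants $c_1,c_2$.

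You correctly observe that a peeling/mixture argument over scales is needed because the variance and scale parameters of the martingale $\sum_t \epsilon_t w_t$ are $w_t$-weighted, and you correctly note this produces iterated-log factors. But those iterated logs depend on the \emph{random} quantities $\sum_t w_t^2$ and $\max_t |w_t|$ — the concentration bound (Theorem~\ref{thm:mdsconcentration}) contains $\log\log(\sum_i \sigma_i^2 / \nu^2)$ and $\log\log(\max_i b_i/\nu)$. The regularizer can only cancel the $\sqrt{\sum w_t^2}$ and $\max_t |w_t|$ \emph{outside} those logs; the logs themselves must be absorbed into the deterministic choices of $c_1,c_2$, yet you want $c_1,c_2$ fixed in advance. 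As written your argument is circular: $c_1$ must dominate $\sigma\sqrt{\log\log(\text{random})}$ to make the cancellation go through, but the random quantity depends on iterates produced by an algorithm whose regularizer strength is itself set by $c_1$.

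The missing ingredient is a \emph{deterministic, $\cA$-independent} a priori bound on $|w_t|$. Because the base learners satisfy a constant-at-the-origin regret guarantee, Lemma~\ref{lemma:coin_betting_exp_grow} shows that \emph{any} such algorithm must output $|w_t| \le \tfrac{\epsilon}{k} 2^t$ with $k=b+H$ — a worst-case exponential blow-up, but a deterministic one. Plugging $\max_t |w_t| \le \tfrac{\epsilon}{k}2^T$ and $\sum_t w_t^2 \le \tfrac{\epsilon^2}{k^2}\tfrac{4(4^T-1)}{3}$ into the iterated logarithms collapses the random $\log\log$ into $\log(2^{T+1})$ and $\log(1+\tfrac{b}{\sigma}2^{T+2})$ — exactly the expressions sitting inside $c_1$ and $c_2$ in the theorem statement. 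This is the step that makes the choice of $c_1,c_2$ well-defined and non-circular, and it is what you need to add. (Minor point: the growth is singly exponential, $|w_t| \sim 2^t$, not doubly exponential; the argument is only salvageable because this exponential growth is then hit by a \emph{double} logarithm, landing at $O(\log T)$.)
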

% Although the presence of $|g_t|^2$ in the RHS of the above bound may seem concerning at first, notice that by Bernstein bounds, with high probability, $\sum_{t=1}^T |g_t|^2 \le \tilde O(\sigma^2 T + b)$ so that Theorem~\ref{thm:main_sub_exp} does provide an overall high-probability result.

Note that this result is \emph{already} of interest: prior work on parameter-free algorithms with sub-exponential noise only achieve in-expectation rather than high probability results. Of course, there is a caveat: our bound requires that $|g_t|$ be uniformly bounded by $b$. Even though $b$ could be as large as $\sqrt{T}$, this is still a mild restriction. In the next section, we remove both this restriction as well as the light tail assumption all together.

\section{Heavy tails via Truncation}\label{sec:heavy-tailed}
In this section, we aim to give a high probability bound for heavy-tailed stochastic gradients $\bg_t$. Our approach builds on Section~\ref{sec:sub_exp} by incorporating gradient clipping with a clipping parameter $\tau \in \R^{+}$. 
\begin{align*}
    \hat{\bg}_t = \frac{ \bg_t}{\|\bg_t\|} \min(\tau, \|\bg_t\|)
\end{align*}
We continue to consider a 1-dimensional problem in this section, replacing the norm $\| \cdot \|$ with absolute value $|\cdot|$ and $\bg_t$ with $g_t$. The key insight is that the clipped $\hat{g}_t$ satisfies $\E[\hat{g}_t^2] \le 2^{\power-1}\tau^{2-\power}(\sigma^\power + G^\power)$ and of course $|\hat g_t|\le \tau$. Hence, a high probability bound could be obtained by feeding $\hat{g}_t$ into Algorithm~\ref{alg:sub_exp_combine} from Section~\ref{sec:sub_exp}. Let us formally quantify the effect of this clipping:
\begin{align*}
   R_T(u) & \leq \sum_{t=1}^{T} \langle \nabla \ell_t(w_t), w_t - u \rangle = \underbrace{\sum_{t=1}^{T} \langle \nabla \ell_t(w_t)- \E[\hat{g}_t], w_t - u \rangle}_{\text{bias}} + \underbrace{\sum_{t=1}^{T} \langle \E[\hat{g}_t], w_t - u \rangle}_{\text{Section~\ref{sec:sub_exp}}} \numberthis \label{eqn:motivation_heavy_1}
\end{align*}
Without clipping, we would have $\E[\hat{g}_t] = \nabla \ell_t(w_t)$, and so if we were satisfied with an in-expectation result, the first sum above would vanish. However, with clipping, the first sum actually represents some ``bias'' that must be controlled even to obtain an in-expectation result, let alone high probability. We control this bias using a cancellation-by-regularization strategy analogous at a high level to the one developed in Section~\ref{sec:sub_exp}, although technically quite distinct. After dealing with the bias, we must handle the second sum. Fortunately, since $\hat g_t$ is sub-exponential, bounding the second sum in high probability is precisely the problem solved in Section~\ref{sec:sub_exp}. We introduce the analysis in two elementary steps. For the purpose of bias cancellation, we define a linearized loss $\tilde{\ell}_t(w)$ with 
regularization function $\phi(w)$ 
\begin{align*}
    &\tilde{\ell}_t(w) = \langle \E[\hat{g}_t], w \rangle + \phi(w),  & \phi(w) =  2^{\power-1}(\sigma^{\power} + G^{\power}) |w| /\tau^{\power-1} \numberthis \label{eqn:lin_loss_heavy_tail} 
\end{align*}
the regret in equation (\ref{eqn:motivation_heavy_1}) can be re-written as
\begin{align*}
    & = \underbrace{ \sum_{t=1}^{T} \left(\langle \nabla \ell_t(w_t) - \E [ \hat{g}_t], w_t - u \rangle - \phi(w_t) + \phi(u)\right)}_{\text{bias cancellation}} + \underbrace{\sum_{t=1}^{T} \tilde{\ell}_t(w_t) - \tilde{\ell}_t(u)}_{\text{Section~\ref{sec:sub_exp}}} \numberthis \label{eqn:motivation_heavy_2}
\end{align*}
We will be able to show that the $w_t$-dependent terms of the first summation sum to a negative number and so can be dropped. This leaves only the $u$-dependent terms, which for appropriate choice of $\tau$ will be $\tilde O(|u|T^{1/\power})$.

Note that at this point, if we were satisfied with an \emph{in expectation} bound for heavy-tailed subgradient estimates (which would already be an interesting new result), we would not require the techniques of Section~\ref{sec:sub_exp}: we could instead define $\hat \ell_t(w) = \langle \hat g_t,w\rangle + \psi(w)$, so that the last sum is equal to $\sum_{t=1}^T \hat \ell_t(w_t) - \hat \ell_t(u)$ in expectation. Then, since $|\hat \ell_t(w_t)|\le O(\tau)$ with probability 1, we can control $\sum_{t=1}^T \hat \ell_t(w_t) - \hat \ell_t(u)$ using a parameter-free algorithm obtaining regret $\tilde O(|u|\sqrt{\sum_{t=1}^T |\nabla \hat \ell_t(w_t)|^2} + \tau |u|)$ to bound the total expected regret, yielding a simple way to recover prior work on expected regret with sub-exponential subgradients (up to logs), while extending the results to heavy-tailed subgradients.

However, since we \emph{do} aim for a high probability bound, we need to be more careful with the second summation. Fortunately, given that $\hat{g}_t$ is sub-exponential and bounded, and $\nabla \phi(w_t)$ is deterministic, we can supply $\hat{g}_t + \nabla \phi (w_t)$ to Algorithm~\ref{alg:sub_exp_combine} and then bound the sum in high probability by Theorem~\ref{thm:main_sub_exp}. We formalize the procedure as Algorithm~\ref{alg:heavy_tail_algo}, and its guarantee is stated in Theorem~\ref{thm:main_heavy}. The exact regret guarantee (including constants) can be found in Appendix \ref{sec:app_clipping}.

\begin{algorithm}[H]
\caption{Gradient clipping for $(\sigma, G)-$Heavy tailed gradients}\label{alg:heavy_tail_algo}
\begin{algorithmic}[1]
\Require $\E[g_t] = \nabla \ell_t(w_t)$, $|\E[g_t] | \leq G$, $\E[|g_t - \E[g_t] |^{\power}] \leq \sigma^{\power}$ for some $\power \in (1,2]$, Time horizon $T$, gradient clipping parameter $\tau$.%, algorithm $\mathcal{A}$ that outputs $w_t$
\State Initialize Algorithm~\ref{alg:sub_exp_combine} using the parameters of Theorem~\ref{thm:main_sub_exp}.
\For{$t=1$ to $T$}
    \State Receive $w_{t}$ from Algorithm~\ref{alg:sub_exp_combine}.%$\mathcal{A}$ \Comment{ $\mathcal{A}$ will be Algorithm \ref{alg:sub_exp_combine}}
    \State Suffer loss $\ell_t(w_t)$, receive $g_t$
    \State Truncate $\hat{g}_t = \frac{g_t}{|g_t|} \min(\tau, |g_t|)$.
    \State Compute $\tilde g_t = \hat g_t + \nabla \phi_t(w_t)$ \Comment{$\phi(w)$ is defined in (\ref{eqn:lin_loss_heavy_tail}), $\E[\tilde g_t]\in \partial \tilde{\ell}_t(w_t) $.}
    \State Send $\tilde{g}_t$ to Algorithm~\ref{alg:sub_exp_combine} as $t^{th}$ subgradient.%$\cA$ as the $t^{th}$ subgradient. \Comment{$\E[\tilde g_t]\in \partial \tilde{\ell}_t(w_t) $, with $\phi(w)$ as in (\ref{eqn:lin_loss_heavy_tail}).}
\EndFor
\end{algorithmic}
\end{algorithm}

\begin{restatable}{Theorem}{mainheavy}
\label{thm:main_heavy}
    Suppose $\{g_t\}$ are heavy-tailed stochastic gradient such that $\E[g_t]\in \partial \ell_t(w_t)$, $|\E[g_t]| \leq G$, $\E[|g_t - \E[g_t]|^\power] \leq \sigma^{\power}$ for some $\power \in (1,2]$.
    % Run Algorithm \ref{alg:heavy_tail_algo} on $g_t$ and 
    If we set $\tau = T^{1/\power} (\sigma^{\power}+G^{\power})^{1/\power}$ %, let $b = G+ 3\tau, \sigma' = \sqrt{2}\tau $. Define the following constants for some $0 < \delta \leq 1$, and $\epsilon > 0 $
%   \begin{align*}
%         \begin{matrix}
%             c_1 = 2 \tau \sqrt{ \frac{2}{T}\log\left(\frac{16}{\delta}\left[\log\left( 2^{T+1} \right)+2\right]^2\right)},  & c_2 =  16 \tau \log\left(\frac{112}{\delta}\left[\log\left( 1 + \sqrt{T} 2^{T+3/2}\right)+2\right]^2\right),\\
%          p_1 = 2, ~~~~~~~~~~~~ p_2 = \log T, &\alpha_{1} = \epsilon/c_1, ~~~~~~~ \alpha_{2} = (\sqrt{2}\epsilon) /(2\sqrt{T}(\tau+H)) \\
%         \end{matrix}
%     \end{align*}
%     where $H = c_1p_1+c_2p_2$. Let $N = 1 + \log \left(\frac{(\tau+H)^2 |u|^2 T^C }{\epsilon^2} + 1 \right)$.
then with probability at least $1-\delta$, Algorithm \ref{alg:heavy_tail_algo} guarantees:
\begin{align*}
    R_T(u) & \le  \tilde{O}\left[ \epsilon \log\frac{1}{\delta} + |u|T^{1/\power}(\sigma +G)\log\frac{T}{\delta}\log \frac{|u|T}{\epsilon}\right]
\end{align*}
\end{restatable}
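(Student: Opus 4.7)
The proof would follow the decomposition in (\ref{eqn:motivation_heavy_1})--(\ref{eqn:motivation_heavy_2}): by convexity, $R_T(u)\le \sum_t \langle \nabla \ell_t(w_t),\,w_t-u\rangle$, and by inserting $\pm\E[\hat g_t\mid \mathcal{F}_{t-1}]$ and $\pm\phi(\cdot)$ this splits into a deterministic bias-cancellation sum and a residual sum $\sum_t \tilde \ell_t(w_t)-\tilde \ell_t(u)$. The residual is exactly the regret of Algorithm~\ref{alg:sub_exp_combine} run on the light-tailed iterates $\tilde g_t = \hat g_t + \nabla \phi(w_t)$, which satisfy $\E[\tilde g_t\mid \mathcal{F}_{t-1}]\in \partial \tilde \ell_t(w_t)$ by construction, so Theorem~\ref{thm:main_sub_exp} will dispose of it once its preconditions are verified.

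\textbf{Step 1: bias cancellation.} The key deterministic fact is that clipping bias is controlled by the $\power$-th moment. Pointwise $|\hat g_t-g_t|=(|g_t|-\tau)_+\le |g_t|^{\power}/\tau^{\power-1}$, so
\begin{align*}
|\nabla \ell_t(w_t)-\E[\hat g_t\mid \mathcal{F}_{t-1}]|\le \E[|g_t|^\power\mid \mathcal{F}_{t-1}]/\tau^{\power-1}\le 2^{\power-1}(\sigma^\power+G^\power)/\tau^{\power-1},
\end{align*}
which is precisely the Lipschitz constant of $\phi$. Hence by Cauchy--Schwarz and $|w_t-u|\le |w_t|+|u|$ we get $\langle \nabla \ell_t(w_t)-\E[\hat g_t],\,w_t-u\rangle \le \phi(w_t)+\phi(u)$, so the bias-cancellation sum telescopes to
\begin{align*}
\sum_{t=1}^T \bigl(\langle \nabla \ell_t(w_t)-\E[\hat g_t],\,w_t-u\rangle - \phi(w_t)+\phi(u)\bigr) \le 2T\phi(u).
\end{align*}
Substituting $\tau=T^{1/\power}(\sigma^\power+G^\power)^{1/\power}$ gives $2T\phi(u) = O(|u|\,T^{1/\power}(\sigma+G))$.

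\textbf{Step 2: applying Theorem~\ref{thm:main_sub_exp}.} For the residual sum I would verify the preconditions on $\tilde g_t$. Since $\nabla \phi(w_t)$ is $\mathcal{F}_{t-1}$-measurable with $|\nabla \phi(w_t)|= 2^{\power-1}(\sigma^\power+G^\power)^{1/\power}T^{(1-\power)/\power}\le \tau$ for moderate $T$, and $|\hat g_t|\le \tau$, I get a uniform bound $|\tilde g_t|\le b:=2\tau=O(T^{1/\power}(\sigma+G))$. For the second moment, the standard inequality $\hat g_t^2\le \tau^{2-\power}|g_t|^\power$ yields $\E[\hat g_t^2\mid \mathcal{F}_{t-1}]\le 2^{\power-1}\tau^{2-\power}(\sigma^\power+G^\power)$, and combining with the bound on $|\nabla \phi|$ via $(a+b)^2\le 2a^2+2b^2$ gives $\E[\tilde g_t^2\mid \mathcal{F}_{t-1}]\le \sigma_{\mathrm{new}}^2 = O(T^{(2-\power)/\power}(\sigma+G)^2)$. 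Plugging these $b,\sigma_{\mathrm{new}}$ into the guarantee of Theorem~\ref{thm:main_sub_exp}, both $|u|b\log(1/\delta)$ and $|u|\sigma_{\mathrm{new}}\sqrt{T\log(1/\delta)}$ collapse to $\tilde O(|u|\,T^{1/\power}(\sigma+G)\log(1/\delta))$, using the identity $(2-\power)/(2\power)+1/2 = 1/\power$. Adding the bias-cancellation contribution from Step 1 yields the claimed bound.

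\textbf{Main obstacle.} The two pieces are conceptually cleanly separated and the choice $\tau^\power=T(\sigma^\power+G^\power)$ is exactly what balances the bias $T(\sigma^\power+G^\power)/\tau^{\power-1}$ against the sub-exponential regret $\sigma_{\mathrm{new}}\sqrt T$. The technical hurdle is not any single estimate but the bookkeeping of logarithmic factors: Theorem~\ref{thm:main_sub_exp} already carries a $\log(T/\delta)$ from its Bernstein constants ($c_1,c_2$ and the choice $p_2=\log T$) and a $\log(|u|T/\epsilon)$ from the parameter-free base learners $\mathcal{A}_1,\mathcal{A}_2$, and these must be propagated through the reduction while absorbing the bias term (which has no log factors) to produce the stated $\tilde O[\epsilon\log(1/\delta)+|u|\,T^{1/\power}(\sigma+G)\log(T/\delta)\log(|u|T/\epsilon)]$.
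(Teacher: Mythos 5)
Your proposal is correct and mirrors the paper's own argument: the same decomposition into a deterministic bias-cancellation term bounded by $2T\phi(u)$ (via Lemma~\ref{lemma:clip_grad_property}) and a residual handled by feeding $\tilde g_t = \hat g_t + \nabla\phi(w_t)$ into Algorithm~\ref{alg:sub_exp_combine} and invoking Theorem~\ref{thm:main_sub_exp} with $b = O(\tau)$ and $\sigma_{\mathrm{new}}^2 = O(\tau^{2-\power}(\sigma^\power+G^\power))$, followed by the balancing identity $(2-\power)/(2\power)+1/2 = 1/\power$. The only deviation is cosmetic: you set $b = 2\tau$ to account for $|\nabla\phi(w_t)|$ explicitly, which is in fact slightly more careful than the paper's $b=\tau$.
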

Theorem \ref{thm:main_heavy} suggests regret with heavy-tailed gradients $g_t$ has a $\power$ dependence of $\tilde{O} (T^{1/\power})$, which is optimal \citep{bubeck2013bandits, vural2022mirror}.

\section{Dimension-free Extension} \label{sec:dim_free}
    So far, we have only considered 1-dimensional problems. In this section, we demonstrate the extension to dimension-free, which is achieved by using a reduction from \cite{cutkosky2018black}. The original reduction extends a 1-dimensional algorithm to a dimension-free one by dissecting the problem into a ``magnitude'' and a ``direction'' learner. The direction learner is a constrained OLO algorithm $\mathcal{A}^{nd}$ which outputs a vector $\bv_t$ with $\|\bv_t\|\le 1$ in response to $\bg_1,\dots, \bg_{t-1}$, while the magnitude learner is an unconstrained OLO algorithm $\mathcal{A}^{1d}$ which outputs $x_t \in \R$ in response to $\langle \bg_1,\bv_1\rangle,\dots\langle \bg_{t-1},\bv_{t-1}\rangle$. The output of the entire algorithm is $\bw_t = x_t \bv_t$. Suppose $\cA_{1d}$ and $\cA_{nd}$ have regret guarantee of $R_T^{1d}(u)$ and $R_T^{nd}(\bu)$, respectively. Then regret of the dimension-free reduction is bounded by $R_T(\bu)\le \|\bu\|R^{{nd}}_T(\bu/\|\bu\|) + R^{{1d}}_T(\|\bu\|)$. Thus, in order to apply this reduction we need to exhibit a $\cA^{1d}$ and $\cA^{nd}$ that achieves low regret on heavy-tailed losses. For the magnitude learner $\cA^{1d}$, we use can use the 1d Algorithm~\ref{alg:heavy_tail_algo} that we just developed. The remaining question is how to develop a direction learner that can handle heavy-tailed subgradients. Fortunately, this is much easier since the direction learner is constrained to the unit ball.
    
     To build this direction learner, we again apply subgradient clipping, and feed the clipped subgradients to the standard FTRL algorithm with quadratic regularizer (i.e. ``lazy'' online gradient descent). This procedure is described in  Algorithm~\ref{alg:a_nd}. Note there is no regularization implemented  in Algorithm~\ref{alg:a_nd} although $\hat{\bg}_t$ induces bias. Since $\mathcal{A}^{nd}$ runs on the unit ball, careful tuning of $\tau$ is sufficient to control the bias - a concrete demonstration of how much more intricate the unconstrained case is! Finally, the full dimension-free reduction is displayed in Algorithm \ref{alg:heavy_tail_algo_nd} with its high probability guarantee stated in Theorem~\ref{thm:heavy_nd}. The details are presented in Appendix \ref{sec:app_highdim}.

\begin{algorithm}[H]
\caption{Unit Ball Gradient clipping with FTRL}\label{alg:a_nd}
\begin{algorithmic}[1]
\Require  time horizon $T$, gradient clipping parameter $\tau$, regularizer weight $\eta$
\State Set $\eta = 1/\tau$
\For{$t=1$ to $T$}
    \State Compute $\bv_t \in \argmin_{\bv: \|\bv\| \leq 1} \sum_{i=1}^{t-1} \langle \hat{\bg}_t, \bv \rangle + \frac{1}{2\eta} \|\bv\|^2$
    \State Output $\bv_t$, receive gradient $\bg_t$
    \State Set $\hat{\bg}_t = \frac{\bg_t}{\| \bg_t \|}\min (\tau, \|\bg_t \|)$% as the $t^{th}$ gradient
\EndFor
\end{algorithmic}
\end{algorithm}

\begin{algorithm}[H]
\caption{Dimension-free Gradient clipping for $(\sigma, G)$ Heavy-tailed gradients}\label{alg:heavy_tail_algo_nd}
\begin{algorithmic}[1]
\Require Subgradients $\power^{th}$ moment bound $\sigma^{\power}$, 
% \E[\bg_t] = \nabla \ell_t(\bw_t)$, $\|\E[\bg_t]\| \leq G$, $\E[\|\bg_t - \E[\bg]_t \|^{\power}] \leq \sigma^{\power}$,
time horizon $T$,  Set Algorithm \ref{alg:heavy_tail_algo}, \ref{alg:a_nd} as $\mathcal{A}^{1d}$, $\mathcal{A}^{nd}$.
% Clipping parameter $\tau_{nd}$%s $\tau_{1d}$ and $\tau_{nd}$ for $\mathcal{A}^{1d}$ and $\mathcal{A}^{nd}$, respectively.
\State Set $\sigma_{1d}=(\sigma^{\power} + 2G^{\power})^{1/\power}$ and $\tau_{1d}=T^{1/\power} (\sigma_{1d}^{\power} + G^{\power})^{1/\power}=T^{1/\power} (\sigma^{\power} + 3G^{\power})^{1/\power}$
\State Initialize $\cA^{1d}$ with parameters $\sigma\gets \sigma_{1d}$ and $\tau\gets\tau_{1d}$ 
\State Initialize $\cA^{nd}$ with parameters $\sigma\gets \sigma$ and $\tau\gets T^{1/\power} (\sigma^{\power} + G^{\power})^{1/\power}$.
\For{$t=1$ to $T$}
    \State Receive $x_{t} \in \R$ from $\mathcal{A}^{1d}$, 
    \State Receive $\bv_t \in \R^d, \|\bv_t\| \leq 1$ from $\mathcal{A}^{nd}$
    \State Play output $\bw_t = x_t \bv_t$
    \State Suffer loss $\ell_t(\bw_t)$, receive gradients $\bg_t$
    \State Send $g_t=\langle \bg_t, \bv_t \rangle$ as the $t^{th}$ gradient to $ \mathcal{A}^{1d}$
    \State Send $\bg_t$ as the $t^{th}$ gradient to $\mathcal{A}^{nd}$
\EndFor
\end{algorithmic}
\end{algorithm}
 
\begin{restatable}{Theorem}{heavynd}
\label{thm:heavy_nd}
Suppose that for all $t$, $\{\bg_t\}$ are heavy-tailed stochastic subgradients satisfying $\E[\bg_t]\in\partial  \ell_t(\bw_t)$, $\|\E[\bg_t ]\| \leq G$ and $\E[\| \bg_t - \E[\bg_t] \|^{\power}] \leq \sigma^{\power}$ for some $\power \in (1, 2]$.
% Set clipping parameter $\tau_{nd} = T^{1/\power} (\sigma^{\power} + G^{\power})^{1/\power}$ and regularization weight $\eta=\frac{1}{\tau_{nd}}$.
% Let $\sigma = (\hat{\sigma}^{\power} + 2G^{\power})^{1/\power}$, under same constants $ c_1, c_2, p_1, p_2, N$, regularization function $\psi_t(w)$ and $\phi(w)$ set in theorem \ref{thm:main_heavy}, set clipping parameters for $\mathcal{A}^{1d}$ and  $\mathcal{A}^{nd}$, respectively. 
% \begin{align*}
%     & \tau_{1d} = T^{1/\power} (\sigma^{\power} + G^{\power})^{1/\power} & \tau_{nd} = T^{1/\power} (\hat{\sigma}^{\power} + G^{\power})^{1/\power} 
% \end{align*}
% and regularizer weight $\eta$ for $\mathcal{A}^{nd}$ as $1/\tau_{nd}$. 
Then, with probability at least $1-\delta$, Algorithm \ref{alg:heavy_tail_algo_nd} guarantees
\begin{align*}
    R_T(\bu)=\sum_{t=1}^T \ell_t(\bw_t)-\ell_t(\bu) & \leq \tilde{O} \Bigg[ \epsilon \log \frac{1}{\delta} +\|\bu\| T^{1/\power}(\sigma + G) \log\frac{T}{\delta}\log\frac{\|\bu\|T}{\epsilon}\Bigg]
\end{align*}
\end{restatable}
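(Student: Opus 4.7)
My plan is to apply the magnitude/direction reduction of \cite{cutkosky2018black} directly, invoking Theorem~\ref{thm:main_heavy} for the magnitude learner $\cA^{1d}$ and performing an explicit FTRL-with-clipping analysis for the direction learner $\cA^{nd}$. Writing $\bw_t = x_t \bv_t$ and $\bu = \|\bu\|\bu^*$ with $\bu^* = \bu/\|\bu\|$ (the $\bu=0$ case being trivial), the convex-loss subgradient inequality gives
\begin{align*}
    R_T(\bu) \leq \sum_{t=1}^T \langle \nabla \ell_t(\bw_t), \bw_t - \bu\rangle = \sum_{t=1}^T (x_t - \|\bu\|) h_t + \|\bu\| \sum_{t=1}^T \langle \nabla \ell_t(\bw_t), \bv_t - \bu^*\rangle,
\end{align*}
where $h_t := \langle \nabla\ell_t(\bw_t), \bv_t\rangle$. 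The first sum is the regret of $\cA^{1d}$ on the induced convex losses $\tilde\ell_t(x) := \ell_t(x\bv_t)$ with noisy subgradients $g_t = \langle \bg_t, \bv_t\rangle$, and the second is $\|\bu\|$ times the regret of $\cA^{nd}$ on the unit ball against $\bu^*$.

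For the magnitude piece, since $\bv_t$ is $\mathcal{F}_{t-1}$-measurable with $\|\bv_t\|\le 1$, I verify that $\E[g_t\mid\mathcal{F}_{t-1}] = h_t \in \partial\tilde\ell_t(x_t)$, $|h_t|\le G$, and (by Jensen) $\E[|g_t - h_t|^\power\mid\mathcal{F}_{t-1}] \le \sigma^\power$. Applying Theorem~\ref{thm:main_heavy} with the parameters $(\sigma_{1d}, \tau_{1d})$ set in Algorithm~\ref{alg:heavy_tail_algo_nd} then yields $\sum_t(x_t - \|\bu\|)h_t \leq \tilde O[\epsilon\log(1/\delta) + \|\bu\|T^{1/\power}(\sigma+G)\log(T/\delta)\log(\|\bu\|T/\epsilon)]$.

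For the direction piece, I add and subtract $\hat\bg_t$ and its conditional expectation to split the regret as
\begin{align*}
\sum_t\langle\nabla\ell_t(\bw_t),\bv_t-\bu^*\rangle = \underbrace{\sum_t\langle\hat\bg_t,\bv_t-\bu^*\rangle}_{\text{FTRL regret}} + \underbrace{\sum_t\langle\E[\bg_t-\hat\bg_t\mid\mathcal{F}_{t-1}],\bv_t-\bu^*\rangle}_{\text{clipping bias}} + \underbrace{\sum_t\langle\E[\hat\bg_t\mid\mathcal{F}_{t-1}]-\hat\bg_t,\bv_t-\bu^*\rangle}_{\text{bounded MDS}}.
\end{align*}
Standard lazy-OGD with $\eta = 1/\tau$ bounds the FTRL term by $\tfrac{1}{2\eta} + \eta\sum_t\|\hat\bg_t\|^2$; the essential step is the inequality $\min(\tau,\|\bg_t\|)^2 \leq \tau^{2-\power}\|\bg_t\|^\power$, which together with the $\power^{\text{th}}$-moment assumption gives $\E[\|\hat\bg_t\|^2\mid\mathcal{F}_{t-1}] = O(\tau^{2-\power}(G^\power+\sigma^\power))$, and hence $\E\sum_t\|\hat\bg_t\|^2 = O(\tau^2)$ at the chosen $\tau$. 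Freedman/Bernstein concentration on $\sum_t\|\hat\bg_t\|^2$ upgrades this to $O(\tau^2\log(1/\delta))$ in high probability, yielding an FTRL regret of $O(\tau\log(1/\delta))$. The clipping bias is controlled deterministically by $2\sum_t\E\|\bg_t-\hat\bg_t\| \leq 2T(G^\power+\sigma^\power)/\tau^{\power-1} = O(\tau)$ via a Markov-style truncation, and the bounded MDS---with per-term magnitude $\leq 4\tau$ and conditional variance $O(\tau^{2-\power}(G^\power+\sigma^\power))$---gives $O(\tau\log(1/\delta))$ by Freedman. Summing the three yields $R_T^{nd}(\bu^*) = O(T^{1/\power}(\sigma+G)\log(1/\delta))$.

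Combining the magnitude and direction contributions and union-bounding over the two failure events delivers the claimed rate. The main obstacle is the high-probability control of the FTRL term for the direction learner: the naive bound $\|\hat\bg_t\|\leq \tau$ would give $\sum_t\|\hat\bg_t\|^2\leq T\tau^2$ and destroy the $T^{1/\power}$ rate, so leveraging the $\power^{\text{th}}$-moment control of $\|\hat\bg_t\|^2$ via a Bernstein inequality---rather than treating $\hat\bg_t$ as merely $\tau$-bounded---is exactly the step that keeps the rate intact. Everything else amounts to bookkeeping on top of Theorem~\ref{thm:main_heavy} and the \cite{cutkosky2018black} reduction.
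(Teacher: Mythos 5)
Your proposal matches the paper's proof essentially step for step: the same magnitude/direction reduction from \cite{cutkosky2018black}, the same verification that the projected scalar gradients $g_t=\langle\bg_t,\bv_t\rangle$ satisfy the hypotheses of Theorem~\ref{thm:main_heavy}, and the same three-way decomposition of the direction regret into FTRL regret, clipping bias, and a bounded MDS term, with $\sum_t\|\hat\bg_t\|^2$ controlled by a Freedman-type bound (the paper's Lemma~\ref{lemma:truncated_grad_concentration} and Theorem~\ref{thm:vectorconcentration} play the roles you assign to Freedman/Bernstein). The only cosmetic deviation is your tighter Cauchy--Schwarz bound $\E[|g_t-h_t|^\power\mid\mathcal{F}_{t-1}]\le\sigma^\power$ versus the paper's looser $\sigma^\power+2G^\power$, which in any case is subsumed by the algorithm's choice $\sigma_{1d}=(\sigma^\power+2G^\power)^{1/\power}$.
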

\textbf{Complexity Analysis}: Algorithm~\ref{alg:heavy_tail_algo_nd} requires $O(d)$ space. It also requires $O(d)$ time for all operations except solving the fixed-point equation in Algorithm~\ref{alg:sub_exp_combine} (line 5). This can be solved via binary search to arbitrary precision $\epsilon_0$ for an overall complexity of $O(d+\log(1/\epsilon_0))$. This is essentially $O(d)$ in practice since we should expect $\log(1/\epsilon_0)\le 64$.
% Overall, for a $d$-dimensional problem solved to precision $\epsilon_0$, Algorithm \ref{alg:heavy_tail_algo_nd} requires order $O( d + \log_2 \frac{1}{\epsilon_0})$ arithmetic operations per step $t$. The computation comes from calling $\cA^{nd}, \cA^{1d}$ and computing $\bw_t \in \mathbb{R}^d$. $\cA^{nd}$ requires $O(d)$ to produce $\bv_t$. $\cA^{1d}$ requires $O(\log_2 \frac{1}{\epsilon_0})$ which comes from Algorithm \ref{alg:sub_exp_combine} line 5, where the solution can be solved with bisection methods as discussed in Lemma \ref{lemma:existence_sol}.} 

\section{Conclusion}\label{sec:conclusion}

We have presented a framework for building parameter-free algorithms that achieve high probability regret bounds for heavy-tailed subgradient estimates. This improves upon prior work in several ways: high probability bounds were previously unavailable even for the restricted setting of \emph{bounded} subgradient estimates, while even in-expectation bounds were previously unavailable for heavy-tailed subgradients.  Our development required two new techniques: first, we described a regularization scheme that effectively ``cancels''  potentially problematic iterate-dependent  variance terms arising in standard martingale concentration arguments. This allows for high probability bounds with bounded sub-exponential estimates, and we hope may be of use in other scenarios where the iterates appear in variance calculations. The second combines clipping with another new regularization scheme that  ``cancels'' another problematic iterate-dependent \emph{bias} term. On its own, this technique actually can be used to recover in-expectation bounds for heavy-tailed estimates. 

\textbf{Limitations: } Our algorithm has several limitations that suggest open questions: first, our two regularization schemes each introduce potentially suboptimal logarithmic factors. The first one introduces a higher logarithmic dependence on $T$, while the second introduces a higher logarithmic dependence on $\|\bu\|$ because the optimal clipping parameter $\tau$ depends on $\log(\|\bu\|)$. Beyond this, our algorithms require knowledge  of the parameters $\sigma$ and $\tau$. Adapting to an unknown value of even one of these parameters remains a challenging problem.

\bibliographystyle{abbrvnat}
\bibliography{references}

\newpage

\appendix

\section{Optimistic Online Learning for Predictable Regularizer}\label{sec:app_optmistic}
Algorithm \ref{alg:sub_exp_combine} provides output $w_t$ by solving $w_t= x_t - y_t \nabla \psi_t (w_t)$, where $x_t \in \mathbb{R}, y_t \ge 0$ are output from sub-algorithms $\cA_1$ and $\cA_2$, $\psi_t(w)$ is defined in equation (\ref{eqn:reg_with_number}). Under the constants for $\psi_t(w)$ defined in Theorem \ref{thm:main_sub_exp}, the following Lemma shows the existence of solution. 
% \textcolor{blue}{In addition, since $f(w) = w-x_t +y_t \nabla \psi_t(w)$ under the assumption of Lemma \ref{lemma:existence_sol} is monotonic in $w$, $w_t$ can be solved by bisection method efficiently in $O(\log_2 \frac{1}{\epsilon_0})$ for any desired precision $\epsilon_0$.}
\begin{Lemma}[Existence of Solution]
\label{lemma:existence_sol}
for $x_t \in \mathbb{R}, y_t \geq 0$,
\begin{align*}
    w_t = x_t - y_t \nabla \psi_t(w_t)
\end{align*}
where
\begin{align*}
    \nabla \psi_t (w) = \sign(w) \sum_{j=1 }^{2} k_j p_j \frac{|w|^{p_j-1}}{\left( |w|^{p_j}  + X_j\right)^{1-1/p_j}}
\end{align*}
for some $ k_j, X_j > 0, p_j > 1$ and $j = 1, 2$. Then $w_t$ lies in the interval of $\bigg(x_t - y_t \sum_{j=1}^2 k_j p_j, x_t \bigg] $ when $x_t \ge 0$, and in the interval of $\bigg[x_t,  x_t + y_t \sum_{j=1}^2 k_j p_j\bigg).$ Further,
\begin{align*}
    h(w) = {w-x_t} + y_t \nabla \psi_t(w)
\end{align*}
is monotonic in $w$.
\end{Lemma}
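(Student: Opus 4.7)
The plan is to rewrite the equation $w_t = x_t - y_t \nabla \psi_t(w_t)$ as a root-finding problem for the auxiliary function $h(w) = w - x_t + y_t \nabla \psi_t(w)$, establish that $h$ is continuous and strictly increasing on all of $\mathbb{R}$, and then pin down the root via the intermediate value theorem together with a uniform bound on $|\nabla \psi_t|$. Monotonicity will give both uniqueness and the last claim of the lemma, and the IVT with endpoint signs will give existence inside the stated interval.

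\textbf{Step 1: Regularity and strict monotonicity of $\nabla \psi_t$.} For each $j$, write $F_j(w) = w^{p_j-1}/(w^{p_j}+X_j)^{1-1/p_j}$ for $w\ge 0$. The key algebraic identity
\[
F_j(w) \;=\; \left(1-\frac{X_j}{w^{p_j}+X_j}\right)^{(p_j-1)/p_j}
\]
makes it manifest that $F_j : [0,\infty)\to[0,1)$ is continuous, strictly increasing, satisfies $F_j(0)=0$, and obeys the strict uniform bound $F_j(w)<1$ for all $w$ (since $X_j>0$). Because $p_j>1$, we also have $F_j(w)\to 0$ as $w\to 0^+$, so the odd extension $\nabla\psi_t(w)=\sign(w)\sum_j k_jp_j F_j(|w|)$ is continuous at $0$ and strictly increasing on all of $\mathbb{R}$. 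I would avoid differentiating $\nabla\psi_t$ at $0$, where the slope may blow up for $p_j\in(1,2)$; working directly with continuity plus strict monotonicity is cleaner.

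\textbf{Step 2: Strict monotonicity of $h$ and the key uniform bound.} Since $y_t\ge 0$, $h(w) = w - x_t + y_t\nabla\psi_t(w)$ is the sum of a strictly increasing identity term and a non-decreasing term, so $h$ is strictly increasing and continuous on $\mathbb{R}$. This immediately gives the monotonicity statement of the lemma, and also guarantees uniqueness of any root. From Step 1, I have the strict bound $|\nabla\psi_t(w)| < \sum_{j=1}^{2} k_jp_j$ for every $w\in\mathbb{R}$, which is exactly the quantity appearing in the claimed interval endpoints.

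\textbf{Step 3: IVT at the endpoints.} Assume $x_t\ge 0$ (the case $x_t<0$ is identical by odd symmetry). At the right endpoint, $h(x_t) = y_t\nabla\psi_t(x_t) \ge 0$ because $\nabla\psi_t$ has the same sign as its argument and $y_t\ge 0$. At the left endpoint $w^\star := x_t - y_t\sum_{j=1}^2 k_jp_j$,
\[
h(w^\star) \;=\; -y_t\sum_{j=1}^2 k_jp_j \;+\; y_t\nabla\psi_t(w^\star) \;<\; 0
\]
whenever $y_t>0$, using the strict uniform bound from Step 2 (if $y_t=0$, then $w^\star=x_t$ and $w_t=x_t$ trivially). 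Continuity and strict monotonicity of $h$ then yield a unique root $w_t\in (w^\star, x_t]$, establishing the claimed containment. The case $x_t<0$ uses $h(x_t)\le 0$ and $h(x_t+y_t\sum_j k_jp_j)>0$ to land $w_t$ in $[x_t,\, x_t+y_t\sum_j k_jp_j)$.

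\textbf{Main obstacle.} There is no deep difficulty; the only technical point worth care is handling the borderline differentiability of $\nabla\psi_t$ at $w=0$ when $p_j\in(1,2)$. Sidestepping this via the direct representation $F_j(w)=(1-X_j/(w^{p_j}+X_j))^{(p_j-1)/p_j}$ lets me argue continuity and strict monotonicity of $\nabla\psi_t$ without touching derivatives, which is what keeps the proof short and uniform across the range $p_j>1$.
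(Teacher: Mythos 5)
Your proposal is correct and reaches the same conclusion by the same high-level mechanism (continuity plus a sign check at the interval endpoints, via the intermediate value theorem), but it handles the two technical points more cleanly than the paper does. The paper's proof establishes existence by defining $f(w)=(x_t-w)/y_t$ and $g(w)=\sign(w)\sum_j k_jp_jF_j(|w|)$ and checking the sign of $f-g$ at $w=x_t$ and $w=x_t-y_t\sum_jk_jp_j$; it never states the uniform bound $F_j(w)<1$ explicitly, though it is implicitly used in the second endpoint evaluation. Your rewriting $F_j(w)=\bigl(1-X_j/(w^{p_j}+X_j)\bigr)^{(p_j-1)/p_j}$ makes that bound and the strict monotonicity of $F_j$ immediate, which is a genuine improvement in transparency. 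More substantively, for monotonicity of $h$ the paper simply asserts ``by inspection the derivative of $h(w)$ is always positive,'' which is literally false at $w=0$ when some $p_j\in(1,2)$ (there $\nabla\psi_t$ behaves like $\sign(w)|w|^{p_j-1}$ and has infinite one-sided slope at the origin); your route — arguing continuity and strict monotonicity of $\nabla\psi_t$ directly from the representation of $F_j$, without differentiating at $0$ — closes that small gap and covers the full range $p_j>1$ claimed in the lemma statement. Both arguments suffice for the paper's actual use case ($p_1=2$, $p_2=\log T$), but yours is the more careful general proof.
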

\begin{proof}
We suppose that $x_t\ge 0$. The case $x_t<0$ is entirely identical.

\textbf{case (a): }
consider $y_t \neq 0$,
\begin{align*}
    w_t &= x_t - y_t \sign(w_t) \sum_{j=1}^{2}k_j p_j \frac{|w_t|^{p_j-1} }{\left( |w_t|^{p_j} + X_j \right)^{(p_j-1)/p_j} }\\
    \intertext{rearrange}
    \frac{x_t - w_t}{y_t } & = \sign(w_t)\sum_{j=1}^{2} k_j p_j\frac{ |w_t|^{p_j-1} }{\left( |w_t|^{p_j} + X_j \right)^{(p_j-1)/p_j} }
\end{align*}
Let $f(w_t), g(w_t)$ to be the left and right handside of the last expression. Both functions are continuous in $w_t$ for under assumption of $x_t, y_t, k_j, p_j, X_j$ for $j=1$ and $2$. When $w_t^{\ast} = x_t$:
\begin{align*}
    f(w_t^{\ast}) - g(w_t^{\ast}) = 0 - \sum_{j=1}^{2} k_j p_j\frac{ |w_t^{\ast}|^{p_j-1} }{\left( |w_t^{\ast}|^{p_j} + X_j \right)^{(p_j-1)/p_j} } \leq 0
\end{align*}
When $w_t^{\ast} = x_t - y_t \sum_{j=1}^2 k_j p_j$:
\begin{align*}
    f(w_t^{\ast}) - g(w_t^{\ast}) = \sum_{j=1}^2 k_j p_j -  \sign\left(x_t - y_t \sum_{j=1}^2 k_j p_j \right)\sum_{j=1}^{2} k_j p_j\frac{ |w_t^{\ast}|^{p_j-1} }{\left( |w_t^{\ast}|^{p_j} + X_j \right)^{(p_j-1)/p_j} } >  0
\end{align*}
By intermediate value Theorem $f(w_t) = g(w_t)$ at $w_t$ in between $x_t $ and $x_t - y_t\sum_{j=1}^2 k_j p_j$.\\
\textbf{case (b): } when $y_t = 0$, $w_t = x_t$.

Finally, by inspection the derivative of $h(w) $ with respective to $w$ is always positive, hence is monotonic in $w$ so that we can numerically solve for $h(w_t^{\ast}) = 0$ via binary search.
\end{proof}
\begin{comment}
\begin{Lemma}[Monotonicity]
\label{lemma:monotonicity}
Under the assumption of Lemma \ref{lemma:existence_sol},
\begin{align*}
    f(w) = w - x_t + y_t \nabla \psi_t(w)
\end{align*}
is monotonically increasing in $w$.
\end{Lemma}
\begin{proof} To proof the Lemma, we show $\frac{\partial f(w)}{\partial w} = 1 + y_t \nabla^2 \psi_t(w)$ is always positive. It is suffice to show $\nabla r(w)$ is positive, where $r(w)$ is defined as: 
\begin{align*}
    r(w) = k p \sign(w) \frac{ |w|^{p-1}}{(|w|^p +X)^{1-1/p}} 
\end{align*}
for some $k, X > 0$ and $p \in (1,2]$.
\begin{align*}
    \nabla r(w) & = \frac{k p \sign(w) }{(|w|^p +X)^{1-1/p} } \sign(w)(p-1) |w|^{p-2} \\
    & \quad + kp\sign(w)|w|^{p-1} (\frac{1}{p}-1) (|w|^p +X)^{1/p-2} p \sign(w) |w|^{p-1}\\
    & = \frac{kp(p-1)|w|^{p-2}}{(|w|^p +X)^{2-1/p}} (|w|^p + X - |w|^p)\\
    & \geq 0
\end{align*}
\end{proof}
\end{comment}
Algorithm \ref{alg:sub_exp_combine} requires the base algorithms $\cA_1$ and $\cA_2$ to satisfy a ``second-order'' regret bound, such as provided by Algorithm 1 of \cite{cutkosky2018black}. We assume the base algorithms are designed to handle only $1-$Lipschitz losses, so the following Lemma provide a simple linear transformation that allows the base algorithm to cope with any Lipschitz constant.
\begin{Lemma}[Algorithm Transformation]
\label{lemma:regret_transform}
Suppose an algorithm $\cA$ obtains regret $\sum_{t=1}^T \langle g_t, w_t-u\rangle \le \epsilon  + R_T(u)$ for some function $R_T$ for any sequence $\{g_t\}$ such that $|g_t|\le G$. Then, given some $\bar\epsilon >0$,  consider the algorithm that plays $\bar w_t = \frac{\bar \epsilon G}{\epsilon \bar G} w_t$ in response to  subgradients $\{\bar g_t\}$ with $|\bar g_t|\le \bar G$, where $w_t$ is the output of $\cA$ on the sequence $\{g_t\}$ with $g_t = \frac{G}{\bar G} \bar g_t$. This procedure ensures regret:
\begin{align*}
    \sum_{t=1}^T  \langle \bar g_t, \bar w_t - u\rangle \le \bar \epsilon + \frac{\bar\epsilon}{\epsilon}R_T\left( \frac{\epsilon \bar G}{\bar\epsilon G} u\right)
\end{align*}
% \begin{align*}
%      \bar{\epsilon} + \frac{\bar{\epsilon}}{\epsilon } R_T(\frac{u \epsilon G'}{ \bar{\epsilon} G}) 
% \end{align*}
% where $\bar{\epsilon} = \epsilon' \frac{G'}{G}$
\end{Lemma}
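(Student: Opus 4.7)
The plan is a direct scaling argument: the hypothesis tells us $\cA$ has a regret bound for any $1$-Lipschitz\ldots well, for any $G$-bounded gradient sequence; to convert this into a bound for the scaled sequence $\{\bar g_t\}$ we simply verify the Lipschitz hypothesis and then unpack the definitions, choosing a rescaled comparator $u'$ inside the inner regret guarantee.

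First I would check that the sequence $\{g_t\}$ fed to $\cA$ is admissible: by construction $g_t=\tfrac{G}{\bar G}\bar g_t$, so $|g_t|\le\tfrac{G}{\bar G}\bar G = G$, and hence the hypothesis on $\cA$ applies, yielding for every $u'$ in $\cA$'s domain
\[
\sum_{t=1}^T \langle g_t, w_t - u'\rangle \le \epsilon + R_T(u').
\]
Second, I would rewrite the target quantity in terms of the $g_t$'s and $w_t$'s. Using $\bar g_t = \tfrac{\bar G}{G} g_t$ and $\bar w_t = \tfrac{\bar\epsilon G}{\epsilon\bar G} w_t$,
\[
\sum_{t=1}^T \langle \bar g_t, \bar w_t - u\rangle
= \frac{\bar G}{G}\sum_{t=1}^T \langle g_t, \bar w_t - u\rangle
= \frac{\bar G}{G}\cdot\frac{\bar\epsilon G}{\epsilon\bar G}\sum_{t=1}^T \Bigl\langle g_t,\, w_t - \frac{\epsilon\bar G}{\bar\epsilon G} u\Bigr\rangle
= \frac{\bar\epsilon}{\epsilon}\sum_{t=1}^T \Bigl\langle g_t,\, w_t - \frac{\epsilon\bar G}{\bar\epsilon G} u\Bigr\rangle.
\]

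Third, I would instantiate the inner regret bound at the comparator $u' = \tfrac{\epsilon\bar G}{\bar\epsilon G} u$, obtaining
\[
\sum_{t=1}^T \Bigl\langle g_t, w_t - \tfrac{\epsilon\bar G}{\bar\epsilon G} u\Bigr\rangle \le \epsilon + R_T\!\left(\tfrac{\epsilon\bar G}{\bar\epsilon G} u\right),
\]
and then multiply through by $\bar\epsilon/\epsilon$; the $\epsilon$ term cleanly becomes $\bar\epsilon$ and the second term yields exactly $\tfrac{\bar\epsilon}{\epsilon} R_T\!\left(\tfrac{\epsilon\bar G}{\bar\epsilon G} u\right)$, matching the claimed bound.

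There really is no substantive obstacle here; the lemma is a change-of-variables identity and the only subtlety is to make sure the chosen $u'$ is a valid comparator for $\cA$ (it is, since $\cA$'s domain is the full space in the applications of interest, and the transformation only scales $u$ by a positive factor). I would simply mention this domain-compatibility check explicitly when the lemma is invoked later.
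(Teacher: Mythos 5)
Your proof is correct and follows exactly the same scaling/change-of-variables argument as the paper, including the verification that $|g_t|\le G$ so the inner regret bound applies and the instantiation of the comparator at $u'=\tfrac{\epsilon\bar G}{\bar\epsilon G}u$. No gaps.
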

\begin{proof}
Since $|g_t|\le G$ by construction, we have:
\begin{align*}
    \sum_{t=1}^T \langle \bar g_t, \bar w_t - u\rangle &=\frac{\bar G}{G}\sum_{t=1}^T \left\langle g_t, \frac{\bar \epsilon G}{\epsilon \bar G} w_t -u\right\rangle \\
    &=\frac{ \bar \epsilon}{\epsilon}\sum_{t=1}^T \left\langle g_t, w_t - \frac{\epsilon \bar G}{\bar\epsilon G} u\right\rangle\\
    &\le \bar \epsilon + \frac{\bar\epsilon}{\epsilon}R_T\left( \frac{\epsilon \bar G}{\bar\epsilon G} u\right)
\end{align*}
% Let  $g'_t = \frac{G'}{G} g_t$ so $|g'_t| \leq G'$, and let $w_t$ to $w'_t = \frac{\epsilon'}{\epsilon} w_t$, so the regret at the origin is less than $\epsilon'$.
% \begin{align*}
%     \sum_{t=1}^{T} \langle g'_t, w'_t - u \rangle & =  \sum_{t=1}^{T} \langle \frac{G'}{G} g_t, \frac{\epsilon'}{\epsilon} w_t - u \rangle\\
%     \intertext{let $u' = \frac{\epsilon}{\epsilon'} u $}
%     & =  \frac{G'}{G} \frac{\epsilon'}{\epsilon} \sum_{t=1}^{T} \langle g_t,  w_t - u' \rangle \leq \frac{G' \epsilon'}{G} + \frac{G'}{G} \frac{\epsilon'}{\epsilon} R_T(\frac{\epsilon}{\epsilon'} u )
% \end{align*}
% Let $\bar{\epsilon} = \epsilon' \frac{G'}{G}$, we complete the proof.
\end{proof}
Intuitively, if we instantiate this Lemma with an algorithm obtaining $R_T(u)=\epsilon + |u|\sqrt{T \log(|u|T/\epsilon)} +|u|\log(|u|T/\epsilon)$ for 1-Lipschitz losses, we can obtain for any $\epsilon$, an algorithm for $G$-Lipschitz losses with regret $\epsilon + |u|G\sqrt{T \log(|u|GT/\epsilon)} + |u|G\log(|u|GT/\epsilon)$.

We are now at the stage to prove Theorem \ref{thm:base_algo}. We
restate the Theorem for reference, followed by its proof.

\basealgo*
\begin{proof}
The proof is similar to optimistic reduction in \cite{cutkosky2019combining}, which combines regret guarantees from two online learning algorithms. First, we observe that $\cA_1$ outputs $x_t'$ from line 3 and receives gradients $(g_t + \nabla \psi_t(w_t))/(b+H) \leq 1$ from line 9 in Algorithm \ref{alg:sub_exp_combine}. Hence we apply Lemma~\ref{lemma:regret_transform} by choosing $\epsilon = \bar \epsilon$, and set $G = 1$, $\bar G = b+H$, we have the following holds for any $u$,
% and $|\langle g_t+\psi_t(w_t),\psi_t(w_t)\rangle|\le H(b+H)$, and then apply Lemma~\ref{lemma:regret_transform} to $\cA_1$ and $\cA_2$. We assumed that for $|c_t|\le 1$, $\cA_1$ and $\cA_2$ would both output $w_t$ satisfying:
% \begin{align*}
%     \sum_{t=1}^{T} \langle c_t, w_t - u \rangle & \leq \epsilon + A |u| \sqrt{\sum_{t=1}^{T} |c_t|^2 \left( 1 + \log \left( \frac{|u|^2 T^C}{\epsilon^2} + 1 \right) \right)} + BL|u| \log \left(\frac{|u|T^C}{\epsilon} + 1 \right)\\
% \end{align*}
% Thus, Lemma~\ref{lemma:regret_transform} implies that for all $u$:
% \begin{align*}
%     \sum_{t=1}^T \langle g_t + \nabla \psi_t(w_t), x_t - u\rangle \le 
% \end{align*}

% , observe that the assumption on $\cA_1$ and $\cA_2$ implies that $x_t$ and $y_t$
% transform \cite{cutkosky2018black} for arbitrary Lipschitz loss, for $|c_t| \leq L$, set $Wealth_0 = \epsilon/L$, by lemma \ref{lemma:regret_transform}
% \begin{align*}
%     \sum_{t=1}^{T} \langle c_t, w_t - u \rangle & \leq \epsilon + A |u| \sqrt{\sum_{t=1}^{T} |c_t|^2 \left( 1 + \log \left( \frac{L^2|u|^2 T^C}{\epsilon^2} + 1 \right) \right)} + BL|u| \log \left(\frac{L|u|T^C}{\epsilon} + 1 \right)\\
% \end{align*}
% Run two independent $\mathcal{A}$ as in algorithm \ref{alg:sub_exp_combine}, named $\mathcal{A}^1$ and $\mathcal{A}^2$. For $\mathcal{A}^1$, it outputs $x_t'$ and receives $(g_t + \nabla \psi_t(w_t))/(b+H)$. With the transformation lemma, we can can derive the following regret, where $x_t = (b+H) x_t'$
\begin{align*}
    R_T^1(u) & = \sum_{t=1}^{T} \langle g_t + \nabla \psi_t(w_t), x_t - u \rangle\\
    & \leq \epsilon + A |u| \sqrt{\sum_{t=1}^{T} |g_t + \nabla \psi_t(w_t) |^2 \left[1 + \log \left(\frac{(b+H)^2 |u|^2 T^C }{\epsilon^2} + 1 \right) \right]}\\
     & ~~~ + B(b+H)|u| \log \left( \frac{(b+H)|u|T^C}{\epsilon} + 1 \right)
\end{align*}
Similarly for $\cA_2$ outputs $y_t'$ and receives $ \frac{-\langle g_t + \nabla \psi_t(w_t), \nabla \psi_t(w_t) \rangle }{H(b+H)}$, Hence use Lemma \ref{lemma:regret_transform} by setting $\epsilon = \bar \epsilon$, $G=1$ and $\bar G = H(b+H)$, we have the following for all $y_\star$:
\begin{align*}
    %  \intertext{Similarly for $\mathcal{A}^2$, which outputs $y_t'$ and receives gradient $-\langle g_t + \nabla \psi_t(w_t), \nabla \psi_t(w_t) \rangle/(H(b+H))$. The following regret is derived for $y_t = H(b+H)y_t'$,}
     R_T^2(y_{\star}) & = \sum_{t=1}^{T} -\langle g_t + \nabla \psi_t(w_t), \nabla \psi_t(w_t) \rangle (y_t - y_{\star})\\
    & \leq \epsilon + A |y_{\star}| \sqrt{\sum_{t=1}^{T} \langle g_t + \nabla \psi_t(w_t), \nabla \psi_t(w_t) \rangle^2 \left[1 + \log \left(\frac{(b+H)^2 H^2 |y_{\star}|^2 T^C }{\epsilon^2} + 1 \right) \right]} \\
    & ~~~ + B(b+H)H|y_{\star}| \log \left( \frac{(b+H)H |y_{\star}| T^C}{\epsilon} + 1 \right)
\end{align*}
The relationship between the $R_T^{\cA}(u)$ bounded by linearized loss and $  R_T^1(u),  R_T^2(y_{\star})$ is revealed:
\begin{align*}
    R_T^{\cA}(u) & \leq \sum_{t=1}^{T} \langle g_t + \nabla \psi_t(w_t), w_t - u \rangle\\
    &= \sum_{t=1}^{T} \langle g_t + \nabla \psi_t(w_t), x_t - u \rangle - y_t \langle g_t + \nabla \psi_t(w_t), \nabla \psi_t(w_t) \rangle\\
    & \leq \inf_{y_{\star} \geq 0} R_T^1(u) + R_T^2(y_\star) - y_{\star} \sum_{t=1}^{T}\langle g_t + \nabla \psi_t(w_t), \nabla \psi_t(w_t) \rangle
    \intertext{use identity $-2\langle a, b \rangle = \| a- b \|^2 - \|a \|^2 - \|b\|^2 $}
    & =  \inf_{y_{\star} \geq 0} R_T^1(u) + R_T^2(y_\star) + \frac{y_{\star}}{2} \sum_{t=1}^{T} | g_t |^2 - |g_t + \nabla \psi_t (w_t) |^2 - | \nabla \psi_t(w_t) |^2\\
    & \leq \inf_{y_{\star} \geq 0} 2\epsilon + A |u| \sqrt{\sum_{t=1}^{T} |g_t + \nabla \psi_t(w_t) |^2 \left[1 + \log \left(\frac{(b+H)^2 |u|^2 T^C }{\epsilon^2} + 1 \right) \right]}\\
    & ~~~ + B(b+H)|u| \log \left[ \frac{(b+H)|u|T^C}{\epsilon} + 1 \right]  + B(b+H)H|y_{\star}| \log \left[ \frac{(b+H)H |y_{\star}| T^C}{\epsilon} + 1 \right]\\
    & ~~~ + A |y_{\star}| \sqrt{\sum_{t=1}^{T} \langle g_t + \nabla \psi_t(w_t), \nabla \psi_t(w_t) \rangle^2 \left[1 + \log \left(\frac{(b+H)^2 H^2 |y_{\star}|^2 T^C }{\epsilon^2} + 1 \right) \right]}\\
    & ~~~ + \frac{y_{\star}}{2} \sum_{t=1}^{T} | g_t |^2 - |g_t + \nabla \psi_t (w_t) |^2 - | \nabla \psi_t(w_t) |^2\\
    \intertext{let $X = \sum_{t=1}^{T} |g_t + \nabla \psi_t(w_t) |^2 $}
    & \leq \inf_{y_{\star} \geq 0} \sup_{X \geq 0 } 2\epsilon + A |u| \sqrt{ X \left[1 + \log \left(\frac{(b+H)^2 |u|^2 T^C }{\epsilon^2} + 1 \right) \right]}\\
    & ~~~ + B(b+H)|u| \log \left[ \frac{(b+H)|u|T^C}{\epsilon} + 1 \right] +  B(b+H)H|y_{\star}| \log \left[ \frac{(b+H)H |y_{\star}|T^C}{\epsilon} + 1 \right]\\
    & ~~~ + A |y_{\star}| \sqrt{ XH^2 \left[1 + \log \left(\frac{(b+H)^2 H^2 |y_{\star}|^2 T^C }{\epsilon^2} + 1 \right) \right]}\\
    & ~~~ + \frac{y_{\star}}{2} \sum_{t=1}^{T} \left( | g_t |^2  - | \nabla \psi_t(w_t) |^2 \right) - \frac{y_{\star}}{2} X \\
    & \leq \inf_{y_{\star} \geq 0} \sup_{X \geq 0 } \sup_{Z \geq 0} 2\epsilon + A |u| \sqrt{ X \left[1 + \log \left(\frac{(b+H)^2 |u|^2 T^C }{\epsilon^2} + 1 \right) \right]}\\
    & ~~~ + B(b+H)|u| \log \left[ \frac{(b+H)|u|T^C}{\epsilon} + 1 \right] + B (b+H)H |y_\star| \log \left[ \frac{(b+H)H |y_{\star} |T^C}{\epsilon} + 1 \right]\\
    & ~~~ + A |y_{\star}| \sqrt{ ZH^2 \left[1 + \log \left(\frac{(b+H)^2 H^2 |y_{\star}|^2 T^C }{\epsilon^2} + 1 \right) \right]}\\
    & ~~~ + \frac{y_{\star}}{2} \sum_{t=1}^{T} \left( | g_t |^2  - | \nabla \psi_t(w_t) |^2 \right) - \frac{y_{\star}}{4} (X+Z) \\ 
    \intertext{set \begin{align*}
        y_{\star} = \min \left(\frac{2A|u| \sqrt{1 + \log((b+H)^2|u|^2T^C/\epsilon^2 + 1) }}{\sqrt{\max(0, \sum_{t=1}^{T} \left( | g_t |^2  - | \nabla \psi_t(w_t) |^2 \right))}}, \frac{|u|}{H} \right)
    \end{align*}}
    & \leq \sup_{X \geq 0 } \sup_{Z \geq 0} 2\epsilon + A |u| \sqrt{ X \left[1 + \log \left(\frac{(b+H)^2 |u|^2 T^C }{\epsilon^2} + 1 \right) \right]}\\
    & ~~~ + B(b+H)|u| \log \left[ \frac{(b+H)|u|T^C}{\epsilon} + 1 \right] - \frac{y_{\star}}{4} (X+Z)\\
    & ~~~ + B(b+H) |u| \log \left[ \frac{(b+H)|u|T^C}{\epsilon} + 1 \right]\\
    & ~~~ + A |y_{\star}| \sqrt{ ZH^2 \left[1 + \log \left(\frac{(b+H)^2 H^2 |y_{\star}|^2 T^C }{\epsilon^2} + 1 \right) \right]}\\
    & ~~~ + A|u| \sqrt{1 + \log(\frac{(b+H)^2|u|^2T^C}{\epsilon^2} + 1) }\sqrt{\max(0, \sum_{t=1}^{T} \left( | g_t |^2  - | \nabla \psi_t(w_t) |^2 \right))} \\
    \intertext{ For $a, b > 0$, $\sup_x a \sqrt{x} - bx = a^2/4b$, apply the identity to both $\sup_{X>0}, \sup_{Z>0}$ }
    & \leq  2\epsilon + A^2 |u|^2 \left[1 + \log \left(\frac{(b+H)^2 |u|^2 T^C }{\epsilon^2} + 1 \right) \right]/ y_{\star}\\
    & ~~~ + A^2 |y_{\star}| H^2 \left[1 + \log \left(\frac{(b+H)^2 H^2 |y_{\star}|^2 T^C }{\epsilon^2} + 1 \right) \right]\\
    & ~~~ + 2B(b+H)|u| \log \left[ \frac{(b+H)|u|T^C}{\epsilon} + 1 \right]\\
    & ~~~ + A|u| \sqrt{1 + \log(\frac{(b+H)^2|u|^2T^C}{\epsilon^2} + 1) }\sqrt{\max(0, \sum_{t=1}^{T} \left( | g_t |^2  - | \nabla \psi_t(w_t) |^2 \right))} \\
    \intertext{substitute $y_{\star}$}
    & \leq 2\epsilon + \frac{A}{2}|u| \sqrt{\left[1 + \log \left(\frac{(b+H)^2 |u|^2 T^C }{\epsilon^2} + 1 \right) \right] }\sqrt{\max(0, \sum_{t=1}^{T} \left( | g_t |^2  - | \nabla \psi_t(w_t) |^2 \right))}\\
    & ~~~ + A^2 |u| H \left[1 + \log \left(\frac{(b+H)^2 |u|^2 T^C }{\epsilon^2} + 1 \right) \right]+ 2B(b+H)|u| \log \left[ \frac{(b+H)|u|T^C}{\epsilon} + 1 \right]\\
    & ~~~ + A|u| \sqrt{1 + \log(\frac{(b+H)^2|u|^2T^C}{\epsilon^2} + 1) }\sqrt{\max(0, \sum_{t=1}^{T} \left( | g_t |^2  - | \nabla \psi_t(w_t) |^2 \right))} \\
    & \leq 2\epsilon + \frac{3A}{2}|u|\sqrt{\left[1 + \log \left(\frac{(b+H)^2 |u|^2 T^C }{\epsilon^2} + 1 \right) \right] }\sqrt{\max(0, \sum_{t=1}^{T} \left( | g_t |^2  - | \nabla \psi_t(w_t) |^2 \right))}\\
    & ~~~+ |u|\left(A^2H  + 2 B (b+H) \right)\left[1 + \log \left(\frac{(b+H)^2 |u|^2 T^C }{\epsilon^2} + 1 \right) \right]
\end{align*}
Define a constant $N$
    \begin{align*}
        N = 1 + \log \left(\frac{(b+H)^2 |u|^2 T^C }{\epsilon^2} + 1 \right)
    \end{align*}
Then, $R_T^{\cA}(u)$ can be written as,
    \begin{align*}
        R_T^{\cA}(u) & \leq \sum_{t=1}^{T} \langle g_t + \nabla \psi_t(w_t), w_t - u \rangle\\
        & \leq 2\epsilon + |u| \left[ \frac{3A}{2} \sqrt{N \max \left(0, \sum_{t=1}^{T} | g_t |^2  - | \nabla \psi_t(w_t) |^2 \right)} +  \left(A^2H  + 2B (b+H) \right) N \right]
    \end{align*}
\end{proof}

The following Lemma shows the magnitude of $w_t$ as a function of $t$, where $\{w_t\}$ is a sequence of output from algorithm \ref{alg:sub_exp_combine}. We shall see later on that a coarse bound for $w_t$ helps to proof Theorem \ref{thm:main_sub_exp}.

\begin{Lemma}[Exponential Growing Output]\label{lemma:coin_betting_exp_grow}
Suppose $\cA$ is an arbitrary OLO algorithm that guarantees regret $R_T^{\cA}(0)=\sum_{t=1}^T \langle \bg_t, \bw_t\rangle\le \epsilon$ for all sequence $\{\bg_t\}$ with $\|\bg_t\|\le G$. Then  it must hold that $\|\bw_t\|\le \frac{\epsilon}{2G}2^t$ for all $t$.
\end{Lemma}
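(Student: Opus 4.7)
The plan is a simple induction on $t$, exploiting the fact that the regret bound $R_T^{\cA}(0) \le \epsilon$ holds for \emph{every} admissible gradient sequence; in particular, we may pick the last gradient adaptively based on the algorithm's own output at that step.

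First I would handle the base case $t=1$ directly. Run the algorithm for a single round, observe its output $\bw_1$, and then choose $\bg_1 = G \bw_1/\|\bw_1\|$ (or $\bg_1 = 0$ if $\bw_1 = 0$). Since $\|\bg_1\|\le G$, the hypothesis gives $G\|\bw_1\| = \langle \bg_1, \bw_1\rangle \le \epsilon$, hence $\|\bw_1\| \le \epsilon/G = \frac{\epsilon}{2G}\cdot 2^1$, as required.

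For the inductive step, suppose that for every prefix of adversarial gradients $\bg_1,\dots,\bg_{s-1}$ of length $<t$ the algorithm's output $\bw_s$ satisfies $\|\bw_s\| \le \frac{\epsilon}{2G}2^s$ for all $s \le t-1$. Fix any such prefix $\bg_1,\dots,\bg_{t-1}$ with $\|\bg_s\|\le G$, let $\bw_t$ be the algorithm's output at round $t$ under this prefix, and extend the sequence by setting $\bg_t = G\bw_t/\|\bw_t\|$ (zero if $\bw_t=0$). Applying the hypothesis with horizon $T=t$ to this new length-$t$ sequence gives
\begin{align*}
G\|\bw_t\| \;=\; \langle \bg_t,\bw_t\rangle \;\le\; \epsilon - \sum_{s=1}^{t-1}\langle \bg_s,\bw_s\rangle \;\le\; \epsilon + \sum_{s=1}^{t-1}G\|\bw_s\|.
\end{align*}
Plugging in the inductive bound $G\|\bw_s\| \le \tfrac{\epsilon}{2}\cdot 2^s = 2^{s-1}\epsilon$ and summing the geometric series yields $G\|\bw_t\| \le \epsilon + (2^{t-1}-1)\epsilon = 2^{t-1}\epsilon$, i.e. $\|\bw_t\| \le \frac{\epsilon}{2G}\cdot 2^t$, closing the induction.

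The only subtle point, and the main thing to flag in the write-up, is that the gradient $\bg_t$ we feed at the final step is chosen \emph{after} seeing $\bw_t$; this is legal because the hypothesis quantifies over \emph{all} gradient sequences (not just oblivious ones), so no measurability or adaptivity issue arises. Everything else is a one-line geometric-series estimate, and the construction of the adversarial tail gradient is the same trick used to prove lower bounds for OLO against $\bu=0$.
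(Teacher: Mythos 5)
Your proof is correct and follows essentially the same argument as the paper's: both extract the bound $G\|\bw_t\|\le \epsilon - \sum_{i<t}\langle\bg_i,\bw_i\rangle$ by letting the adversary set the final gradient to $G\bw_t/\|\bw_t\|$, and both then unwind this geometrically. The only cosmetic difference is that the paper packages the recursion as $H_t = \epsilon - \sum_{i\le t}\langle\bg_i,\bw_i\rangle$ with $H_t\le 2H_{t-1}$, whereas you unfold the same bound as an explicit induction with a geometric-series sum; the two derivations are interchangeable.
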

\begin{proof}
We will first prove by contradiction that $\|\bw_t\|\le \frac{\epsilon - \sum_{i=1}^{t-1} \langle \bg_i,\bw_i\rangle}{G}$ for all $t$ for all sequences $\{\bg_t\}$. Suppose that there is some $t$ and sequence $\bg_1,\dots,\bg_{t-1}$ such that $\|\bw_t\|> \frac{\epsilon - \sum_{i=1}^{t-1} \langle \bg_i,\bw_i\rangle}{G}$. Then, consider $\bg_t = G\frac{\bw_t}{\|\bw_t\|}$. Then we have:
\begin{align*}
    R_t(0) = \sum_{i=1}^{t-1} \langle \bg_i, \bw_i\rangle  + \langle \bg_t, \bw_t\rangle > \epsilon
\end{align*}
which is a contradiction, and so $\|\bw_t\|\le \frac{\epsilon - \sum_{i=1}^{t-1} \langle \bg_i,\bw_i\rangle}{G}$.

Now, if we define $H_t = \epsilon - \sum_{i=1}^{t} \langle \bg_i,\bw_i\rangle$, we have $\|\bw_t\|\le \frac{H_{t-1}}{G}$. Therefore:
\begin{align*}
    H_{t} &= H_{t-1} - \langle \bg_t, \bw_t\rangle\\
    &\le 2H_{t-1}\\
    &\le 2^t H_0\\
    &=\epsilon 2^t
\end{align*}
Thus, we have $\|\bw_t\|\le \frac{H_{t-1}}{G}= \frac{\epsilon}{2G}2^t$ as desired.
\end{proof}

\section{Cancellation for Gradients with Sub-exponential Noise}\label{sec:app_bernstein}

In this Section, we ultimately provide the proof for Theorem \ref{thm:main_sub_exp}. We first show a few algebraic lemma followed by the property of the regularizer, Then we show the proof for Theorem \ref{thm:main_sub_exp} by combining different lemma with the outlines listed in Section \ref{sec:sub_exp}.

\begin{Lemma} \label{lemma:regularizer_lb}
For $x\ge 0, a > 0$, $p \geq 1$
\begin{align*}
    \frac{a}{(x+ a)^{1-\frac{1}{p}}} \geq (x + a)^{\frac{1}{p}} - x^\frac{1}{p}
\end{align*}
\end{Lemma}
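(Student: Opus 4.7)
The plan is to rearrange the inequality algebraically until it reduces to a manifestly true statement about monotonicity of the power function.

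First I would multiply both sides by $(x+a)^{1-1/p}$, which is positive, to turn the claim into the equivalent inequality
\begin{align*}
    a \;\ge\; (x+a) - x^{1/p}(x+a)^{1-1/p}.
\end{align*}
Using $a = (x+a)-x$ on the left cancels the $(x+a)$ on the right, leaving
\begin{align*}
    x^{1/p}(x+a)^{1-1/p} \;\ge\; x.
\end{align*}
The case $x=0$ is immediate (both sides are $0$), so I would assume $x>0$ and divide by $x$, obtaining
\begin{align*}
    \left(\frac{x+a}{x}\right)^{1-1/p} \;\ge\; 1.
\end{align*}

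This last inequality holds because $\frac{x+a}{x} \ge 1$ (since $a>0$) and the exponent $1-1/p$ is nonnegative (since $p\ge 1$), so raising a quantity $\ge 1$ to a nonnegative power yields something $\ge 1$. Tracing the equivalences back recovers the claim.

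There is no real obstacle here; the only thing to be careful about is the boundary case $x=0$, which must be treated separately before dividing, and noting that the equality case $p=1$ also works (both exponents become trivial and the original inequality becomes $a \ge a$).
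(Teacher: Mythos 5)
Your proof is correct and is essentially the same argument as the paper's, just run in the opposite direction: the paper starts from the observation $x^{1/p}(x+a)^{1-1/p}\ge x^{1/p}x^{1-1/p}=x$ and rearranges forward, while you reduce the claim to that same inequality and then justify it by dividing through by $x$. Your version is slightly more careful about the boundary case $x=0$, but the underlying idea is identical.
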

\begin{proof}
\begin{align*}
    x^{\frac{1}{p}}(x+a)^{1- \frac{1}{p}} & \geq x^{\frac{1}{p}} x^{1-\frac{1}{p}} = x\\
    \intertext{rearrange}
    0 & \geq x - x^{\frac{1}{p}}(x+a)^{1- \frac{1}{p}}\\
    a & \geq (x+a) - x^{\frac{1}{p}}(x+a)^{1- \frac{1}{p}}
\end{align*}
divide both side by $(x+a)^{1-\frac{1}{p}}$, we complete the proof
\end{proof}

\begin{Lemma}\label{lemma:triangle}
For $x, a \geq 0, p \geq 1$:
\begin{align*}
    (x+a)^{1/p} \leq x^{1/p} + a^{1/p}
\end{align*}
\end{Lemma}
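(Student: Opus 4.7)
The plan is to reduce the inequality to the elementary fact that for $t \in [0,1]$ and $q \in [0,1]$ one has $t^q \geq t$. This immediately yields subadditivity of $t \mapsto t^{1/p}$ on $[0,\infty)$ when $p \geq 1$.

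First I would dispose of the trivial case $x + a = 0$, in which both sides equal $0$. In the remaining case $x + a > 0$, I would normalize by dividing both sides of the target inequality by $(x+a)^{1/p}$. Writing $y = x/(x+a)$ and $z = a/(x+a)$, we have $y, z \in [0,1]$ and $y + z = 1$, and the inequality to prove becomes
\begin{align*}
    1 \;\leq\; y^{1/p} + z^{1/p}.
\end{align*}

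Since $p \geq 1$, we have $1/p \leq 1$, and for any $t \in [0,1]$ and any exponent $q \in [0,1]$ the bound $t^q \geq t$ holds (either because $t \mapsto t^q$ is concave with $t^q|_{t=0} = 0$ and $t^q|_{t=1} = 1$, or directly by noting $t^{q-1} \geq 1$ when $t \leq 1$ and $q - 1 \leq 0$). Applying this with $q = 1/p$ to $t = y$ and $t = z$ gives $y^{1/p} \geq y$ and $z^{1/p} \geq z$, and summing yields $y^{1/p} + z^{1/p} \geq y + z = 1$, as needed.

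There is no real obstacle here; the only subtlety is being careful about the degenerate case $x + a = 0$ and making sure the normalization step is justified. An alternative one-line proof would raise both sides to the $p$-th power and expand $(x^{1/p}+a^{1/p})^p$ by the binomial theorem, noting that all cross terms are nonnegative so that the sum dominates $x + a$; I would include this only as a remark, since the normalization argument above is more transparent and does not require $p$ to be an integer.
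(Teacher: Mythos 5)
Your proof is correct, and it takes a genuinely different route from the paper's. The paper's proof writes $x+a = (x^{1/p})^p + (a^{1/p})^p$ and then invokes the inequality $u^p + v^p \leq (u+v)^p$ for $u, v \geq 0$ and $p \geq 1$, finishing by raising both sides to the power $1/p$; that intermediate inequality is asserted without further justification, though it is standard (for instance, $(u+v)^p = (u+v)^{p-1}(u+v) \geq u^{p-1}u + v^{p-1}v$). Your argument instead normalizes by $(x+a)^{1/p}$ to reduce to showing $y^{1/p} + z^{1/p} \geq 1$ when $y+z=1$, and then applies the elementary fact $t^q \geq t$ for $t \in [0,1]$, $q \in [0,1]$. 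This makes the mechanism more transparent and fully self-contained at the cost of one extra case split ($x+a=0$); the paper's version is shorter but leans on an unproved auxiliary inequality. Your aside about a binomial-theorem proof is correctly flagged as requiring integer $p$, and is rightly demoted to a remark.
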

\begin{proof}
\begin{align*}
    x+a & = (x^{1/p})^p + (a^{1/p})^p \leq (x^{1/p} + a^{1/p})^p
\end{align*}
raise to the power of $1/p$ to complete the proof
\end{proof}

\begin{Lemma}\label{lemma:max_w_t}
For $\bx \in \R^d$, if $\| \cdot \|_p$ is the $p-$norm:
\begin{align*}
    \frac{1}{d^{1/p}}\| \bx \|_{p} \leq \| \bx \|_{\infty} \leq \| \bx \|_p
\end{align*}
\end{Lemma}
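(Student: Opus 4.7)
The plan is to prove both inequalities directly from the definition of the $p$-norm, $\|\bx\|_p = \left(\sum_{i=1}^d |x_i|^p\right)^{1/p}$, by using the fact that $\max_i |x_i|^p$ is a single term in the defining sum. No concentration or online learning ideas are needed here; it is a purely elementary calculation.

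For the upper bound $\|\bx\|_\infty \le \|\bx\|_p$, I would observe that $\sum_{i=1}^d |x_i|^p \ge \max_i |x_i|^p = \|\bx\|_\infty^p$, since every summand is nonnegative and one of them equals $\|\bx\|_\infty^p$. Taking the $p$-th root of both sides, which is monotone since $p \ge 1$, yields the desired inequality. For the lower bound $\tfrac{1}{d^{1/p}}\|\bx\|_p \le \|\bx\|_\infty$, I would bound each summand from above by the maximum, giving $\sum_{i=1}^d |x_i|^p \le d \cdot \max_i |x_i|^p = d\,\|\bx\|_\infty^p$. Taking $p$-th roots and dividing by $d^{1/p}$ finishes the argument.

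I do not anticipate any obstacle: the only subtlety is the monotonicity of $x \mapsto x^{1/p}$ on $[0,\infty)$ for $p \ge 1$, which is standard. The entire argument fits in a few lines. I would present the two bounds as two short display equations sandwiching the single step $\max_i |x_i|^p \le \sum_{i=1}^d |x_i|^p \le d \max_i |x_i|^p$, followed by the monotone $p$-th root.
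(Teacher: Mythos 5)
Your proof is correct and follows essentially the same elementary route as the paper: sandwich $\max_i |x_i|^p$ between $\sum_i |x_i|^p$ and $d\max_i |x_i|^p$, then take $p$-th roots. Your presentation is marginally cleaner, but the underlying argument is identical.
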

\begin{proof}
Clearly, it suffices to consider $\bx=(x_1,\dots,x_d)$ with $x_i\ge 0$ for all $i$. let $i_{\ast} = \argmax_{i} x_i$, then for all $a \geq 0$ 
\begin{align*}
    \|\bx\|_{\infty} = x_i^{\ast} = ({x_i^{\ast}}^{p})^{1/p} \leq ({x_i^{\ast}}^p + a)^{1/p} 
    \intertext{setting $a = \sum_{i \neq i^{\ast}} x_i^p$, demonstrates the upper bound.}
\end{align*}
For the lower bound:
\begin{align*}
    \frac{1}{d^{1/p}} \| \bx \|_p = \frac{1}{d^{1/p}} \left( \sum_{i=1}^{d} x_i^p \right)^{1/p} \leq \frac{1}{d^{1/p}} \left( d {x_i^{\ast}}^p  \right)^{1/p} = x_i^{\ast}
\end{align*}
\end{proof}

\begin{Lemma}\label{lemma:lazy_max}
For $x > 0$,
\begin{align*}
    \log (x + \exp(1)) \leq \max (0, \log(x)) + \exp(1)
\end{align*}
\end{Lemma}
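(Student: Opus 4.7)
The plan is to proceed by an elementary case analysis on whether $x \geq 1$ or $0 < x < 1$, which is the natural split since the $\max(0,\log x)$ on the right-hand side behaves differently in each regime.

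First I would handle the case $x \geq 1$. Here $\max(0,\log x) = \log x$, so the desired inequality reduces to $\log(x+\exp(1)) \leq \log(x) + \exp(1)$, or equivalently $\log(1 + \exp(1)/x) \leq \exp(1)$. Since $x \geq 1$ we have $\exp(1)/x \leq \exp(1)$, and applying the standard bound $\log(1+y) \leq y$ for $y \geq 0$ gives $\log(1+\exp(1)/x) \leq \exp(1)/x \leq \exp(1)$, as needed.

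Next I would handle the case $0 < x < 1$, where $\max(0,\log x) = 0$, so the inequality reduces to $\log(x + \exp(1)) \leq \exp(1)$, i.e.\ $x + \exp(1) \leq \exp(\exp(1))$. Since $x < 1$ we have $x + \exp(1) < 1 + \exp(1)$, and a direct numerical comparison gives $1 + \exp(1) < \exp(\exp(1))$ (the right-hand side is roughly $15.15$ while the left-hand side is roughly $3.72$), completing this case.

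There is essentially no serious obstacle here; the main thing to be careful about is simply making sure the split into cases matches the two branches of $\max(0,\log x)$, and invoking the monotonicity of $\log$ together with the elementary inequality $\log(1+y) \leq y$ correctly. No tools beyond basic calculus on a single variable are needed, and no structural features of the algorithm enter the argument.
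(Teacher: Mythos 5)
Your proof is correct. Both you and the paper split at $x=1$ and handle $x\le 1$ by the same monotonicity-plus-numerics observation ($\log(x+e)\le\log(1+e)\le e$). The difference is in the case $x\ge 1$: the paper compares derivatives of $h(x)=\log(x+e)$ and $f(x)=\log(x)+e$ (showing $f'>h'$ for $x>1$ and $f(1)\ge h(1)$), whereas you reduce to $\log(1+e/x)\le e$ and apply the elementary bound $\log(1+y)\le y$. Your route avoids any calculus and is arguably tighter and cleaner; the paper's monotone-comparison technique is a slightly more heavyweight tool for the same conclusion. Both are fully correct and at essentially the same level of effort, so this is a matter of taste rather than substance.
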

\begin{proof}
For $x \in (0, 1]$, $\log(x) \leq 0$. Thus, the inequality holds since $\log(x+\exp(1))\le \log(1+\exp(1))\le \exp(1)$.

For $x > 1$, we have $\log(x) > 0$. Let
\begin{align*}
    h(x) & = \log(x+\exp(1))\\
    f(x) & = \log(x) + \exp(1)
\end{align*}
Taking derivatives,
\begin{align*}
    h'(x) & = 1/(x+ \exp(1))\\
    f'(x) & = 1/x
\end{align*}
Thus, $f'(x) > h'(x)$ for $x > 1$. Now, since $f(1) \ge  h(1)$, we have $f(x) > h(x)$ for $x > 1$.

Combining both case we complete the proof.
\end{proof}

\begin{comment}
\begin{Lemma}\label{lemma: unity}
For $x \geq  0$,$ a \geq (1+3\sqrt{2}/8 )$, Then
\begin{align*}
    x(1+a) \leq a(x+a)^2
\end{align*}
\end{Lemma}
\begin{proof}
To show the above inequality, by rearrangement, it's equivlent to show the following
\begin{align*}
    \inf_{x \geq 0 } ax^2 + (2a^2 - 1- a) x + a ^3 &\geq 0\\
    \intertext{solve the infimum of a quadratic over $x$, and $a \geq 0$}
    - \frac{ (2a^2 - 1 - a)^2 }{4a } + a^3 &\leq 0\\
    (2a^2 - 1 -a)^2 - (2a^2)^2 &\leq 0 \\
    (2a^2 - 1 -a - 2a^2 ) (2a^2 - 1 -a + 2a^2 ) & \leq 0\\
    (-1-a) (4a^2 -1 - a) &\leq 0 
\end{align*}
Since $(-1 - a) \leq 0$, its suffice to show $(4a^2 - 1 - a ) \geq 0$, which is true when $a \geq (1+ 3\sqrt{2})/8 $
\end{proof}
\end{comment}

\begin{Lemma}[Cumulative Huber Loss]
\label{lemma:reg_ub_lb}
    Consider $r_t$ as in equation (\ref{eqn:huber_loss}) (copied below for convience):
\begin{align*}
    r_t(w; c,p,\alpha_0) =  
    \begin{cases}
    c \left(p |w| - (p-1) |w_t| \right) \frac{|w_t|^{p - 1}}{( \sum_{i=1}^{t} |w_i|^{p} + \alpha_0^p )^{1- 1/p} }, & |w| > |w_t|\\
    c |w|^{p} \frac{1}{( \sum_{i=1}^{t} |w_i|^{p} + \alpha_0^p )^{1-1/p} }, & |w| \leq |w_t|
    \end{cases}
\end{align*}
Then, with fixed parameter $c, \alpha_0 > 0$, $p \ge 1 $,
\begin{align*}
    &\sum_{t=1}^{T} r_t(w_t) \geq c \left( \left(\sum_{t=1}^{T} |w_t|^p  + \alpha_0^p \right)^{1/p} - \alpha_0 \right) \numberthis \label{eqn: reg_LB_general}\\
    &\sum_{t=1}^{T} r_t(u) \leq cp|u| T^{1/p} \left[ \left( \log \frac{T |u|^p + \alpha_0^p }{\alpha_0^p} \right)^{(p-1)/p} +  1 \right] \numberthis \label{eqn:reg_ub}
\end{align*}
\end{Lemma}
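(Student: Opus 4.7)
The plan is to handle the two inequalities separately, both hinging on a telescoping identity for the cumulative sum of increments $|w_t|^p$, which I package as $S_t := \sum_{i=1}^t |w_i|^p + \alpha_0^p$ with $S_0 = \alpha_0^p$ and $S_t - S_{t-1} = |w_t|^p$.

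For the lower bound, I evaluate $r_t$ at the defining point: since $|w| = |w_t|$ falls into the second branch of the piecewise definition, $r_t(w_t) = c\,|w_t|^p / S_t^{1-1/p} = c(S_t - S_{t-1})/S_t^{1-1/p}$. Applying Lemma~\ref{lemma:regularizer_lb} with $x = S_{t-1}$ and $a = S_t - S_{t-1} \ge 0$ gives $r_t(w_t) \ge c(S_t^{1/p} - S_{t-1}^{1/p})$, and summing telescopes to $c(S_T^{1/p} - \alpha_0)$, which is exactly \eqref{eqn: reg_LB_general}.

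For the upper bound, I partition the time indices into $T_1 = \{t : |u| > |w_t|\}$, where the linear branch of $r_t(u)$ is active, and $T_2 = \{t : |u| \le |w_t|\}$, where the polynomial branch is active. On $T_2$, $r_t(u) = c|u|^p/S_t^{(p-1)/p}$. If I enumerate $T_2 = \{t_1 < \cdots < t_k\}$, then $|w_{t_j}|^p \ge |u|^p$ implies $S_{t_j} \ge \alpha_0^p + j|u|^p \ge j|u|^p$, so
\begin{align*}
    \sum_{t \in T_2} r_t(u) \le c|u|^p \sum_{j=1}^{k} (j|u|^p)^{-(p-1)/p} = c|u|\sum_{j=1}^{k} j^{-(p-1)/p} \le c p|u|T^{1/p},
\end{align*}
where the final step is a standard integral comparison. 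On $T_1$, the linear branch gives $r_t(u) = c(p|u| - (p-1)|w_t|) |w_t|^{p-1} / S_t^{(p-1)/p} \le cp|u| \cdot |w_t|^{p-1}/S_t^{(p-1)/p}$.

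The subtle part is the $T_1$ sum, because naively telescoping $\sum |w_t|^p/S_t$ over all $t$ would yield a logarithm involving $S_T$, which can be arbitrarily large if the $w_t$'s in $T_2$ are huge -- but the target bound depends only on $|u|$. To get around this, I introduce the restricted sum $\tilde S_t = \alpha_0^p + \sum_{i \le t,\, i \in T_1} |w_i|^p$, which satisfies $\tilde S_t \le S_t$ (hence $|w_t|^{p-1}/S_t^{(p-1)/p} \le |w_t|^{p-1}/\tilde S_t^{(p-1)/p}$) and $\tilde S_T \le T|u|^p + \alpha_0^p$. Rewriting $|w_t|^{p-1}/\tilde S_t^{(p-1)/p} = (|w_t|^p/\tilde S_t)^{(p-1)/p}$ and applying H\"older's inequality with conjugate exponents $p$ and $p/(p-1)$ yields
\begin{align*}
    \sum_{t\in T_1} \Bigl(\tfrac{|w_t|^p}{\tilde S_t}\Bigr)^{(p-1)/p} \le |T_1|^{1/p} \Bigl(\sum_{t\in T_1} \tfrac{|w_t|^p}{\tilde S_t}\Bigr)^{(p-1)/p}.
\end{align*}
For $t \in T_1$, $|w_t|^p/\tilde S_t = 1 - \tilde S_{t-1}/\tilde S_t \le \log(\tilde S_t/\tilde S_{t-1})$ by $1 - x \le -\log x$, and this telescopes to $\log(\tilde S_T/\alpha_0^p) \le \log((T|u|^p + \alpha_0^p)/\alpha_0^p)$. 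Combining the $T_1$ and $T_2$ estimates produces exactly \eqref{eqn:reg_ub}. The main obstacle, as anticipated, is precisely this replacement of $S_t$ by the restricted $\tilde S_t$: it is what decouples the logarithmic factor from the large iterates that can live in $T_2$ and allows the bound to scale only with $|u|$.
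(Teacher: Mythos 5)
Your proof is correct and follows essentially the same route as the paper's: the lower bound uses the identical telescoping via Lemma~\ref{lemma:regularizer_lb}, the upper bound uses the same partition into the linear branch ($|w_t|<|u|$) and polynomial branch ($|w_t|\ge|u|$), and the key idea of shrinking the denominator to the restricted sum $\tilde S_t$ before applying H\"older's inequality with exponents $p$ and $p/(p-1)$ is exactly the paper's move. The only cosmetic deviations are that you bound the inner sum $\sum_{t\in T_1}|w_t|^p/\tilde S_t$ via the telescoping inequality $1-x\le-\log x$ whereas the paper uses an integral comparison $\int dx/x$ (both yield $\log(\tilde S_T/\alpha_0^p)$), and for the $T_2$ sum you discard $\alpha_0^p$ immediately and integral-compare $\sum_j j^{1/p-1}$ whereas the paper keeps $\alpha_0^p$, integral-compares $\int x^{1/p-1}dx$, and then invokes Lemma~\ref{lemma:triangle} to discard it.
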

\begin{proof}
Define index set $ A_T = \{ t: |w_t|< |u|, t = 1, \cdots, T\}$, and let $n(A_T)$ be the cardinality of $A_T$. Let $S_t = \alpha_0^p + \sum_{i=1}^{t} |w_i|^p$. First, we show lower bound for $\sum_{t=1}^{T} r_t(w)$. Since $ p \geq 1$, 
\begin{align*}
    \sum_{t=1}^{T} r_t(w_t) &= c \sum_{t=1}^{T} \frac{|w_t|^{p}}{( \sum_{i=1}^{t} |w_i|^{p} + \alpha_0^p)^{1-1/p} }\\
    & = c \sum_{t=1}^T \frac{|w_t|^p}{(|w_t|^p + S_{t-1})^{1-1/p}} \\
     \intertext{use Lemma \ref{lemma:regularizer_lb}, set $a = |w_t|^p, x = S_{t-1}, a+x = S_t$}
     & \geq c \sum_{t=1}^{T} (S_t^{1/p} - S_{t-1}^{1/p} ) \\
     & =  c (S_T^{1/p} - \alpha_0)\\
     & =  c \left((\sum_{t=1}^{T} |w_t|^p +\alpha_0^p)^{1/p} - \alpha_0 \right) 
\end{align*}
Now, we upper bound of $\sum_t r_t(u)$. We partition the sum into two terms, and bound them individually:
\begin{align*}
    \sum_{t=1}^{T} r_t(u) & \leq  c p |u| \underbrace{\sum_{\substack{t \leq T \\|w_t| \leq |u|}} \frac{|w_t|^{p - 1}}{( \sum_{i=1}^{t} |w_i|^{p} + \alpha_0^p )^{1- 1/p} }}_{A} + c |u|^{p} \underbrace{\sum_{\substack{ t \leq T \\|w_t| > |u|}} \frac{1}{( \sum_{i=1}^{t} |w_i|^{p} + \alpha_0^p )^{1- 1/p} }}_{B}
\end{align*}
First, we bound $A$:
\begin{align*}
    A & \leq \sum_{\substack{t \leq T \\|w_t| \leq |u|}} \frac{|w_t|^{p - 1}}{\left( \sum_{\substack{i\leq t, \\ |w_i| \leq |u| }} |w_i|^{p} + \alpha_0^p \right)^{1- 1/p} }
    \intertext{by Holder's inequality $\langle a, b \rangle \leq \|a \|_m \| b \|_n$, where $\frac{1}{m} + \frac{1}{n} = 1$. Set $m = p$, $n = \frac{p}{p-1}$.
    }\\
    &\leq n(A_T)^{1/p} \left(\sum_{\substack{t \leq T \\|w_t| \leq |u|}} \frac{|w_t|^{p}}{ \sum_{\substack{i\leq t, \\ |w_i| \leq |u| }} |w_i|^{p} + \alpha_0^p }\right)^{(p-1)/p}\\
    & \leq n(A_T)^{1/p} \left(\int_{\alpha_0^p}^{ \alpha_0^p + \sum_{t \in A_T } |w_t|^{p} } \frac{1}{x} dx \right)^{(p-1)/p}\\
    & = n(A_T)^{1/p} \left( \log \frac{\sum_{t \in A_T} |w_t|^p + \alpha_0^p }{\alpha_0^p} \right)^{(p-1)/p}\\
    & \leq n(A_T)^{1/p} \left( \log \frac{n(A_T) |u|^p + \alpha_0^p }{\alpha_0^p} \right)^{(p-1)/p}\\
    & \leq T^{1/p} \left( \log \frac{ |u|^p T + \alpha_0^p }{\alpha_0^p} \right)^{(p-1)/p}
\end{align*}
Now, we bound $B$:
\begin{align*}
    B & \leq \sum_{\substack{ t \leq T \\|w_t| > |u|}} \frac{1}{( \sum_{\substack{i\leq t, \\ |w_i| > |u| }} |w_i|^{p} + \alpha_0^p )^{1- 1/p} } \\
    & \leq \sum_{\substack{ t \leq T \\|w_t| > |u|}} \frac{1}{\left[ n(\{i \leq t: |w_i| > |u| \}) |u|^{p} +\alpha_0^p \right]^{1- 1/p} } \\
    & = \frac{1}{|u|^{p-1}} \sum_{\substack{ t \leq T \\|w_t| > |u|}} \frac{1}{\left[ n(\{i \leq t: |w_i| > |u| \}) + (\frac{\alpha_0}{|u|})^p \right]^{1- 1/p} }\\
    & \leq  \frac{1}{|u|^{p-1}} \int_{(\alpha_0/|u|)^p}^{ (\alpha_0/|u|)^p + (T-n(A_T)) } \frac{1}{x^{1-1/p}} dx\\
    & = \frac{p}{|u|^{p-1}} \left[(T-n(A_T))+ (\alpha_0/|u|)^p)^{1/p} - (\alpha_0/|u|)  \right]\\
    \intertext{by Lemma \ref{lemma:triangle}, set $x = T-n(A_T), a = (\alpha_0/|u|)^p$}
    & \leq \frac{p}{|u|^{p-1}} (T-n(A_T))^{1/p}\\
    & \leq \frac{p}{|u|^{p-1}} T^{1/p}
\end{align*}
Combining $A$ and $B$:
\begin{align*}
    \sum_{t=1}^{T} r_t(u) & \leq cp|u| T^{1/p} \left[ \left( \log \frac{ |u|^p T + \alpha_0^p }{\alpha_0^p} \right)^{(p-1)/p} + 1 \right] 
\end{align*}
\end{proof}

\begin{Lemma}\label{lemma:reg_ub_lb_with_number}
Consider the regularization function $\psi_t(w)$ defined in equation (\ref{eqn:reg_with_number}) with parameter $c_1,p_1,\alpha_1,c_2,p_2,\alpha_2$, displayed below for reference:
\begin{align*}
    \psi_t(w) =  r_t(w; c_1,p_1,\alpha_1)+ r_t(w; c_2,p_2,\alpha_2)
\end{align*}
When we set
\begin{align*}
        \begin{matrix}
            c_1 = 2 \sigma \sqrt{ \log\left(\frac{32}{\delta}\left[\log\left( 2^{T+1} \right)+2\right]^2\right)},  & c_2 =  32 b \log\left(\frac{224}{\delta}\left[\log\left( 1 + \frac{b}{\sigma} 2^{T+2}\right)+2\right]^2\right),\\
         p_1 = 2, ~~~~~~~~ p_2 = \log T, &\alpha_{1} = \epsilon/c_1, ~~~~~~~ \alpha_{2} = \epsilon \sigma/(4b(b+H)) \\
        \end{matrix}
    \end{align*}
as defined Theorem \ref{thm:main_sub_exp}, Then
\begin{align*}
    \sum_{t=1}^{T} \psi_t(w_t) &\geq c_1  \sqrt{\sum_{t=1}^T |w_t|^2 + \alpha_1^2}  +c_2  \max\left(\alpha_2,  \max_{t \in \{1, \cdots, T \}}|w_t| \right) -\epsilon \left( 1 + \frac{c_2  \sigma}{4b(b+H)} \right) \\
    \sum_{t=1}^{T} \psi_t(u) & \leq 2 c_1 |u| \sqrt{T} \left[ \sqrt{ \log \left( \frac{ T |u|^2 c_1^2 }{\epsilon^2} + 1  \right) } +  1 \right] + 3c_2 p_2 |u|  \left( \max \left( 0,  p_2 \left( \log \frac{ |u|}{\alpha_2 } + 1 \right) \right) + 4 \right)
\end{align*}
% In fact, the lemma broadly holds with minor changes as long as $p_1 = 2$ and $p_2 = \log T$ with the other constants being positive.
\end{Lemma}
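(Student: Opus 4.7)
The plan is to prove both inequalities by splitting $\psi_t = r_t(\cdot; c_1, p_1, \alpha_1) + r_t(\cdot; c_2, p_2, \alpha_2)$ into its two pieces, applying Lemma \ref{lemma:reg_ub_lb} separately to the $p_1 = 2$ and $p_2 = \log T$ components, and then performing constant-level simplifications using the specific choices of $c_1, c_2, \alpha_1, \alpha_2$ from Theorem \ref{thm:main_sub_exp}.

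For the \emph{lower bound}, I would apply equation (\ref{eqn: reg_LB_general}) to each piece. The $p_1 = 2$ piece yields $c_1\sqrt{\sum_t w_t^2 + \alpha_1^2} - c_1\alpha_1$, and since $\alpha_1 = \epsilon/c_1$ the additive term collapses to exactly $-\epsilon$. For the $p_2 = \log T$ piece the bound gives $c_2\bigl((\sum_t |w_t|^{\log T} + \alpha_2^{\log T})^{1/\log T} - \alpha_2\bigr)$. The key observation is that the $\ell_{\log T}$ norm dominates the $\ell_\infty$ norm (the right inequality in Lemma \ref{lemma:max_w_t}), so the parenthesized quantity is at least $\max(\max_t |w_t|, \alpha_2)$. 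Substituting $\alpha_2 = \epsilon\sigma/(4b(b+H))$ produces the additive term $-c_2 \epsilon\sigma/(4b(b+H))$, and combining both pieces yields the stated constant $\epsilon(1 + c_2 \sigma/(4b(b+H)))$.

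For the \emph{upper bound}, the $p_1 = 2$ piece is immediate from equation (\ref{eqn:reg_ub}): with $\alpha_1 = \epsilon/c_1$, the log argument $(T|u|^2 + \alpha_1^2)/\alpha_1^2$ rewrites as $T|u|^2 c_1^2/\epsilon^2 + 1$, giving the first summand $2 c_1 |u|\sqrt{T}\bigl[\sqrt{\log(\cdot)} + 1\bigr]$ exactly. The $p_2 = \log T$ piece requires more care. Applying (\ref{eqn:reg_ub}) with $p = \log T$ produces an overall factor $T^{1/\log T} = e$, which I absorb into the constant $3$ in the statement since $e < 3$. The remaining task is to bound $\bigl(\log((T|u|^{p_2} + \alpha_2^{p_2})/\alpha_2^{p_2})\bigr)^{(p_2-1)/p_2} + 1$ by $\max(0, p_2(\log(|u|/\alpha_2) + 1)) + 4$. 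Using $T = e^{p_2}$, the log argument simplifies to $(e|u|/\alpha_2)^{p_2} + 1$. I would then split on whether $e|u|/\alpha_2 \leq 1$ or $> 1$: in the small case the whole expression is at most $(\log 2)^{(p_2-1)/p_2} + 1 \leq 2$, which is absorbed into $+4$; in the large case I would apply $\log(x+1) \leq \log 2 + \log x$, then subadditivity $(a+b)^\alpha \leq a^\alpha + b^\alpha$ for $\alpha = (p_2-1)/p_2 \in [0,1]$, and finally $x^\alpha \leq x+1$, producing at most $p_2(\log(|u|/\alpha_2)+1) + 3$.

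The main obstacle is this second piece of the upper bound: managing the $(\cdot)^{(p_2-1)/p_2}$ exponent while keeping the trailing additive constant small enough to fit under $+4$ requires chaining several elementary inequalities together with the case split above. Everything else is routine substitution of the constants defined in Theorem \ref{thm:main_sub_exp}, combined with the direct outputs of Lemma \ref{lemma:reg_ub_lb} and the $\ell_\infty \leq \ell_{\log T}$ side of Lemma \ref{lemma:max_w_t}.
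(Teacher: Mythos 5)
Your proposal is correct and follows essentially the same approach as the paper: split $\psi_t$ into its two Huber pieces, apply Lemma~\ref{lemma:reg_ub_lb} to each, use the $\ell_\infty \le \ell_{p_2}$ side of Lemma~\ref{lemma:max_w_t} for the $p_2=\log T$ lower bound, and substitute $\alpha_1 c_1 = \epsilon$, $T^{1/\log T}=e\le 3$. The only difference is in how you remove the $(p_2-1)/p_2$ exponent in the $p_2$-piece upper bound: the paper inflates $\alpha_2^{p_2}$ to $e\,\alpha_2^{p_2}$ to force the log $\ge 1$ so the exponent can simply be dropped, then invokes Lemma~\ref{lemma:lazy_max} ($\log(x+e)\le\max(0,\log x)+e$), whereas you case-split on $e|u|/\alpha_2\lessgtr 1$ and chain $\log(x+1)\le\log 2+\log x$, subadditivity of $(\cdot)^\alpha$, and $x^\alpha\le x+1$; both land on the same additive constant inside the $+4$.
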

\begin{proof}
The proof builds on Lemma \ref{lemma:reg_ub_lb}. We show the algebra for $r_t$ with fixed tuple of parameters $(c_i, p_i, \alpha_i)$ for $i=1,2$, respectively. The main difference is due to the value of $p_i$.

For $i = 1$:
\begin{align*}
     \sum_{t=1}^{T} r_t(w_t; c_1,p_1,\alpha_1) & \geq c_1 \left( \sqrt{\sum_{t=1}^T |w_t|^2 + \alpha_1^2} - \alpha_1 \right)\\
     & = c_1  \sqrt{\sum_{t=1}^T |w_t|^2 + \alpha_1^2} - \epsilon
\end{align*}
By equation (\ref{eqn:reg_ub})
\begin{align*}
    \sum_{t=1}^{T}  r_t(u; c_1,p_1,\alpha_1) & \leq 2 c_1 |u| \sqrt{T} \left[ \sqrt{ \log \frac{T |u|^2 + \alpha_1^2 }{\alpha_1^2} } +  1 \right] \\
    & = 2 c_1 |u| \sqrt{T} \left[ \sqrt{ \log \left( \frac{ T |u|^2 c_1^2 }{\epsilon^2} + 1  \right) } +  1 \right] \\
\end{align*}
For $i = 2$: by equation (\ref{eqn: reg_LB_general}) and Lemma \ref{lemma:max_w_t}:
\begin{align*}
     \sum_{t=1}^{T} r_t(w_t; c_2,p_2,\alpha_2) & \geq c_2  \left(  \max\left(\alpha_2,  \max_{t \in \{1, \cdots, T \}}|w_t| \right) -\alpha_2 \right)\\
     & =  c_2  \max\left(\alpha_2,  \max_{t \in \{1, \cdots, T \}}|w_t| \right) -  \frac{ \epsilon c_2  \sigma}{4b(b+H)} 
\end{align*}
By equation (\ref{eqn:reg_ub})
\begin{align*}
   \sum_{t=1}^{T} r_t(w_t; c_2,p_2,\alpha_2) & \leq  c_2 p_2 |u| \exp(1) \left[ \left( \log \frac{T |u|^{p_2} + \alpha_2^{p_2} }{\alpha_2^{p_2}} \right)^{(p_2-1)/p_2} +  1 \right]\\
   & \leq  c_2 p_2 |u| \exp(1) \left[ \left( \log \frac{T |u|^{p_2} + \exp(1) \alpha_2^{p_2} }{\alpha_2^{p_2}} \right)^{(p_2-1)/p_2} +  1 \right]\\
   & \leq  c_2 p_2 |u| \exp(1) \left[ \left( \log \frac{T |u|^{p_2} + \exp(1) \alpha_2^{p_2} }{\alpha_2^{p_2}} \right) +  1 \right]\\
   & = c_2 p_2 |u| \exp(1) \left[ \log \left( T \frac{|u|^{p_2}}{\alpha_2^{p_2}} + \exp(1) \right) +  1 \right]\\
   \intertext{Since $T \frac{|u|^{p_2} }{\alpha_2^{p_2}} > 0$, invoke Lemma \ref{lemma:lazy_max} by substituting $x = T \frac{|u|^{p_2} }{\alpha_2^{p_2}}$}
   & \leq c_2 p_2 |u| \exp(1) \left( \max \left( 0,  \log \left( \frac{ T |u|^{p_2}}{{\alpha_2}_{p_2}} \right) \right) + \exp(1) + 1 \right)\\
   & = c_2 p_2 |u| \exp(1) \left( \max \left( 0,  p_2 \log \left( \frac{\exp(1) |u|}{\alpha_2 } \right) \right) + \exp(1) + 1 \right)\\
   & = c_2 p_2 |u| \exp(1) \left( \max \left( 0,  p_2 \left( \log \frac{ |u|}{\alpha_2 } + 1 \right) \right) + \exp(1) + 1 \right)\\
   & \leq 3c_2 p_2 |u|  \left( \max \left( 0,  p_2 \left( \log \frac{ |u|}{\alpha_2 } + 1 \right) \right) + 4 \right)
\end{align*}
Combining both cases of $i = 1, 2$, we complete the proof.
\end{proof}
Now we are at the stage to prove Theorem \ref{thm:main_sub_exp}. We
restate the Theorem for reference, followed by the proof.

\mainsubexp*
\begin{proof}
The proof is a composition of concentration bounds and our Lemmas for the regularizers, following the outline in Section \ref{sec:sub_exp}. Previously, we defined $\epsilon_t = \nabla \ell_t(\bw_t) - g_t$. $|\epsilon_t| \leq 2b$ and $\E[\epsilon_t^2 ] \leq \sigma^2$.

\textbf{Step 1 :} We first derive a concentration bound for the \textsc{Noise} term defined in equation (\ref{eqn:motivation1}). Notice that $\{|u|\epsilon_i \}$ is a martingale difference sequence. Then by Lemma \ref{lemma:scaled_sum_berinstein}, with probability at least $1-\frac{\delta}{4}$,
\begin{align*}
     \left|\sum_{t=1}^{T} u \epsilon_t \right| \leq  4|u|b \log\frac{8}{\delta} + |u| \sigma \sqrt{ 2 T \log \frac{8}{\delta}} \numberthis \label{eqn:concen_1}
\end{align*}
Now, we coarsely bound the output $w_t$ from Algorithm \ref{alg:sub_exp_combine}. At each round $t$, $w_t$ is updated by solving
\begin{align*}
    w_t &= x_t - y_t \nabla \psi_t(w_t)\\
    & = \frac{x_t'}{b+H} - \frac{y_t'}{H(b+H)} \nabla \psi_t(w_t)
\end{align*}
where $x_t', y_t' $ are outputs from some algorithm in which the regret at the origin is bounded by some positive $\epsilon$ with Lipschitz constant $1$. By Lemma \ref{lemma:coin_betting_exp_grow}, 
\begin{align*}
    & |x_t| \leq \frac{\epsilon}{2(b+H)} 2^t & |y_t| \leq \frac{\epsilon}{2H(b+H)} 2^t
\end{align*}
Now, define $k=b+H$. Then, by triangle inequality and $|\nabla \psi_t(w_t)| \leq H$
\begin{align*}
    |w_t| \leq |x_t| + |y_t| |\nabla \psi_t(w_t)| \leq \frac{\epsilon}{b+H} 2^t = \frac{\epsilon}{k} 2^t \numberthis \label{eqn:w_t_exp_grow}
\end{align*}
Finally, $\{w_t \epsilon_t\}$ is a martingale difference sequence that satisfies:
\begin{align*}
    &\E[w_t^2 \epsilon_t^2 \mid g_1, \cdots, g_t] \leq |w_t|^2 \sigma^2\\
    & |w_t \epsilon_t| \leq 2|w_t|b 
\end{align*}
 where $w_t$ depends on $g_1, \cdots, g_{t-1}$ only. Hence by Proposition \ref{prop:boundtosubexp} $\{w_t \epsilon_t \}$ is $(|w_t|\sigma, 4|w_t|b) $sub-exponential. Then we apply Theorem \ref{thm:mdsconcentration} by setting $\sigmaunit = \epsilon \sigma /k$
 to obtain that with probability at least $1-\frac{\delta}{4}$
\begin{align*}
    \left|\sum_{t=1}^{T} w_t \epsilon_t\right| &\leq 2\sqrt{ \sigma^2 \sum_{t=1}^{T} w_t^2  \log\left(\frac{32}{\delta}\left[\log\left( \left[ \frac{ k }{\epsilon} \sqrt{\sum_{t=1}^{T} w_t^2 } \right]_1\right)+2\right]^2\right)}\\
    &\qquad +8\max(\epsilon \sigma /k,4 b \max_{t\le T} |w_t|) \log\left(\frac{224}{\delta}\left[\log\left(\frac{\max(\epsilon \sigma /k, 4 b\max_{t\le T} |w_t|)}{\epsilon \sigma / k}\right)+2\right]^2\right) \numberthis \label{eqn: concen_2}
\end{align*}
We now simplify the $\log \log$ term with a worst case upper bound of $|w_t|$. From equation (\ref{eqn:w_t_exp_grow}), we have
\begin{align*}
    \begin{matrix}
    \max{i \leq t} |w_i| \leq \frac{\epsilon}{k}  2^t, & {} &
    & \sum_{i=1}^{t} w_i^2 \leq \frac{\epsilon^2}{k^2} \sum_{i=1}^{t} 4^i = \frac{\epsilon^2}{k^2} \frac{4(4^t-1)}{3}
    \end{matrix}
\end{align*}
Hence
\begin{align*}
    \log \left( \left[\frac{k }{\epsilon} \sqrt{ \sum_{i=1}^{t} w_i^2} \right]_1\right) & \leq \log \left( \left[\sqrt{2 \cdot 4^T }  \right]_1 \right)
    \leq \log\left( 2^{T+1}   \right)
\end{align*}
\begin{align*}
 \log\left(\frac{ \max(\epsilon \sigma/k, 4b\max_{t\le T} |w_t|)}{\epsilon \sigma /k}\right) & \leq  \log\left( 1 + \frac{4bk\max_{t\le T} |w_t|)}{\epsilon \sigma }\right)\\
    & \leq \log\left( 1 + \frac{b}{\sigma} 2^{T+2}\right)
\end{align*}
Notice that the  double-logarithm in (\ref{eqn: concen_2}) is critical to ameliorate this exponential bound on $|w_t|$!

Substitute the above inequalities into equation (\ref{eqn: concen_2}), and combining  with equation (\ref{eqn:concen_1}) by union bound, with probability at least $1- \frac{\delta}{2}$:
\begin{align*}
    \textsc{Noise} & \leq  4 |u|b \log\frac{8}{\delta} + |u|  \sigma \sqrt{2 T \log \frac{8}{\delta}}  + 2\sqrt{ \sigma^2 \sum_{t=1}^{T} w_t^2  \log\left(\frac{32}{\delta}\left[\log\left( 2^{T+1} \right)+2\right]^2\right)}\\
    &\qquad +8\max(\epsilon \sigma /k,4 b \max_{t\le T} |w_t|) \log\left(\frac{224}{\delta}\left[\log\left( 1 + \frac{b}{\sigma} 2^{T+2}\right)+2\right]^2\right)\\
    & =  4 |u|b \log\frac{8}{\delta} + |u|  \sigma \sqrt{2 T \log \frac{8}{\delta}}  + 2\sqrt{ \sigma^2 \sum_{t=1}^{T} w_t^2  \log\left(\frac{32}{\delta}\left[\log\left(  2^{T+1} \right)+2\right]^2\right)}\\
    &\qquad + 32b \max(\epsilon \sigma/4kb, \max_{t\le T} |w_t|) \log\left(\frac{224}{\delta}\left[\log\left( 1 + \frac{b}{\sigma} 2^{T+2}\right)+2\right]^2\right) \\
    & = 4 |u|b \log\frac{8}{\delta} + |u| \sigma \sqrt{ 2 T \log \frac{8}{\delta}}  + c_1 \sqrt{\sum_{t=1}^{T} w_t^2}  + c_2 \max( \epsilon \sigma / 4kb, \max_{t \leq T}|w_t| )
    \numberthis \label{eqn:b_delta}
\end{align*}

\textbf{Step 2 :} Next, we derive a bound on $\sum_{t=1}^T \langle g_t, w_t-u\rangle$. Our approach builds upon the motivation sketched in equation (\ref{eqn:motivation2}). We define $R^{\cA}_T(u) = \sum_{t=1}^T \hat \ell_t(w_t)-\hat \ell_t(u)$. Notice that $R^{\cA}_T(u)$ can then be bounded by Theorem \ref{thm:base_algo}. Thus, we copy over equation (\ref{eqn:motivation2}) below, and apply Theorem~\ref{thm:base_algo} and Lemma \ref{lemma:reg_ub_lb_with_number} to bound the regret
\begin{align*}
    \sum_{t=1}^{T} \langle g_t, w_t-u \rangle & \leq R_T^{\mathcal{A}}(u) - \sum_{t=1}^{T} \psi_t(w_t) + \sum_{t=1}^{T} \psi_t(u) \\
    & \leq 2\epsilon +  |u| \left[ \frac{3A}{2} \sqrt{N \max \left(0, \sum_{t=1}^{T} | g_t |^2  - | \nabla \psi_t(w_t) |^2 \right)} + \left(A^2H  + 2B (b+H) \right) N \right]\\
    & ~~~~~ -c_1  \sqrt{\sum_{t=1}^T |w_t|^2 + \alpha_1^2}  - c_2  \max\left(\alpha_2,  \max_{t \in \{1, \cdots, T \}}|w_t| \right)  + \epsilon \left( 1 + \frac{c_2  \sigma}{2b(b+H)} \right) \\ 
    & ~~~~~ + 2 c_1 |u| \sqrt{T} \left[ \sqrt{ \log \frac{T |u|^2 + \alpha_1^2 }{\alpha_1^2} } +  1 \right] + 3 c_2 p_2 |u| \left( \max \left( 0,  p_2 \left( \log \frac{ |u|}{\alpha_2 } + 1 \right) \right) + 4 \right) \numberthis \label{eqn:g_t_loss}
\end{align*}
where $N = 1 + \log \left(\frac{(b+H)^2 |u|^2 T^C }{\epsilon^2} + 1 \right)$ and $A,B, C$ are some positive constants. 

\textbf{Step 3 :} As shown in equation (\ref{eqn:motivation3}), the regret is derived by combining equation (\ref{eqn:b_delta}) and (\ref{eqn:g_t_loss}). We observe that the martingale concentration from Step 1 will be cancelled by the negative regularization terms from Step 2 to  complete the proof:
\begin{align*}
        R_T(u) & \leq \epsilon \left( 3 + \frac{8\sigma}{b+H}  \log\left(\frac{224}{\delta}\left[\log\left( 1 + \frac{b}{\sigma} 2^{T+2}\right)+2\right]^2\right)  
        \right) \\
        & \quad + |u| \left[4c_1(A^2+B)N  + \frac{3A}{2} \sqrt{N \max \left(0, \sum_{t=1}^{T} | g_t |^2  - | \nabla \psi_t(w_t) |^2 \right)} \right]\\
        & \quad + |u|b \Bigg[ 2BN + 4 \log \frac{8}{\delta} +  \frac{c_2 \log T}{b}\left((A^2+2B)N + 3 \left( \max \left( 0,  \log T \left( \log \frac{ |u|}{\alpha_2 } + 1 \right) \right) + 4 \right) \right)\Bigg]\\
        & \quad + |u|\sqrt{T} \left[ 2c_1 \left(\sqrt{\log \left(\frac{T|u|c_1^2}{\epsilon^2} + 1 \right)} +1\right) + \sigma\sqrt{2 \log \frac{8}{\delta}}  \right]\numberthis\label{eqn:initialbighighprobbound}
    \end{align*}
The above holds for probability at least $ 1- \frac{\delta}{2}$.
    
\textbf{Step 4:} For the final statement, we must remove the random quantity $\sum_{t=1}^T g^2_t$ appearing in the bound. Fortunately, this is achievable via a relatively straightforward application of Bernstein-style bounds. In particular, by Lemma \ref{lemma:truncated_grad_concentration}, with probability at least $1-\delta/2$
\begin{align*}
    \sum_{t=1}^{T} |g_t|^2 &\leq \frac{3}{2}T\sigma^2 +\frac{5}{3}b^2\log\frac{2}{\delta}
\end{align*}
Thus, further upper bound equation \ref{eqn:initialbighighprobbound} by union bound, we have with probability at least $1-\delta$,
\begin{align*}
        R_T(u) & \leq \epsilon \left( 3 + \frac{8\sigma}{b+H}  \log\left(\frac{224}{\delta}\left[\log\left( 1 + \frac{b}{\sigma} 2^{T+2}\right)+2\right]^2\right)  
        \right) \\
        & \quad + |u| \left[4c_1(A^2+B)N  + \frac{3A}{2} \sqrt{N \left(\frac{3}{2}T\sigma^2 +\frac{5}{3}b^2\log\frac{2}{\delta}\right)} \right]\\
        & \quad + |u|b \Bigg[ 2BN + 4 \log \frac{8}{\delta} +  \frac{c_2 \log T}{b}\left((A^2+2B)N + 3 \left( \max \left( 0,  \log T \left( \log \frac{ |u|}{\alpha_2 } + 1 \right) \right) + 4 \right) \right)\Bigg]\\
        & \quad + |u|\sqrt{T} \left[ 2c_1 \left(\sqrt{\log \left(\frac{T|u|c_1^2}{\epsilon^2} + 1 \right)} +1\right) + \sigma\sqrt{2 \log \frac{8}{\delta}}  \right]\numberthis\label{eqn:finalbighighprobbound}
    \end{align*}
\end{proof}
\section{Gradient Clipping for Heavy-tailed Gradients}\label{sec:app_clipping}
First, we show the property of truncated heavy-tailed gradients followed by the proof of Theorem \ref{thm:main_heavy}. These elementary facts can be found in \cite{zhang2020adaptive}, but we reproduce the proofs for completeness.
\begin{Lemma}[Clipped Gradient Properties]\label{lemma:clip_grad_property}
Suppose $\bg_t$ is heavy-tailed random vector, $\|\E[\bg_t]\|\le G$, $\E[\| \bg_t - \E[\bg_t]\|^\power]\le \sigma^\power$ for some $\power \in (1,2]$ and $\sigma \leq \infty$. Define truncated gradient $\hat{\bg}_t$ with a positive clipping parameter $\tau$:
\begin{align*}
    \hat{\bg}_t = \frac{\bg_t}{\|\bg_t\|} \min(\tau, \|\bg_t\|)
\end{align*}
Let $\boldsymbol{\mu} = \E[\bg_t]$. Then:
\begin{align*}
    \|\E[\hat{\bg}_t] - \boldsymbol{\mu}\| & \le \frac{2^{\power -1 } (\sigma^{\power} + G^{\power})}{\tau^{\power - 1}} \\
    \E[\|\hat{\bg}_t \|^2] & \le 2^{\power-1}{\tau^{2- \power}}(\sigma^{\power} +G^{\power})
\end{align*}
% Further, $\|\hat{\bg}_t\|$ satisfies $(\tau^{1-\power/2 }\sqrt{\sigma^{\power} + G^{\power} }, \tau)-$Bernstein's condition.
\end{Lemma}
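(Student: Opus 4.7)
The plan is to reduce both claims to a bound on $\E[\|\bg_t\|^\power]$, and then relate that raw moment to the centered moment $\sigma^\power$ and the mean norm $G$ via convexity.

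First I would bound $\E[\|\bg_t\|^\power]$. Writing $\bg_t = \boldsymbol{\mu} + (\bg_t - \boldsymbol{\mu})$ and using the triangle inequality together with the elementary inequality $(a+b)^\power \le 2^{\power-1}(a^\power + b^\power)$ for $\power \ge 1$ (a consequence of convexity of $x \mapsto x^\power$), one obtains
\begin{align*}
\E[\|\bg_t\|^\power] \le 2^{\power-1}\bigl(\|\boldsymbol{\mu}\|^\power + \E[\|\bg_t - \boldsymbol{\mu}\|^\power]\bigr) \le 2^{\power-1}(G^\power + \sigma^\power).
\end{align*}
This is the single quantitative hypothesis used for both claims.

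For the bias bound, I would write
\begin{align*}
\|\E[\hat{\bg}_t] - \boldsymbol{\mu}\| = \|\E[\hat{\bg}_t - \bg_t]\| \le \E[\|\hat{\bg}_t - \bg_t\|],
\end{align*}
and then observe that the integrand vanishes on the event $\{\|\bg_t\| \le \tau\}$ and equals $\|\bg_t\|-\tau \le \|\bg_t\|$ otherwise. Hence
\begin{align*}
\E[\|\hat{\bg}_t - \bg_t\|] \le \E\bigl[\|\bg_t\|\,\one\{\|\bg_t\| > \tau\}\bigr] \le \E\!\left[\|\bg_t\| \cdot \frac{\|\bg_t\|^{\power-1}}{\tau^{\power-1}}\right] = \frac{\E[\|\bg_t\|^\power]}{\tau^{\power-1}},
\end{align*}
and substituting the raw-moment bound finishes this part.

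For the second-moment bound, the key trick is $\|\hat{\bg}_t\| = \min(\tau,\|\bg_t\|)$, so for $\power \in (1,2]$
\begin{align*}
\|\hat{\bg}_t\|^2 = \min(\tau,\|\bg_t\|)^{2-\power}\,\min(\tau,\|\bg_t\|)^{\power} \le \tau^{2-\power}\|\bg_t\|^\power,
\end{align*}
and taking expectations then applying the raw-moment bound gives the claimed $2^{\power-1}\tau^{2-\power}(\sigma^\power + G^\power)$. I do not anticipate a genuine obstacle: the only place requiring any care is remembering that the standard centered-to-raw moment conversion costs the factor $2^{\power-1}$ (as opposed to the naive $2^\power$ from triangle inequality), which is exactly what makes the constants in the lemma match.
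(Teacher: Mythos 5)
Your proposal is correct and takes essentially the same approach as the paper: both rely on the raw-moment bound $\E[\|\bg_t\|^\power]\le 2^{\power-1}(\sigma^\power+G^\power)$ obtained via convexity, the indicator bound $\one\{\|\bg_t\|>\tau\}\le(\|\bg_t\|/\tau)^{\power-1}$ for the bias, and $\min(\tau,\|\bg_t\|)^2\le \tau^{2-\power}\|\bg_t\|^\power$ for the second moment. The only difference is organizational — you factor out the raw-moment bound as a shared ingredient while the paper inlines the convexity step twice — which is if anything a cleaner presentation.
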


\begin{proof}
By Jensen's inequality
\begin{align*} 
    \| \E[\hat{\bg}_t] - \boldsymbol{\mu} \|  & \leq \E[\| \hat{\bg}_t - \bg_t \|]\\
    & \leq \E [\| \bg_t \| \one [\| \bg_t \| \geq \tau]]\\
    & \leq \E[\| \bg_t\|^{\power} / \tau^{\power-1}]\\
    & \leq \E[ ( \| \bg_t - \boldsymbol{\mu} \| + \| \boldsymbol{\mu} \|)^{\power} / \tau^{\power-1}]\\
    & =  \frac{2^{\power}}{\tau^{\power-1}} \E[(\frac{1}{2}\| \bg_t - \boldsymbol{\mu} \| + \frac{1}{2}\|  \boldsymbol{\mu} \|  )^2]\\
    & \leq \frac{2^{\power-1}}{\tau^{\power-1}} \left(\E[\| \bg_t - \boldsymbol{\mu} \|^2 ] + \E[\| \boldsymbol{\mu} \|^2]  \right)\\
    & \leq \frac{2^{\power -1 } (\sigma^{\power} + G^{\power})}{\tau^{\power - 1}}
\end{align*}
The second last inequality was due to convexity and linearity of expectation. In term of the variance, the algebra is similar:
\begin{align*}
    \E[\| \hat{\bg}_t \|^2] & \leq \E[\| \bg_t \|^\power \tau^{2-\power}]\\
    & \leq \E [(\| \bg_t - \boldsymbol{\mu} \| + \| \boldsymbol{\mu} \| )^{\power} \tau^{2-\power} ]\\
    & = 2^{\power} \tau^{2-\power} \E [( \frac{1}{2} \| \bg_t - \boldsymbol{\mu} \| + \frac{1}{2} \| \boldsymbol{\mu} \| )^{\power} ]\\
    & \leq 2 \left( \E [\| \bg_t - \boldsymbol{\mu} \|^{\power}] + \E[\| \boldsymbol{\mu} \|^{\power} ]\right)\\
    & \leq 2^{\power-1}{\tau^{2- \power}}(\sigma^{\power} +G^{\power})
\end{align*}
% Now, we show the variance bound implies that $x_t = \|\hat{\bg}_t \|$ satisfies $( \tau^{1-\power/2 }\sqrt{\sigma^{\power} + G^{\power}}, \tau)-$Bernstein's condition, since $ x_t \leq \tau$ and
% \begin{align*}
%     |\E[x_t^k ]| =  |\E[x_t^{2} x_t^{k-2} ]| \leq  2^{\power-1}{\tau^{2- \power}}(\sigma^{\power} +G^{\power}) \tau^{k-2} \leq \frac{ k! \tau^{2-\power} (\sigma^{\power}+G^{\power}) \tau^{k-2}}{2} , \text{ for } k = 3, 4, \cdots
% \end{align*}
\end{proof}

We now restate Theorem \ref{thm:main_heavy} followed by its proof.
\mainheavy*
\begin{proof}
We copy over $\phi(w)$ and regret formula in equation (\ref{eqn:lin_loss_heavy_tail}) and (\ref{eqn:motivation_heavy_2}) from section \ref{sec:heavy-tailed} here, as the analysis will be following the cancellation-by-regularization strategy described in section \ref{sec:heavy-tailed}.
\begin{align*}
    & \phi(w) = 2^{\power-1}(\sigma^{\power} + G^{\power}) |w| /\tau^{\power-1} & 
    \hat{\ell}_t (w) = \langle \E[\hat{g}_t] , w-u \rangle + \phi(w)
\end{align*}
\begin{align*}
    R_T(u) 
    & \leq \underbrace{ \sum_{t=1}^{T} \left(\langle \nabla \ell_t(w_t) - \E [ \hat{g}_t], w_t - u \rangle - \phi(w_t) + \phi(u)\right)}_{D} + \underbrace{\sum_{t=1}^{T} \hat{\ell}_t(w_t) - \hat{\ell}_t(u)}_{E}
\end{align*}
We split the regret into two parts. The term $D$ is controlled by cancellation-by-regularization through careful choice of $\phi_t(w)$ and clipping parameter $\tau$. Term $E$ is controlled in high probability through Algorithm \ref{alg:sub_exp_combine} (which uses a \emph{different} cancellation-by-regularization strategy) by sending $\hat{g}_t + \nabla \phi(w_t)$ as the $t^{th}$ subgradient. Specifically, we can view $\hat{g}_t + \nabla \phi(w_t)$ as a sub-exponential and bounded noisy gradient and $\E[\hat{g}_t + \nabla \phi(w_t)] = \nabla \hat{\ell}_t (w_t)$, so that Theorem \ref{thm:main_sub_exp} provides a high probability bound for term $E$.

\textbf{First, we bound $D$,} and show it's independent of $|w_t|$
\begin{align*}
    D &\leq \sum_{t=1}^{T} |\nabla \ell_t(w_t) - \E[\hat{g}_t] | (|w_t| + |u|) - 2^{\power-1}(\sigma^{\power} + G^{\power})  /\tau^{\power-1} \sum_{t=1}^{T} |w_t| + 2^{\power-1}(\sigma^{\power} + G^{\power}) T |u| /\tau^{\power-1}\\
    \intertext{since $\nabla \ell_t(w_t) = \E[g_t]$, by Lemma \ref{lemma:clip_grad_property}, $|\nabla \ell_t(w_t) - \E[\hat{g}_t]| \leq 2^{\power-1}(\sigma^{\power} + G^{\power})  /\tau^{\power-1}$.}
    & \leq 2^{\power} T|u| (\sigma^{\power} + G^{\power})/\tau^{\power-1}\\
    \intertext{set $\tau = T^{1/\power} (\sigma^{\power}+G^{\power})^{1/\power}$}
    & = 2^{\power}|u| T^{1/\power} (\sigma^{\power} + G^{\power})^{1/\power}\\
    & \leq 4 |u| T^{1/\power} (\sigma^{\power} + G^{\power})^{1/\power}
\end{align*}
\textbf{Now we bound $E$} in high probability with $\tau = T^{1/\power}(\sigma^{\power}+G^{\power})^{1/\power}$. We sometimes will substitute the value of $\tau$ and sometimes leave it as it is during the derivation for convenience. Define the noise as $\epsilon_t$,
\begin{align*}
    \epsilon_t &= \nabla \hat{\ell}_t(w_t) - (\hat{g}_t + \nabla \phi(w_t)) = \E[\hat{g}_t] - \hat{g}_t
\end{align*}
From the definition of gradient clipping, $|\hat{g}_t| \leq \tau$. Also by Lemma \ref{lemma:clip_grad_property}, 
\begin{align*}
    \E[ \hat{g}_t^2 | w_t] \leq 2\tau^{2-\power} (\sigma^{\power} + G^{\power}) = 2\tau^2 T^{-1}
\end{align*}
Hence term $E$ can be bounded by Theorem \ref{thm:main_sub_exp}, where we set the following constants for Algorithm \ref{alg:sub_exp_combine},
\begin{align*}
        \begin{matrix}
            c_1 = 2 \tau \sqrt{ \frac{2}{T}\log\left(\frac{32}{\delta}\left[\log\left( 2^{T+1} \right)+2\right]^2\right)},  & c_2 =  32 \tau \log\left(\frac{224}{\delta}\left[\log\left( 1 + \sqrt{T} 2^{T+5/2}\right)+2\right]^2\right),\\
         p_1 = 2, ~~~~~~~~~~~~ p_2 = \log T, &\alpha_{1} = \epsilon/c_1, ~~~~~~~ \alpha_{2} = (\sqrt{2}\epsilon) /(4\sqrt{T}(\tau+H)) \\
        \end{matrix}
    \end{align*}
where $H = c_1p_1+c_2p_2$. Let $N = 1 + \log \left(\frac{(\tau+H)^2 |u|^2 T^C }{\epsilon^2} + 1 \right)$. Then by equation (\ref{eqn:initialbighighprobbound}), with probability at least $1-\frac{\delta}{2}$ for some positive $A,B,C$,
\begin{align*}
        E & \leq \epsilon \left( 3 + \frac{8 \tau \sqrt{2/T}}{\tau+H}  \log\left(\frac{224}{\delta}\left[\log\left( 1 + \sqrt{T} 2^{T+5/2}\right)+2\right]^2\right)  
        \right) \\
        & \quad + |u| \left[4c_1(A^2+B)N  + \frac{3A}{2} \sqrt{N \max \left(0, \sum_{t=1}^{T} | g_t |^2  - | \nabla \psi_t(w_t) |^2 \right)} \right]\\
        & \quad + |u|\tau \Bigg[ 2BN + 4 \log \frac{8}{\delta} +  \frac{c_2 \log T}{\tau}\left( (A^2+2B)N + 3 \left( \max \left( 0,  \log T \left( \log \frac{ |u|}{\alpha_2 } + 1 \right) \right) + 4 \right)  \right)\Bigg]\\
        & \quad + |u|\sqrt{T} \left[ 2c_1 \left(\sqrt{\log \left(\frac{T|u|c_1^2}{\epsilon^2} + 1 \right)} +1\right) + 2 \tau \sqrt{\frac{1}{T} \log \frac{8}{\delta}}  \right]
    \end{align*}
Combining $D$, $E$ and substitute $\tau$ when convenient and group in terms of the product of $|u|$ with $T$ to some power of $\power$
\begin{align*}
    R_T(\bu) &\leq \epsilon \left( 3 + \frac{8 \tau \sqrt{2/T}}{\tau+H}  \log\left(\frac{224}{\delta}\left[\log\left( 1 + \sqrt{T} 2^{T+5/2}\right)+2\right]^2\right)  
        \right) \\
    & \quad + |u| \frac{3A}{2} \sqrt{N \max\left(0, \sum_{t=1}^{T}  | \hat{g}_t + \nabla \phi(w_t) |^2  - | \nabla \psi_t(w_t) |^2 \right)} \\
    & \quad + |u|T^{1/\power}(\sigma^{\power} + G^{\power})^{1/\power} \Bigg[ 2BN + 2 \sqrt{\log \frac{8}{\delta}} + 4 \log \frac{8}{\delta} \\
    & \qquad + 4\sqrt{2 \log \left( \frac{32}{\tau} [\log(2^{T+1} ) +2]^2 \right)} \left( \frac{1}{\sqrt{T}}(A^2 + 2B)N +\sqrt{\log \left(\frac{T|u|c_1^2}{\epsilon^2} + 1 \right)} +1 \right)\\
    & \qquad + \frac{c_2 \log T}{\tau} \left(2 (A^2+B)N + 3\max \left( \max \left( 0,  \log T \left( \log \frac{ |u|}{\alpha_2 } + 1 \right) \right) + 4 \right)  + 4 \right) \Bigg] \numberthis \label{eqn:thm_heavy_1}
\end{align*}
For the final statement, notice that although $\hat{g}_t$ is a random quantity, we can bound $\sum_{t=1}^T |\hat g_t|^2$ with high probability. By Lemma \ref{lemma:clip_grad_property}, $\E[\hat{g}_t^2] \leq 2^{\power-1} \tau^{2-\power} (\sigma^\power + G^{\power})$ and note $|\hat{g}_t| \leq \tau$. Thus by Lemma \ref{lemma:truncated_grad_concentration}, with probability at least $1-\delta/2$
\begin{align*}
    \sum_{t=1}^{T} |\hat{g}_t|^2 &\leq 3T 2^{\power -2} \tau^{2-\power}(\sigma^{\power} + G^{\power}) + 2\tau^2 \log \frac{2}{\delta}\\
    & \leq \tau^2(3+2 \log\frac{2}{\delta})
\end{align*}
Finally, since equation (\ref{eqn:thm_heavy_1}) holds for probability as least $1-\frac{\delta}{2}$, we further upperbound $|\hat{g}_t|^2$ by union bound for our final regret guarantee with probability at least $1-\delta$
\begin{align*}
     R_T(u) &\leq \epsilon \left( 3 + \frac{8 \tau \sqrt{2/T}}{\tau+H}  \log\left(\frac{224}{\delta}\left[\log\left( 1 + \sqrt{T} 2^{T+5/2}\right)+2\right]^2\right)  
        \right) \\
    & \quad + |u| \frac{3A}{2} \sqrt{N \max\left(0,  \tau^2(3+2 \log \frac{2}{\delta}) + \sum_{t=1}^{T} |\nabla \phi(w_t) |^2  - | \nabla \psi_t(w_t) |^2 \right)} \\
    & \quad + |u|T^{1/\power}(\sigma^{\power} + G^{\power})^{1/\power} \Bigg[ 2BN + 2 \sqrt{\log \frac{8}{\delta}} +  4 \log \frac{8}{\delta} \\
    & \qquad + 4 \sqrt{ 2 \log\left(\frac{32}{\delta}\left[\log\left( 2^{T+1} \right)+2\right]^2\right)}\left( \frac{1}{\sqrt{T}}(A^2 + 2B)N +\sqrt{\log \left(\frac{T|u|c_1^2}{\epsilon^2} + 1 \right)} +1 \right)\\
    & \qquad + \frac{c_2 \log T}{\tau} \left(2 (A^2+B)N + 3\max \left( 0,  \log T \log \left( \frac{3 |u|}{\alpha_2 } \right) \right) + 4 \right) \Bigg]
\end{align*}
where $ |\nabla \phi(w_t)| \leq 2^{\power-1} T^{\frac{1}{\power} - 1} (\sigma^{\power} + G^{\power})^{\frac{1}{\power}} $
\end{proof}

\section{Dimension-free Gradient Clipping for Heavy-tailed Gradients}\label{sec:app_highdim}

\begin{Lemma}(Unit Ball Domain Algorithm High Probability)\label{lemma:prob_A_nd}
 Suppose $\{\bg_t\}$ is a sequence of  heavy-tailed stochastic gradient vectors such that $\E[|\bg_t|] \leq G$, $\E[\|\bg_t - \E[\bg_t] \|^\power] \leq \sigma^{\power}$ for some $\power \in (1, 2]$. Let $\hat{\bg}_t$ be the clipped gradient $\hat{\bg}_t = \bg_t / \| \bg_t \| \min(\tau, \|\bg_t\|)$, where $\tau$ is set as $T^{1/\power} (\sigma^{\power} + G^{\power})^{1/\power}$. The constrained domain on unit ball ensures  $\|\bw_t\|, \|\bu \| \leq 1$. Then with probability at least $1-\delta$, algorithm \ref{alg:a_nd} guarantees 
 \begin{align*}
    R_T^{nd}(\bu) & \leq \sum_{t=1}^{T} \langle \E[\bg_t], \bw_t - \bu \rangle\\
    & \leq T^{1/\power}(\sigma^{\power} + G^{\power})^{1/\power} \Bigg( \frac{1}{2}\|\bu\|^2 + \left( \frac{3}{2} + \frac{20}{3} \log\frac{2}{\delta} \right)  + 15  \sqrt{ \log \frac{160}{\delta} } \\
    & \qquad  + 184 \log \left(\frac{448}{\delta} \left[ \log \left( 2 + \frac{16}{\tau}\right) + 1 \right]^2 \right) + 4 \Bigg) 
 \end{align*}
\end{Lemma}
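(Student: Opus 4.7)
The plan is to split $R_T^{nd}(\bu)=\sum_{t=1}^T\langle \E[\bg_t],\bw_t-\bu\rangle$ into the natural three pieces
\[
R_T^{nd}(\bu)=\underbrace{\sum_{t=1}^{T}\langle \E[\bg_t]-\E[\hat{\bg}_t],\bw_t-\bu\rangle}_{\text{bias}}+\underbrace{\sum_{t=1}^{T}\langle \E[\hat{\bg}_t]-\hat{\bg}_t,\bw_t-\bu\rangle}_{\text{martingale}}+\underbrace{\sum_{t=1}^{T}\langle \hat{\bg}_t,\bw_t-\bu\rangle}_{\text{FTRL regret on clipped losses}},
\]
and control each summand separately using the fact that $\|\bw_t\|,\|\bu\|\le 1$.

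For the bias, Lemma~\ref{lemma:clip_grad_property} gives $\|\E[\bg_t]-\E[\hat{\bg}_t]\|\le 2^{\power-1}(\sigma^{\power}+G^{\power})/\tau^{\power-1}$, so the bias is at most $2T\cdot 2^{\power-1}(\sigma^{\power}+G^{\power})/\tau^{\power-1}$; with the choice $\tau=T^{1/\power}(\sigma^{\power}+G^{\power})^{1/\power}$ this collapses to $2^{\power}\tau\le 4\tau$, which matches the $+4$ in the stated bound (after pulling out the factor $\tau$). For the deterministic FTRL term I plan to use the standard lazy-FTRL / regularized-leader inequality with $\eta=1/\tau$: $\sum\langle \hat{\bg}_t,\bv_t-\bu\rangle\le \tfrac{1}{2\eta}\|\bu\|^2+\eta\sum_t\|\hat{\bg}_t\|^2=\tfrac{\tau}{2}\|\bu\|^2+\tfrac{1}{\tau}\sum_t\|\hat{\bg}_t\|^2$. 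Because $\|\hat{\bg}_t\|\le \tau$ and $\E[\|\hat{\bg}_t\|^2]\le 2^{\power-1}\tau^{2-\power}(\sigma^{\power}+G^{\power})=2^{\power-1}\tau^2/T$, a one-sided Bernstein bound (Lemma~\ref{lemma:truncated_grad_concentration}) yields $\sum_t\|\hat{\bg}_t\|^2\le \tfrac{3}{2}\tau^2+O(\tau^2\log(1/\delta))$ with probability $1-\delta/2$, which after division by $\tau$ gives the $\tau\bigl(\tfrac{3}{2}+\tfrac{20}{3}\log(2/\delta)\bigr)$ piece.

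The martingale term is the main obstacle because we need a bound that scales as $\tau$ rather than $\tau\sqrt{T}$. The key observations are that $X_t:=\langle \E[\hat{\bg}_t]-\hat{\bg}_t,\bw_t-\bu\rangle$ forms a martingale difference sequence (since $\bw_t$ is $\mathcal{F}_{t-1}$-measurable), $|X_t|\le 2\|\E[\hat{\bg}_t]-\hat{\bg}_t\|\le 4\tau$, and the conditional variance satisfies $\E[X_t^2\mid\mathcal{F}_{t-1}]\le 4\E[\|\hat{\bg}_t\|^2]\le 2^{\power+1}\tau^{2-\power}(\sigma^{\power}+G^{\power})=2^{\power+1}\tau^2/T$, so the total predictable variance is bounded by $2^{\power+1}\tau^2$. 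Applying a Bernstein-type bound with these parameters (either directly, or via a time-uniform/stitched inequality analogous to Theorem~\ref{thm:mdsconcentration} to get the double-log factor $[\log(2+16/\tau)+1]^2$ that appears in the statement) produces the $15\tau\sqrt{\log(160/\delta)}$ and $184\tau\log(\cdots)$ contributions. The hardest bookkeeping is lining up the exact constants; conceptually, once the decomposition is written down, all three pieces are standard, and adding them up and pulling out the common factor $\tau=T^{1/\power}(\sigma^{\power}+G^{\power})^{1/\power}$ gives exactly the claimed bound after a union bound over the two probabilistic events.
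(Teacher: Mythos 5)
Your proposal matches the paper's proof of Lemma~\ref{lemma:prob_A_nd} essentially exactly: the same three-term decomposition (FTRL regret on $\hat{\bg}_t$, deterministic bias, and a martingale noise term), with the FTRL piece controlled via the standard lazy-FTRL bound plus Lemma~\ref{lemma:truncated_grad_concentration}, the bias controlled via Lemma~\ref{lemma:clip_grad_property} and the choice of $\tau$, and the martingale piece controlled via a time-uniform Bernstein-style inequality with the variance proxy $\E[\|\hat{\bg}_t\|^2]\le 2\tau^2/T$ summing to $O(\tau^2)$.

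Two minor bookkeeping items worth flagging. First, the standard FTRL inequality the paper invokes (Corollary 7.8 of Orabona) is $\sum_t\langle\hat{\bg}_t,\bv_t-\bu\rangle\le\tfrac{1}{2\eta}\|\bu\|^2+\tfrac{\eta}{2}\sum_t\|\hat{\bg}_t\|^2$; you wrote $\eta\sum_t\|\hat{\bg}_t\|^2$ without the $\tfrac{1}{2}$, which is a loose (but not wrong) version and would give constants a factor of~2 off. Second, the paper actually feeds the martingale term through the \emph{vector-valued} concentration bound, Theorem~\ref{thm:vectorconcentration}, rather than the scalar Theorem~\ref{thm:mdsconcentration}; the specific constants $15$ and $184$ and the $[\log(2+16/\tau)+1]^2$ factor in the Lemma statement come from that vector theorem, so if you want to reproduce the stated constants exactly, you should cite Theorem~\ref{thm:vectorconcentration}. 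Conceptually either route works, as you correctly anticipate.
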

\begin{proof}
The analysis follows similar to the $1d$ analysis as seen in Theorem \ref{thm:main_heavy}. We run a standard Follow-the-regularized-leader (FTRL) algorithm with $L_2$ regularization on clipped gradient $\hat{\bg}_t$ instead of the true gradient $\nabla \ell_t(\bw_t)$. A `bias' term was introduced by $\E[\bg_t] - \E[\hat{\bg}_t]$ which can be regulated to be sublinear by the clipping parameter $\tau$. In addition, a `noise' term due to $\E[\hat{\bg}_t] - \hat{\bg}_t$ can be bounded with high probability. We decompose the regret and label the corresponding parts below,
 \begin{align*}
    R_T^{nd}(\bu) &= \sum_{t=1}^{T} \langle \nabla \ell_t(\bw_t), \bw_t - \bu \rangle\\
    & = \sum_{t=1}^{T} \langle \hat{\bg}_t, \bw_t - \bu \rangle + \sum_{t=1}^{T} \langle \nabla \ell_t(\bw_t)  -\E[\hat{\bg}_t], \bw_t - \bu \rangle + \sum_{t=1}^{T} \langle \E[\hat{\bg}_t] - \hat{\bg}_t,  \bw_t - \bu \rangle\\
    & \leq \underbrace{\| \sum_{t=1}^{T} \langle \hat{\bg}_t, \bw_t - \bu \rangle \|}_{\text{FTRL}} + \underbrace{\sum_{t=1}^{T} \langle \nabla \ell_t(\bw_t)  -\E[\hat{\bg}_t], \bw_t - \bu \rangle}_{\text{`bias'}} + \underbrace{\| \sum_{t=1}^{T} \langle \E[\hat{\bg}_t] - \hat{\bg}_t,  \bw_t - \bu \rangle \|}_{\text{`noise'}}
\end{align*}
Now we will bound the regret in three step.

\textbf{Step 1 :}
For the part controlled by a FTRL algorithm with fixed $L_2$ regularization weight $\eta$ (see Corollary 7.8 in \cite{orabona2019modern})
\begin{align*}
    \sum_{t=1}^{T} \langle \hat{\bg}_t, \bw_t - \bu \rangle &\leq \frac{1}{2\eta} \|\bu \|^2 + \frac{\eta}{2} \sum_{t=1}^{T} \| \hat{\bg}_t \|^2
    \intertext{Since $\|\hat{\bg}_t \| \leq \tau$, and by Lemma \ref{lemma:clip_grad_property}, $\E[\|\hat{\bg}_t \|^2 ]\leq 2^{\power -1} \tau^{2-\power} (\sigma^{\power} +G^{\power}) \leq 2\tau/T$. By Proposition \ref{prop:boundtosubexp}, $\bg_t$ is $(\tau \sqrt{2/T}, 2 \tau)$ sub-exponential. Hence by Lemma \ref{lemma:truncated_grad_concentration}, with probability at least $1-\delta$}
    & \leq \frac{1}{2\eta} \|\bu \|^2 +  \frac{\eta}{2} \left(3\tau^2 + \frac{20}{3}\tau^2 \log \frac{1}{\delta} \right)\\
    \intertext{set $\eta = 1/\tau$}
    & \leq \frac{\tau}{2}\|\bu\|^2 + \tau \left( \frac{3}{2} + \frac{20}{3} \log\frac{1}{\delta} \right)
\end{align*}
\textbf{Step 2 :}
For the `bias' term, note $\nabla \ell_t(\bw_t) = \E[\bg_t]$, by Lemma \ref{lemma:clip_grad_property}, $\|\E[\hat{\bg}_t] - \E[\bg_t]\| \le \frac{2^{\power-1} (\sigma^{\power} + G^{\power}) }{\tau^{\power-1}}$, and the constrained domain suggests $\|\bw_t - \bu \| \leq 2$
\begin{align*}
    \sum_{t=1}^{T} \langle \nabla \ell_t(\bw_t)  -\E[\hat{\bg}_t], \bw_t - \bu \rangle & \leq \frac{2^{\power}T}{\tau^{\power -1 }}(\sigma^{\power} + G^{\power})
\end{align*}
\textbf{Step 3 :}
For a high probability bound for the `noise' term, let $X_t = \langle \E[\hat{\bg_t}] - \hat{\bg}_t ,  \bw_t - \bu \rangle$, hence $\{ X_t\}$ is a vector valued MDS adapted to filtration $\mathcal{F}_t$ with the following bound almost surely,
\begin{align*}
    X_t & \leq( \E[\hat{\bg_t}]  + \|\hat{\bg}_t\| \| ) \bw_t - \bu \|\\
    & \leq 2\tau \|  \bw_t - \bu \|\\
\end{align*}
\begin{align*}
    \E[\|X_t\|^2 \mid \mathcal{F}_t] & \leq \|\bw_t - \bu \|^2 \E[\| \hat{\bg}_t \|^2 \mid \mathcal{F}_{t}] \\
    \intertext{by Lemma \ref{lemma:clip_grad_property}}
    & \leq \|\bw_t - \bu \|^2 2^{\power -1} \tau^{2-\power}(\sigma^{\power} + G^{\power})\\
    & \leq 2 \|\bw_t - \bu \|^2 T^{2/\power - 1}(\sigma^{\power} + G^{\power})^{2/\power}\\
    & = 2 \| \bw_t - \bu \|^2 \tau^2 T^{-1}
\end{align*}
and both bounds are $\mathcal{F}_{t-1}$ measurable. By proposition \ref{prop:boundtosubexp} $ \{\langle \E[\hat{\bg}_t] - \hat{\bg}_t,  \bw_t - \bu \rangle \}$ is $\{ \sqrt{ 2} \|\bw_t - \bu \| \tau T^{-1/2}
, 4 \tau \|  \bw_t - \bu \| \} $ sub-exponential noise. Use Theorem \ref{thm:vectorconcentration} and set $\sigmaunit = \tau$, with probability at least $1-\delta$,
\begin{align*}
    \left\|\sum_{t=1}^{T} X_t\right\|&\le 5\sqrt{2 \frac{\tau^2}{T} \sum_{t=1}^T \|\bw_t - \bu \|^2 \log\left(\frac{16}{\delta}\left[\log\left(\left[\sqrt{ \frac{2}{T} \sum_{t=1}^{T}\|\bw_t - \bu \|^2 }\right]_1\right) +1\right]^2\right)}\\
    &\qquad + 23 \max(\tau, 4\tau \max_{t\le T} \| \bw_t - \bu \| )\log\left(\frac{224}{\delta}\left[\log\left(2 \max(1, \frac{4}{\tau} \max_{t\le T} \| \bw_t  - \bu \|)\right)+1\right]^2\right)\\
    & \le 15 \tau \sqrt{ \log \frac{80}{\delta} } + 184 \tau \log \left(\frac{224}{\delta} \left[ \log \left( 2 + \frac{16}{\tau}\right) + 1 \right]^2 \right) \\
    & = T^{1/\power}(\sigma^{\power} + G^{\power})^{1/\power} \left( 15 \sqrt{ \log \frac{80}{\delta} } + 184 \log \left(\frac{224}{\delta} \left[ \log \left( 2 + \frac{16}{\tau}\right) + 1 \right]^2 \right) \right)
\end{align*}
\textbf{Composition :}
Combining the high probability bound from step 1 and 3 by union bound and a deterministic bound from step 2, with probability at least $1-\delta$, we have the following regret guarantee, 
\begin{align*}
    R_T^{nd}(\bu)  &\leq \frac{\tau}{2}\|\bu\|^2 + \tau \left( \frac{3}{2} + \frac{20}{3} \log\frac{2}{\delta} \right) +  \frac{2^{\power}T}{\tau^{\power -1 }}(\sigma^{\power} + G^{\power}) \\
    & \qquad +T^{1/\power}(\sigma^{\power} + G^{\power})^{1/\power} \left( 15 \sqrt{ \log \frac{160}{\delta} } + 184 \log \left(\frac{448}{\delta} \left[ \log \left( 2 + \frac{16}{\tau}\right) + 1 \right]^2 \right) \right)
    \intertext{substitute $\tau = T^{1/\power} (\sigma^{\power} + G^{\power})^{1/\power}$ and group terms by factorizing some power of $T$}
    & \leq T^{1/\power}(\sigma^{\power} + G^{\power})^{1/\power} \Bigg( \frac{1}{2}\|\bu\|^2 + \left( \frac{3}{2} + \frac{20}{3} \log\frac{2}{\delta} \right)  + 15  \sqrt{ \log \frac{160}{\delta} } \\
    & \qquad+ 184 \log \left(\frac{448}{\delta} \left[ \log \left( 2 + \frac{16}{\tau}\right) + 1 \right]^2 \right) + 4 \Bigg) 
\end{align*}
\end{proof}
We restate Theorem \ref{thm:heavy_nd} for reference followed by its proof \heavynd*
\begin{proof}
This result  relies on the reduction from dimension-free learning to 1d learning presented in Theorem 2 of \cite{cutkosky2018black}. This result implies that the regret \ref{alg:heavy_tail_algo_nd} can be be bounded as:
\begin{align*}
    \sum_{t=1}^{T} \ell_t(\bw_t) - \ell_t(\bu) & \leq  R_T^{1d}(\|\bu\|) + \|\bu \| R_T^{nd} (\bu / \| \bu \|)
\end{align*}
where 
\begin{align*}
    R_T^{1d}(\|u\|)&=\sum_{t=1}^T \langle g^{1d}_t, x_t - \|u\|\rangle\\
    R_T^{nd}(\bu/\|\bu\|)&=\sum_{t=1}^T \langle \bg_t, \bv_t - \bu/\|\bu\|\rangle
\end{align*}

First, by setting Algorithm \ref{alg:heavy_tail_algo} as $\mathcal{A}^{1d}$ is , Theorem \ref{thm:main_heavy} provides a bound for $R_T^{1d} (\| \bu\|)$ with appropriate parameters set in the theorem. We run $\mathcal{A}^{1d}$ on subgradients $g_t = \langle \bg_t, \bv_t \rangle$, and $\| \bv_t \| \leq 1$. We show $|\E[g_t]|$ and $\E[\| g_t - \E[g_t] \|^{\power}]$ are bounded. Notice that $g_t$ only depends on $\bg_1,\cdots, \bg_{t-1}$. Thus, by tower rule,  $\E[g_t] = \E[ \langle \bg_t, \bv_t \rangle] = \E[  \langle \E[\bg_t], \bv_t \rangle \mid \bg_1, \cdots, \bg_{t-1}]$. Then, since $\|\bv_t\| \leq 1$ we have $| \langle \E[\bg_t], \bv_t \rangle|  \leq G $. Now we are left to show a bounded central moment:
\begin{align*}
    \E[|g_t -\E[g_t]|^{\power}] & = \E[| \langle \bg_t - \E[\bg_t], \bv_t \rangle + \langle \E[\bg_t], \bv_t \rangle - \E[\langle \bg_t , \bv_t \rangle] |^{\power}]\\
    & \leq \E[| \langle \bg_t - \E[\bg_t], \bv_t \rangle|^{\power} ]+ |\langle \E[\bg_t], \bv_t \rangle|^{\power} + | \E[\langle \bg_t , \bv_t \rangle]|^{\power}\\
    & \leq \E[\| \bg_t -\E[\bg_t]\|^{\power} ]+ 2 \| \E[\bg_t] \|^{\power}\\
    & \leq \hat{\sigma}^{\power} + 2G^{\power} = \sigma^{\power}
\end{align*}
Then by Theorem \ref{thm:main_heavy}, when Algorithm \ref{alg:heavy_tail_algo} is run on $g_t = \langle \bg_t, \bv_t \rangle$,
% , . In particular, $\tau_{1d} = T^{1/\power} (\sigma^{\power} + G^{\power})^{1/\power}$. Then algorithm \ref{alg:heavy_tail_algo} provides the
we obtain  with probability at least $1-\delta$
\begin{align*}
      R_T^{1d}(\|\bu\|) &\leq \epsilon \left( 3 + \frac{8 \tau \sqrt{2/T}}{\tau+H}  \log\left(\frac{224}{\delta}\left[\log\left( 1 + \sqrt{T} 2^{T+5/2}\right)+2\right]^2\right)  
        \right) \\
    & \quad + \|\bu\| \frac{3A}{2} \sqrt{N \max\left(0,  \tau^2(3+2 \log\frac{2}{\delta}) + \sum_{t=1}^{T} |\nabla \phi(w_t) |^2  - | \nabla \psi_t(w_t) |^2 \right)} \\
    & \quad + \|\bu\|T^{1/\power}(\sigma^{\power} + G^{\power})^{1/\power} \Bigg[ 2BN + 2 \sqrt{\log \frac{8}{\delta}} + 4 \log \frac{8}{\delta} \\
    & \qquad + 4 \sqrt{ 2 \log\left(\frac{32}{\delta}\left[\log\left( 2^{T+1} \right)+2\right]^2\right)}\left( \frac{1}{\sqrt{T}}(A^2 + 2B)N +\sqrt{\log \left(\frac{T\|\bu\|c_1^2}{\epsilon^2} + 1 \right)} +1 \right)\\
    & \qquad + \frac{c_2 \log T}{\tau} \left(2 (A^2+B)N + 3\max \left( 0,  \log T \log \left( \frac{3 \|\bu\|}{\alpha_2 } \right) \right) + 4 \right) \Bigg] \numberthis \label{eqn:thm_heavy_nd}
\end{align*}
In terms of $R_T^{nd}(\bu/ \|\bu \|)$, Lemma \ref{lemma:prob_A_nd} implies that with probability at least $1-\delta/2$,
% set clipping parameter $\tau_{nd} = T^{1/\power} (\hat{\sigma}^{\power} + G^{\power})^{1/\power}$. Then by Theorem \ref{thm:heavy_nd} with probability at least $1-\delta$, 
\begin{align*}
    R_T^{nd}(\bu/ \|\bu \|) & \leq T^{1/\power}(\hat{\sigma}^{\power} + G^{\power})^{1/\power} \Bigg( \frac{9}{2}  + \left( \frac{3}{2} + \frac{20}{3} \log\frac{4}{\delta} \right)  + 15  \sqrt{ \log \frac{320}{\delta} } \\
    & \qquad+ 184 \log \left(\frac{896}{\delta} \left[ \log \left( 2 + \frac{16}{\tau}\right) + 1 \right]^2 \right) \Bigg) \numberthis \label{eqn:thm_heavy_nd_2}
\end{align*}
Finally, replace $\delta$ as $\delta/2$ in equation (\ref{eqn:thm_heavy_nd}), then combining with equation (\ref{eqn:thm_heavy_nd_2}), we have the regret guarantee with probability at least $1-\delta$
\begin{align*}
    R_T(\bu) & \leq R_T^{1d}( \|\bu \|) + \|\bu\| R_T^{nd}(\bu/ \|\bu \|)\\
    &\leq  \epsilon \left( 3 + \frac{8 \tau \sqrt{2/T}}{\tau+H}  \log\left(\frac{448}{\delta}\left[\log\left( 1 + \sqrt{T} 2^{T+5/2}\right)+2\right]^2\right)  
        \right) \\
    & \quad + \|\bu\| \frac{3A}{2} \sqrt{N \max\left(0,  \tau^2(3+2 \log\frac{4}{\delta}) + \sum_{t=1}^{T} |\nabla \phi(w_t) |^2  - | \nabla \psi_t(w_t) |^2 \right)} \\
    & \quad + \|\bu\|T^{1/\power}(\sigma^{\power} + G^{\power})^{1/\power} \Bigg[ 2BN + 2 \sqrt{\log \frac{16}{\delta}} + 4 \log \frac{16}{\delta} \\
    & \qquad + 4 \sqrt{ 2 \log\left(\frac{64}{\delta}\left[\log\left( 2^{T+1} \right)+2\right]^2\right)}\left( \frac{1}{\sqrt{T}}(A^2 + 2B)N +\sqrt{\log \left(\frac{T\|\bu\|c_1^2}{\epsilon^2} + 1 \right)} +1 \right)\\
    & \qquad + \frac{c_2\log T}{\tau}  \left(2 (A^2+B)N + 3\max \left( 0,  \log T \log \left( \frac{3 \|\bu\|}{\alpha_2 } \right) \right) + 4 \right) \Bigg]\\
    & \quad + \|\bu \| T^{1/\power}(\hat{\sigma}^{\power} + G^{\power})^{1/\power} \Bigg[ \frac{9}{2}  + \left( \frac{3}{2} + \frac{20}{3} \log\frac{4}{\delta} \right)\\
    & \qquad + 15  \sqrt{ \log \frac{320}{\delta} } + 184 \log \left(\frac{896}{\delta} \left[ \log \left( 2 + \frac{16}{\tau}\right) + 1 \right]^2 \right) \Bigg]
\end{align*}
\end{proof}

\section{Technical concentration bounds}\label{sec:concentration}
In this section we collect some technical results on concentration martingale difference sequences. These results are not new, and the proofs are for the most part exercises in known techniques (e.g. \cite{howard2021time,balsubramani2014sharp}), but we cannot find simple explicit statements of exactly the forms we require in the literature, so we provide them here and include proofs for completeness. Our approach follows the martingale mixture method used by \cite{balsubramani2014sharp}. We make no effort to achieve optimal constants, in several cases explicitly choosing weaker bounds to make the final numbers less complicated.

To start, we recall the notion of a sub-exponential martingale difference sequence (MDS) as follows:
\defsubexp*

We have the following useful way to obtain sub-exponential tails:

\begin{restatable}{Proposition}{propboundtosubexp}\label{prop:boundtosubexp}Suppose $\{X_t,\mathcal{F}_t\}$ is a MDS such that $\mathbb{E}[X_t^2|\mathcal{F}_t]\le \sigma_t^2$ and $|X_t|\le b_t$ almost everywhere for all $t$ for some sequence of random variable $\{\sigma_t,  b_t\}$ such that $\sigma_t,b_t$ is $\mathcal{F}_{t-1}$-measurable. Then $X_t$ is $(\sigma_t,2b_t)$ sub-exponential.
\end{restatable}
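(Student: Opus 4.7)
The plan is to control the conditional moment generating function $\mathbb{E}[\exp(\lambda X_t)\mid \mathcal{F}_{t-1}]$ directly through its Taylor series, which is the standard route from bounded variance plus bounded support to a sub-exponential MGF bound. Because $\sigma_t$ and $b_t$ are $\mathcal{F}_{t-1}$-measurable, the entire calculation reduces to a deterministic scalar inequality once we condition on $\mathcal{F}_{t-1}$, and we may treat $\sigma_t$, $b_t$, $\lambda$ as constants on the way.

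First I would write $\exp(\lambda X_t) = 1 + \lambda X_t + \sum_{k\ge 2}(\lambda X_t)^k/k!$ and take the conditional expectation. The constant term contributes $1$ and the linear term vanishes by the MDS property $\mathbb{E}[X_t\mid \mathcal{F}_{t-1}]=0$. For $k\ge 2$, the almost-sure bound $|X_t|\le b_t$ yields the pointwise estimate
$$|X_t|^k \;=\; X_t^2\cdot |X_t|^{k-2} \;\le\; X_t^2\, b_t^{k-2},$$
and combining with $\mathbb{E}[X_t^2\mid \mathcal{F}_{t-1}]\le \sigma_t^2$ gives $|\mathbb{E}[X_t^k\mid \mathcal{F}_{t-1}]|\le \sigma_t^2\, b_t^{k-2}$. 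Summing the resulting series produces the intermediate inequality
$$\mathbb{E}[\exp(\lambda X_t)\mid \mathcal{F}_{t-1}] \;\le\; 1 + \frac{\sigma_t^2}{b_t^2}\bigl(e^{|\lambda| b_t} - 1 - |\lambda| b_t\bigr).$$

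Next I would specialize to $|\lambda| < 1/(2b_t)$, so that $y := |\lambda| b_t \in [0, 1/2)$. A short calculus check (Taylor expansion, or a termwise comparison with a geometric series using $k!\ge 2\cdot 2^{k-2}$ for $k\ge 2$) bounds $(e^y - 1 - y)/y^2$ by a small absolute constant on this interval, turning the display into a bound of the shape $1 + C\,\lambda^2 \sigma_t^2$. Applying the elementary inequality $1+x \le e^x$ and, if necessary, absorbing the constant $C$ into the scale parameter yields the required exponential form $\exp(\lambda^2 \sigma_t^2/2)$, which matches Definition~\ref{def:subexp} with parameters $(\sigma_t, 2b_t)$.

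I do not anticipate any deep obstacle: the only mildly delicate step is tracking the numerical constant coming from $(e^y-1-y)/y^2$ on $[0,1/2]$, which controls whether the ``$2$'' in the scale $(\sigma_t, 2b_t)$ is exactly as claimed or whether a slightly larger absolute constant must be absorbed. Everything else is either the MDS identity, the pointwise bound on moments, or the standard inequality $1+x\le e^x$.
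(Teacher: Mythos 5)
Your proposal is correct and follows the paper's proof essentially verbatim: Taylor-expand the conditional MGF, use the MDS property to kill the linear term, bound $\mathbb{E}[|X_t|^k\mid\mathcal{F}_{t-1}]\le\sigma_t^2 b_t^{k-2}$ for $k\ge 2$, sum the series under $|\lambda|b_t<1/2$, and finish with $1+x\le e^x$. The constant concern you flag is in fact real and shared by the paper: since $(e^y-1-y)/y^2>1/2$ for all $y>0$ (and the paper's own geometric-sum step actually evaluates to $1+\lambda^2\sigma_t^2$, not the written $1+\lambda^2\sigma_t^2/2$), this argument strictly yields $(\sqrt{2}\sigma_t,2b_t)$ sub-exponentiality rather than $(\sigma_t,2b_t)$ on this $\lambda$-range---a harmless slip that only shifts unoptimized absolute constants downstream.
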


The main results of this section are the following two Theorems. First for scalar random variables we have the following one-sided concentration bound:

\begin{restatable}{Theorem}{thmmdsconcentration}\label{thm:mdsconcentration}
Suppose $\{X_t, \mathcal{F}_t\}$ is a $(\sigma_t,b_t)$ sub-exponential martingale difference sequence. Let $\sigmaunit$ be an arbitrary constant. Then with probability at least $1-\delta$, for all $t$ it holds that:
\begin{align*}
\sum_{i=1}^t X_i &\le 2\sqrt{\sum_{i=1}^{t}\sigma_i^2  \log\left(\frac{4}{\delta}\left[\log\left(\left[\sqrt{\sum_{i=1}^{t}\sigma_i^2/2\sigmaunit^2}\right]_1\right)+2\right]^2\right)}\\
    &\qquad +8\max(\sigmaunit,\max_{i\le t} b_i)\log\left(\frac{28}{\delta}\left[\log\left(\frac{\max(\sigmaunit,\max_{i\le t} b_i)}{\sigmaunit}\right)+2\right]^2\right)
\end{align*}
where $[x]_1=\max(1,x)$.
\end{restatable}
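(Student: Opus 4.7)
The plan is to apply the classical method of mixtures (cf.\ \cite{balsubramani2014sharp, howard2021time}) driven by the moment generating function supermartingale inherited from the sub-exponential assumption. Write $S_t=\sum_{i\le t}X_i$, $V_t=\sum_{i\le t}\sigma_i^2$, and $B_t=\max_{i\le t}b_i$. For any constant $\lambda>0$, the process
\[
M_t(\lambda)=\exp\bigl(\lambda S_t-\tfrac{\lambda^2}{2}(V_t+2\sigmaunit^2)\bigr)
\]
starts at $M_0(\lambda)=\exp(-\lambda^2\sigmaunit^2)\le 1$ and is a supermartingale provided $\lambda\le 1/B_t$ throughout, since $\mathbb{E}[\exp(\lambda X_t)\mid \mathcal{F}_{t-1}]\le \exp(\lambda^2\sigma_t^2/2)$ in that regime. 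To handle the random constraint and adapt to the unknown scales of $V_t$ and $B_t$, I fix a geometric grid $\lambda_i=2^{-i}/\sigmaunit$, $i\ge 0$, with prior weights $\gamma_i=1/((i+1)(i+2))$, and define
\[
Z_t=\sum_{i\ge 0}\gamma_i\,M_t(\lambda_i)\,\mathbf{1}[\lambda_i\le 1/B_t].
\]
Because $B_t$ is $\mathcal{F}_{t-1}$-measurable and nondecreasing, each indicator is predictable and can only flip from $1$ to $0$, so every summand (and hence $Z_t$) is a nonnegative supermartingale with $Z_0\le \sum_i\gamma_i\le 1$.

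Ville's inequality then gives $\Pr[\sup_t Z_t\ge 1/\delta]\le\delta$. On the complementary good event, dropping all but the $i$-th term yields, for every $t$ and every $i$ with $\lambda_i\le 1/B_t$,
\[
S_t<\frac{\log((i+1)(i+2)/\delta)}{\lambda_i}+\frac{\lambda_i}{2}(V_t+2\sigmaunit^2)\le\frac{2\log((i+2)/\sqrt{\delta})}{\lambda_i}+\frac{\lambda_i}{2}(V_t+2\sigmaunit^2).
\]
The remaining task is to minimize the right-hand side over the admissible grid points for each fixed $t$; I do this by splitting into two regimes determined by whether the continuous optimum $\lambda^\star=\sqrt{2\log(\cdot)/(V_t+2\sigmaunit^2)}$ is feasible.

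In the \emph{variance-dominated} regime, $\lambda^\star\le 1/B_t$, so I take the grid point $\lambda_i$ within a factor $2$ of $\lambda^\star$; this incurs at most a constant factor and produces the $\sqrt{V_t\log(\cdot)}$ contribution. The corresponding index is $i\approx\log_2(\sigmaunit\sqrt{V_t/\log})$, so the $\log(i+2)$ factor supplies the iterated-logarithm term with argument $\sqrt{V_t/(2\sigmaunit^2)}$; the floor $[\,\cdot\,]_1$ appears naturally from the constraint $i\ge 0$. In the \emph{boundedness-dominated} regime, $\lambda^\star>1/B_t$ and I instead take the largest feasible grid point, which is $\lambda_i\asymp 1/\max(\sigmaunit,B_t)$; this yields the $\max(\sigmaunit,B_t)\log(\cdot)$ contribution, and since in this regime $V_t+2\sigmaunit^2\le 2B_t^2\log(\cdot)$ the secondary term $\lambda_i(V_t+2\sigmaunit^2)/2$ is absorbed into the same order. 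The index $i\approx\log_2(\max(\sigmaunit,B_t)/\sigmaunit)$ yields the correct inner log-log argument $\max(\sigmaunit,B_t)/\sigmaunit$. Combining the two cases — either directly from the single mixture or, more transparently, by a union bound over two mixtures each with budget $\delta/2$ — gives the stated bound with the displayed constants.

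The main obstacle is building a single mixture supermartingale that simultaneously respects the random, time-varying constraint $\lambda\le 1/B_t$ and adapts to the a~priori unknown scales of both $V_t$ and $B_t$: a naive continuous mixture over $\lambda$ fails because the valid interval shrinks randomly, and a naive fixed $\lambda$ yields only an oracle bound. The $\mathbf{1}[\lambda_i\le 1/B_t]$ device resolves this, crucially exploiting that $B_t$ is predictable and monotone. Everything else reduces to careful bookkeeping of the discretization loss through the iterated logarithm so that the log-log arguments come out exactly as in the theorem statement.
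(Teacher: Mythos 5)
Your proposal is sound and follows essentially the same mixture-of-supermartingales strategy the paper uses: construct a predictably truncated mixture to cope with the time-varying constraint $\lambda\le 1/b_t$, apply Ville's inequality, and then optimize the mixture parameter in two regimes. The differences are cosmetic rather than structural: you use a discrete geometric grid $\lambda_i=2^{-i}/\sigmaunit$ with weights $\gamma_i=1/((i+1)(i+2))$ where the paper uses a continuous density $\pi(\eta)=1/(\eta(\log(1/\eta\sigmaunit)+2)^2)$ on $[0,1/\sigmaunit]$ and bounds the integral from below over an interval $[\eta_\star/K,\eta_\star]$; and your indicator $\mathbf{1}[\lambda_i\le 1/B_t]$ plays exactly the same role as the paper's shrinking upper limit of integration. (The extra $+2\sigmaunit^2$ you fold into the variance term is unnecessary: $M_0(\lambda)=1$ already, and $\sum_i\gamma_i=1$ suffices for $Z_0\le1$.)

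One step is glossed over. In the boundedness-dominated regime you assert that $V_t+2\sigmaunit^2\le 2B_t^2\log(\cdot)$ lets you absorb the secondary term $\tfrac{\lambda_i}{2}(V_t+2\sigmaunit^2)$, but the quantity $\log(\cdot)$ that appears on the right still contains $V_t$ through the iterated-log factor, so this inequality is self-referential and cannot simply be substituted. The paper resolves exactly this circularity with a dedicated fixed-point / self-bounding step (Lemma~\ref{lem:selfbound}): given an inequality of the form $Z\le A\log\bigl(B[\log([C\sqrt{Z}]_1)+2]^2\bigr)$ it extracts an explicit non-circular bound on $Z$. You would need an analogous argument here before the constants and the inner log-log arguments come out cleanly, so be sure to include it; without it the boundedness-dominated case is not actually closed.
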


Next, for vector valued random variables, we have the following bound:
\begin{restatable}{Theorem}{thmvectorconcentration}\label{thm:vectorconcentration}
Suppose that $\{X_t,\mathcal{F}_t\}$ is  a vector-valued martingale difference sequence such that $\E[\|X_t\|^2|\mathcal{F}_{t-1}]\le \sigma_{t}^2$ and $\|X_t\|\le b_{t}$ almost everywhere for some sequence $\{\sigma_t, b_t\}$ such that $\sigma_t,b_t$ is $\mathcal{F}_{t-1}$-measurable. Let $\sigmaunit\ge 0$ be an arbitrary constant. Then with  probability at least $1-\delta$, for all $t$ we have:
\begin{align*}
    \left\|\sum_{i=1}^t X_i\right\|&\le 5\sqrt{\sum_{i=1}^t \sigma_i^2\log\left(\frac{16}{\delta}\left[\log\left(\left[\sqrt{\sum_{i=1}^{t}\sigma_i^2/\sigmaunit^2}\right]_1\right) +2\right]^2\right)}\\
    &\qquad +23 \max(\sigmaunit,\max_{i\le t}b_{i})\log\left(\frac{224}{\delta}\left[\log\left(\frac{2\max(\sigmaunit,\max_{i\le t} b_{i})}{\sigmaunit}\right)+2\right]^2\right)
\end{align*}
where $[x]_1=\max(1,x)$.
\end{restatable}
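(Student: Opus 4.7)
My plan is to prove Theorem \ref{thm:vectorconcentration} by adapting the scalar argument of Theorem \ref{thm:mdsconcentration} to the vector setting. The only genuinely new ingredient is a Pinelis-type vector MGF estimate: for every $\mathcal{F}_{t-1}$-measurable $y$ and every $\lambda$ with $|\lambda|<1/(c b_t)$,
\[
\E\!\left[\exp\!\left(\lambda \|y+X_t\| - \lambda \|y\|\right)\;\middle|\;\mathcal{F}_{t-1}\right] \le \exp\!\left(c'\lambda^2 \sigma_t^2\right),
\]
with absolute constants $c, c'$. Once this is in hand, the process $Z_t^{(\lambda)}=\exp(\lambda \|S_t\| - c'\lambda^2 V_t)$ with $V_t=\sum_{i\le t}\sigma_i^2$ and $S_t=\sum_{i\le t}X_i$ is a supermartingale on the event that $\lambda$ stays below $1/(c\,\max_{i\le t} b_i)$, by iterating the MGF bound with $y=S_{t-1}$.

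To establish the MGF bound I would use the identity $\|y+X\|^2=\|y\|^2+2\langle y,X\rangle+\|X\|^2$, which for $\|y\|>0$ gives the one-sided estimate $\|y+X\|-\|y\|\le \langle y/\|y\|,X\rangle+\|X\|^2/(2\|y\|)$. This reduces the MGF of the radial increment to products of scalar MGFs: that of $\langle y/\|y\|,X_t\rangle$, which is $(\sigma_t,2b_t)$ sub-exponential by Proposition \ref{prop:boundtosubexp}, and that of the nonnegative quantity $\|X_t\|^2$, controlled using $\|X_t\|\le b_t$ together with $\E[\|X_t\|^2\mid\mathcal{F}_{t-1}]\le \sigma_t^2$. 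The degenerate regime of small $\|y\|$ is handled by the crude bound $\|y+X\|-\|y\|\le \|X\|$ together with a sub-exponential estimate on the scalar $\|X_t\|$. Matching the two regimes yields a single uniform MGF bound; an equivalent route is to replace $\|\cdot\|$ with the smoothed norm $x\mapsto \sqrt{\|x\|^2+a^2}$, Taylor-expand its exponential MGF at $\lambda=0$ controlling the remainder via $\|X\|\le b$, and let $a\downarrow 0$.

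With the MGF bound and the resulting supermartingale, the remainder is a direct transcription of the scalar argument. I form the mixture
\[
M_t = \sum_i \gamma_i Z_t^{(\lambda_i)}\,\one\!\left[\lambda_i < 1/(c\,\max_{j\le t} b_j)\right],
\]
with geometric grid $\lambda_i=2^{-i/2}/\sigmaunit$ and prior $\gamma_i=1/(i(i+1))$, apply Ville's inequality to conclude $\sup_t M_t\le 1/\delta$ with probability $1-\delta$, and invert at the best admissible $\lambda_i$ for each realised pair $(V_t,\max_{j\le t} b_j)$. The optimisation naturally splits into a variance-dominated regime producing the $\sqrt{V_t\log(1/\delta)}$ term and a bound-dominated regime producing the $\max_{i\le t}b_i\,\log(1/\delta)$ term, with the double-log corrections arising from the prior $\gamma_i$ and the role of $\sigmaunit$ being to rescale the variance ratios into the domain where the discrete grid is effective.

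The genuinely hard part is the vector MGF estimate with dimension-free constants: a naive unit-sphere-net reduction to scalars would introduce dimension dependence that the statement explicitly avoids, so the Pinelis-style two-regime decomposition (or its smoothed-norm analogue) is essential. The looseness incurred in that decomposition is exactly what costs the larger constants $5$ and $23$ here versus the scalar constants $2$ and $8$ in Theorem \ref{thm:mdsconcentration}. Everything downstream, namely the supermartingale property, the mixture construction, Ville's inequality, the truncation in the mixture index, and the Bernstein-form inversion, transfers essentially verbatim from the scalar proof.
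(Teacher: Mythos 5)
Your approach is a legitimate alternative strategy --- a direct Pinelis-type vector supermartingale --- but it is genuinely different from what the paper does, and your sketch of the key MGF step has a real gap. The paper instead reduces to the scalar setting using the construction of \cite{cutkosky2018algorithms} recalled in Lemma~\ref{lem:1dconstruction}: from $\{X_t\}$ one defines a scalar MDS $s_t$ (essentially the signed projection of $X_t$ onto the direction of the running partial sum) satisfying $|s_t|\le\|X_t\|$ and the \emph{deterministic} pathwise inequality $\|\sum_{i\le t}X_i\|\le|\sum_{i\le t}s_i|+\sqrt{\max_{i\le t}\|X_i\|^2+\sum_{i\le t}\|X_i\|^2}$. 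The first term is then controlled by applying the scalar Theorem~\ref{thm:mdsconcentration} two-sided to $\{s_t\}$, the second by applying the sum-of-squares bound Theorem~\ref{thm:sumsquares} to $\{\|X_t\|\}$, and a union bound over these two events finishes. This route confines all of the Hilbert-space geometry to a deterministic lemma, leaving all the probabilistic work scalar, and it explains exactly where $5$ and $23$ come from: roughly $2+\sqrt{6}$ and $16+\sqrt{40}$, the constants from the two scalar ingredients after the two-sided union bound and the $\sqrt{2}$ factor inherited from Lemma~\ref{lem:1dconstruction}.

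The gap in your sketch is the claim that $\|y+X_t\|-\|y\|\le\langle y/\|y\|,X_t\rangle+\|X_t\|^2/(2\|y\|)$ ``reduces the MGF of the radial increment to products of scalar MGFs.'' The two summands are functions of the same $X_t$ and are not conditionally independent, so the conditional MGF does not factor; separating them requires H\"older or Cauchy--Schwarz, which perturbs both $\sigma_t$ and $b_t$ by multiplicative constants that must be tracked. More seriously, $\|X_t\|^2/(2\|y\|)$ is unbounded as $\|y\|\downarrow 0$ and the constraint $\lambda<1/(c b_t)$ does not tame it, since the blow-up is in the coefficient of $\lambda$ rather than in its range; the interpolation with the small-$\|y\|$ regime (where one must use the crude bound $\|y+X\|-\|y\|\le\|X\|$) is precisely the delicate step in Pinelis' argument, which proceeds via a carefully constructed potential rather than an ad hoc case split. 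Your smoothed-norm alternative is the right idea but is not carried out, and it is far from clear the constants would land at $5$ and $23$. Finally, your attribution of the enlarged constants to ``looseness in the vector decomposition'' does not match the paper, where as noted they come from the two-sided union bound and from the square-root-of-sum-of-squares term in Lemma~\ref{lem:1dconstruction}.
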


\subsection{Time-Uniform Concentration of sums of sub-exponential MDS}
In this section, we will prove Theorem~\ref{thm:mdsconcentration}.
\thmmdsconcentration*
% \begin{restatable}{Theorem}{thmmdsconcentration}\label{thm:mdsconcentration}
% Suppose $\{X_t, \mathcal{F}_t\}$ is $(\sigma_t,b_t)$-subexponential. Let $\sigmaunit$ be an arbitrary constant. Then with probability at least $1-\delta$, for all $t$ it holds that:
% \begin{align*}
% \sum_{i=1}^t X_i &\le 2\sqrt{\sum_{i=1}^{t}\sigma_i^2  \log\left(\frac{4}{\delta}\left[\log\left(\sqrt{\sum_{i=1}^{t}\sigma_i^2/\sigmaunit^2}\right)+2\right]^2\right)}\\
%     &\qquad +8\max(\sigmaunit,\max_{i\le t} b_i)\log\left(\frac{28}{\delta}\left[\log\left(\frac{2\max(\sigmaunit,\max_{i\le t} b_i)}{\sigmaunit}\right)+2\right]^2\right)
% \end{align*}
% \end{restatable}
\begin{proof}
First, observe that by replacing $b_t$ with $\max_{i\le t} b_i$, we may assume $ b_t\ge b_{t-1}$ for all $t$ with probability 1. Notice that after this operation, $X_t$ is still $(\sigma_t,b_t)$ sub-exponential. Under this assumption, it suffices to prove  the result with $b_t$ in place of $\max_{i\le t} b_i$.

Define $M_t = \sum_{i=1}^t X_i$ and let $\pi(\eta)$ be a to-be-specified probability density function on $\R$. Define:
$$
Z_t = \int_0^{1/\max(\sigmaunit, b_t)} \pi(\eta)\exp\left[\eta M_t - \frac{\eta^2 \sum_{i=1}^t \sigma_i^2}{2}\right]\ d\eta
$$
Notice that $Z_0\le \int_0^\infty \pi(\eta)\ d\eta=1$. We claim that $Z_t$ is itself a supermartingale adapted to the same filtration $\mathcal{F}_t$:

\begin{align*}
\mathbb{E}\left[Z_t|\mathcal{F}_{t-1}\right] &= \mathbb{E}\left[\int_0^{1/\max(\sigmaunit, b_t)} \pi(\eta)\exp\left[\eta M_t - \frac{\eta^2 \sum_{i=1}^{t} \sigma_i^2}{2}\right]\ d\eta|\ \mathcal{F}_{t-1}\right]\\
&=\mathbb{E}\left[\int_0^{1/\max(\sigmaunit, b_t)} \pi(\eta)\exp\left[\eta M_{t-1} - \frac{\eta^2 \sum_{i=1}^{t-1} \sigma_i^2}{2}\right]\exp\left[\eta X_t - \frac{\eta^2 \sigma_t^2}{2}\right]\ d\eta|\ \mathcal{F}_{t-1}\right]\\
&= \int_0^{1/\max(\sigmaunit,b_t)} \pi(\eta)\exp\left[\eta M_{t-1} - \frac{\eta^2 \sum_{i=1}^{t-1} \sigma_i^2}{2}\right]\mathbb{E}\left[\exp\left[\eta X_t - \frac{\eta^2 \sigma_{t}^2}{2}\right]|\ \mathcal{F}_{t-1}\right]\ d\eta
\intertext{Use the sub-exponentiality of $X_t$:}
&\le \int_0^{1/\max(\sigmaunit, b_t)} \pi(\eta)\exp\left[\eta M_{t-1} - \frac{\eta^2 \sum_{i=1}^{t-1} \sigma_i^2}{2}\right]\ d\eta
\intertext{Use $b_t\ge b_{t-1}$:}
&\le \int_0^{1/\max(\sigmaunit, b_{t-1})} \pi(\eta)\exp\left[\eta M_{t-1} - \frac{\eta^2 \sum_{i=1}^{t-1} \sigma_i^2}{2}\right]\ d\eta\\
&=Z_{t-1}
\end{align*}

Therefore, by Ville's maximal inequality \citep{ville1939etude}, we have that for all $\delta>0$:
$$
P\left[\sup_t Z_t\ge 1/\delta\right]\le \delta Z_0 = \delta
$$
Put another way, with probability at least $1-\delta$, $Z_t\le 1/\delta$ for all $t$.  

Now, let us define $\pi(\eta)$. With  the benefit of foresight, we choose a density on $[0,1/\sigmaunit]$:

\begin{align}
    \pi(\eta) = \frac{1}{ \eta(\log(1/\eta\sigmaunit) +2)^2}
\end{align}
We have
\begin{align}
    \pi(\eta) = \frac{d}{d\eta}\frac{1}{\log(1/\eta\sigmaunit)+2}
\end{align}
\begin{align}
    \frac{d\pi(\eta)}{d\eta} = -\frac{\log(1/\eta \sigmaunit)}{\eta^2 \left(\log(1/\eta\sigmaunit)+2\right)^3}
\end{align}
so that $\int_0^{1/\sigmaunit}\pi(\eta)\ d\eta=1$ and $\pi(\eta)$ is decreasing on $[0,1/\sigmaunit]$.

Next, for any given $\eta_\star\in[0,1/\sigmaunit]$, for all $K\ge 1$, for all $\eta\in[\eta_\star/K,\eta_\star]$, we have:
% $\pi(\eta) \ge \frac{1}{2\eta (\log(K\max(1/\eta_\star, \eta_\star))+1)^2}$ so that: 
\begin{align*}
    \int_{\eta_\star/K}^{\eta_\star} \pi(\eta)\ d\eta  &\ge\pi(\eta_\star) \frac{K-1}{K}\eta_\star\\
    &\ge \frac{(K-1)}{K(\log(1/\eta_\star\sigmaunit)+2)^2}
\end{align*}
Further, for all $\eta\in[\eta_\star/K, \eta_\star]$, if $M_t\ge 0$ we also have:
\begin{align*}
    \eta M_t - \frac{\eta^2\sum_{i=0}^{t-1}\sigma_i^2}{2}\ge \frac{\eta_\star M_t}{K}  - \frac{\eta_\star^2\sum_{i=0}^{t-1}\sigma_i^2}{2}
\end{align*}
and otherwise of course we have $M_t\le 0$.

Combining these observations, we have that with probability at least $1-\delta$, for all $\mathcal{F}_{t-1}$-measurable $\eta_\star$ satisfying $\eta_\star\le 1/\max(\sigmaunit, b_t)$ and all $K\ge  1$, either $M_t\le 0$ or
\begin{align*}
    \frac{(K-1)}{K(\log(1/\eta_\star\sigmaunit)+2)^2}\exp\left(\frac{\eta_\star M_t}{K}  - \frac{\eta_\star^2\sum_{i=1}^{t}\sigma_i^2}{2}\right)&\le \int_0^{1/b_{t}}\pi(\eta) \exp\left[\eta M_t - \frac{\eta^2 \sum_{i=1}^{t} \sigma_i^2}{2}\right]\ d\eta\le \delta
\end{align*}
Now, rearranging this identity implies:
\begin{align}
    M_t&\le \log\left(\frac{K}{(K-1)\delta}\left[\log\left(\frac{1}{\eta_\star\sigmaunit}\right)+2\right]^2\right)\frac{K}{\eta_\star} + \frac{K\eta_\star\sum_{i=1}^{t}\sigma_i^2}{2}\label{eqn:Mbound}
\end{align}
So that overall we may discard the $M_t \le 0$ case as it is strictly weaker than the  above.

Now, again with the benefit of foresight, let us select 
\begin{align*}
    \eta_\star =\min\left( \frac{1}{\max(\sigmaunit, b_t)},\ \sqrt{\frac{    2 \log\left[ \frac{K}{(K-1)\delta} \left[\log\left(\left[\sqrt{\sum_{i=1}^{t}\sigma_i^2/2\sigmaunit^2}\right]_1\right)+2\right]^2 \right]    }{\sum_{i=1}^{t} \sigma_i^2}}\right)
\end{align*}
where $[x]_1=\max(1,x)$. Notice that $\eta_\star$ is $\mathcal{F}_{t-1}$-measurable since $b_t$ and $\sigma_1,\dots,\sigma_t$ are $\mathcal{F}_{t-1}$-measurable, and $\eta_\star \in[0, 1/\sigmaunit]$ with probability 1.

Now, to analyze the expression (\ref{eqn:Mbound}), we will consider both cases of the above minimum. First, let us assume
\begin{align*}
    1/\max(\sigmaunit, b_{t})\ge \sqrt{\frac{    2 \log\left[ \frac{K}{(K-1)\delta} \left[\log\left(\left[\sqrt{\sum_{i=1}^{t}\sigma_i^2/2\sigmaunit^2}\right]_1\right)+2\right]^2 \right]    }{\sum_{i=1}^{t} \sigma_i^2}}
\end{align*}
Then we can bound $\eta_\star$:
\begin{align*}
    \eta_\star &\ge \sqrt{\frac{    2 \log\left[ \frac{K}{(K-1)\delta} \left[2\right]^2 \right]    }{\sum_{i=1}^{t} \sigma_i^2}}\\
    &\ge  \sqrt{\frac{    2 }{\sum_{i=1}^{t} \sigma_i^2}}
\end{align*}
Therefore:
\begin{align*}
    \log\left(\frac{K}{(K-1)\delta}\left[\log\left(\frac{1}{\eta_\star\sigmaunit}\right)+2\right]^2\right)&\le  \log\left(\frac{K}{(K-1)\delta}\left[\log\left(\left[\sqrt{\sum_{i=0}^{t-1}\sigma_i^2/2\sigmaunit^2}\right]_1\right)+2\right]^2\right)
\end{align*}
from which we conclude:
\begin{align*}
    M_t&\le K\sqrt{2\sum_{i=1}^{t}\sigma_i^2  \log\left(\frac{K}{(K-1)\delta}\left[\log\left(\left[\sqrt{\sum_{i=1}^{t}\sigma_i^2/2\sigmaunit^2}\right]_1\right)+2\right]^2\right)}
\end{align*}

Now, on the other hand let us suppose that 
\begin{align*}
    1/\max(\sigmaunit, b_{t})< \sqrt{\frac{    2 \log\left[ \frac{K}{(K-1)\delta} \left[\log\left(\left[\sqrt{\sum_{i=1}^{t}\sigma_i^2/2\sigmaunit^2}\right]_1\right)+2\right]^2 \right]    }{\sum_{i=1}^{t} \sigma_i^2}}
\end{align*}
This implies:
\begin{align*}
    \sum_{i=1}^{t}\sigma_i^2 &< 2\min(\sigmaunit^2, b_{t}^2)    \log\left[ \frac{K}{(K-1)\delta} \left[\log\left(\left[\sqrt{\sum_{i=1}^{t}\sigma_i^2/2\sigmaunit^2}\right]_1\right)+2\right]^2 \right] 
\end{align*}
Therefore, from Lemma~\ref{lem:selfbound} we have:
\begin{align*}
\sum_{i=1}^{t}\sigma_i^2&\le 8\max(\sigmaunit^2, b_{t}^2)    \log\left[ \frac{4\sqrt{K}}{\sqrt{(K-1)\delta}}\log\left(e+16\frac{\max(\sigmaunit^2, b_{t}^2)}{\sigmaunit^2}\right) \right]\\
    &\le 8\max(\sigmaunit^2, b_{t}^2)    \log\left[ \frac{4\sqrt{K}}{\sqrt{(K-1)\delta}}\log\left(20\frac{\max(\sigmaunit^2, b_{t}^2)}{\sigmaunit^2}\right) \right] 
\end{align*}
In this case, we will have $\eta_\star = 1/\max(\sigmaunit, b_{t})$ and so obtain:
\begin{align*}
    M_t&\le K\max(\sigmaunit,b_{t})\log\left(\frac{K}{(K-1)\delta}\left[\log\left(\frac{\max(\sigmaunit,b_{t})}{\sigmaunit}\right)+2\right]^2\right) \\
    &\qquad + 4\max(\sigmaunit, b_{t})    \log\left[ \frac{4\sqrt{K}}{\sqrt{(K-1)\delta}}\log\left(20\frac{\max(\sigmaunit^2, b_{t}^2)}{\sigmaunit^2}\right) \right] \\
    &\le K\max(\sigmaunit,b_{t})\log\left(\frac{K}{(K-1)\delta}\left[\log\left(\frac{\max(\sigmaunit,b_{t})}{\sigmaunit}\right)+2\right]^2\right) \\
    &\qquad + 4\max(\sigmaunit, b_{t})    \log\left[ \frac{8\sqrt{K}}{\sqrt{(K-1)\delta}}\log\left(\frac{5\max(\sigmaunit, b_{t-1})}{\sigmaunit}\right) \right] \\
    &\le K\max(\sigmaunit,b_{t})\log\left(\frac{K}{(K-1)\delta}\left[\log\left(\frac{\max(\sigmaunit,b_{t})}{\sigmaunit}\right)+2\right]^2\right) \\
    &\qquad + 4\max(\sigmaunit, b_{t})    \log\left[ \frac{8\sqrt{K}}{\sqrt{(K-1)\delta}}\left[\log\left(\frac{\max(\sigmaunit, b_{t})}{\sigmaunit}\right) +2\right]\right]\\
    % &\le K\max(\sigmaunit,b_{t})\log\left(\frac{K}{(K-1)\delta}\left[\log\left(\frac{K\max(\sigmaunit,b_{t})}{\sigmaunit}\right)+1\right]^2\right) \\
    % &\qquad + 4\max(\sigmaunit, b_{t})    \log\left[ \frac{8K}{(K-1)\delta}\left[\log\left(\frac{K\max(\sigmaunit, b_{t})}{\sigmaunit}\right) +2\right]^2\right] \\
    &\le 5K\max(\sigmaunit,b_{t})\log\left(\frac{8K}{(K-1)\delta}\left[\log\left(\frac{\max(\sigmaunit,b_{t})}{\sigmaunit}\right)+2\right]^2\right)
\end{align*}

Finally, let us set $K=\sqrt{2}$ to obtain:
\begin{align*}
    M_t&\le K\sqrt{2\sum_{i=1}^{t}\sigma_i^2  \log\left(\frac{K}{(K-1)\delta}\left[\log\left(\left[\sqrt{\sum_{i=1}^{t}\sigma_i^2/2\sigmaunit^2}\right]_1\right)+2\right]^2\right)}\\
    &\qquad +5K\max(\sigmaunit,b_{t})\log\left(\frac{8K}{(K-1)\delta}\left[\log\left(\frac{\max(\sigmaunit,b_{t-1})}{\sigmaunit}\right)+2\right]^2\right)\\
    &\le 2\sqrt{\sum_{i=1}^{t}\sigma_i^2  \log\left(\frac{4}{\delta}\left[\log\left(\left[\sqrt{\sum_{i=1}^{t}\sigma_i^2/2\sigmaunit^2}\right]_1\right)+2\right]^2\right)}\\
    &\qquad +8\max(\sigmaunit,b_{t})\log\left(\frac{28}{\delta}\left[\log\left(\frac{\max(\sigmaunit,b_{t})}{\sigmaunit}\right)+2\right]^2\right)
\end{align*}

\end{proof}

\subsection{Bounds on Sums of Squares}

It is also often useful to bound sums of the form  $\sum_{i=1}^t Z_i^2$ for some sequence $Z_i$. Here we collect a useful bound:
\begin{Theorem}\label{thm:sumsquares}
Suppose $\{Z_i\}$ is a sequence of random variables adapted to a filtration $\{\mathcal{F}_t\}$. Further, suppose $\mathbb{E}[Z_i^2]\le \sigma_i^2$ and $|Z_i|\le b_{i}$ for all $i$ with probability 1 for some $\sigma_i$ and $b_i$ for a sequence $\{\sigma_i,b_i\}$ such that $\sigma_i$ and $b_i$ are $\mathcal{F}_{i-1}$-measurable. Then for any $\sigmaunit>0$, with probability at least $1-\delta$ for all $t$:

\begin{align*}
\sum_{i=1}^t Z_i^2&\le 3\sum_{i=1}^t \sigma_i^2\log\left(\frac{4}{\delta}\left[\log\left(\left[\sqrt{\sum_{i=1}^{t}\sigma_i^2/\sigmaunit^2}\right]_1\right) +2\right]^2\right) \\
    &\qquad +20\max(\sigmaunit^2,\max_{i\le t}b_{i}^2)\log\left(\frac{112}{\delta}\left[\log\left(\frac{2\max(\sigmaunit,\max_{i\le t} b_{i})}{\sigmaunit}\right)+1\right]^2\right)
\end{align*}
\end{Theorem}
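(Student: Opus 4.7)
The plan is to decompose each $Z_i^2$ into its conditional expectation plus a martingale fluctuation, handle the expectation term deterministically via the hypothesis $\mathbb{E}[Z_i^2\mid\mathcal{F}_{i-1}]\le\sigma_i^2$, and reduce the fluctuation term to an instance of Theorem~\ref{thm:mdsconcentration}. Concretely, set $X_i := Z_i^2 - \mathbb{E}[Z_i^2\mid\mathcal{F}_{i-1}]$, so that $\{X_i,\mathcal{F}_i\}$ is an MDS and $\sum_{i=1}^t Z_i^2 \le \sum_{i=1}^t \sigma_i^2 + \sum_{i=1}^t X_i$ always. Since the specific double-logs appearing in the claim are always at least $1$, the deterministic $\sum\sigma_i^2$ will ultimately be absorbed into $3\sum_i\sigma_i^2\log(\cdot)$ at no extra cost, so it suffices to produce the announced high-probability bound on $\sum_i X_i$.

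To apply Theorem~\ref{thm:mdsconcentration} to $\{X_i\}$, I would first identify its sub-exponential parameters. Without loss of generality $\sigma_i\le b_i$ (if not, replace $\sigma_i$ by $b_i$, which is consistent with $|Z_i|\le b_i$). Then $|X_i|\le Z_i^2 + \mathbb{E}[Z_i^2\mid \mathcal{F}_{i-1}]\le 2b_i^2$, and $\mathbb{E}[X_i^2\mid\mathcal{F}_{i-1}]\le \mathbb{E}[Z_i^4\mid\mathcal{F}_{i-1}]\le b_i^2\,\mathbb{E}[Z_i^2\mid\mathcal{F}_{i-1}]\le b_i^2\sigma_i^2$; both bounds are $\mathcal{F}_{i-1}$-measurable. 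Proposition~\ref{prop:boundtosubexp} thus shows $\{X_i,\mathcal{F}_i\}$ is $(b_i\sigma_i,\,4b_i^2)$ sub-exponential, and Theorem~\ref{thm:mdsconcentration} with a free constant $\sigmaunit'>0$ yields a time-uniform bound of the form $\sum_{i=1}^t X_i\le 2\sqrt{\sum_{i=1}^t b_i^2\sigma_i^2\,L_1(t)} + 8\max(\sigmaunit',\,4\max_{i\le t} b_i^2)\,L_2(t)$, where $L_1,L_2$ are the usual double-log factors of that theorem.

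The central technical step, which I expect will do most of the real work, is reshaping the $\sqrt{\sum b_i^2\sigma_i^2\,L_1}$ factor into the advertised $\sum\sigma_i^2\cdot\log(\cdot) + \max b_i^2\cdot\log(\cdot)$ form. Using the crude estimate $\sum_i b_i^2\sigma_i^2\le (\max_{i\le t} b_i^2)\sum_i\sigma_i^2$ followed by the elementary inequality $2\sqrt{xy}\le x+y$ applied to $x=(\max_{i\le t} b_i^2)L_1(t)$ and $y=\sum_i\sigma_i^2$ yields $2\sqrt{\sum_i b_i^2\sigma_i^2\,L_1(t)}\le (\max_{i\le t}b_i^2)\,L_1(t) + \sum_{i=1}^t\sigma_i^2$. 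Combined with the deterministic $\sum\sigma_i^2$ and the $\max$-term already produced by Theorem~\ref{thm:mdsconcentration}, this yields a bound of exactly the claimed shape with the correct dominant terms and loose constants $3$ and $20$.

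The remaining obstacle is bookkeeping: matching the arguments of $L_1,L_2$ with the advertised $\sqrt{\sum\sigma_i^2/\sigmaunit^2}$ and $2\max(\sigmaunit,\max b_i)/\sigmaunit$. I would take $\sigmaunit'=\sigmaunit^2$, so that Theorem~\ref{thm:mdsconcentration}'s log argument $\sqrt{\sum b_i^2\sigma_i^2/(2\sigmaunit'^2)}$ is at most $(\max_{i\le t} b_i/\sigmaunit)\sqrt{\sum\sigma_i^2/(2\sigmaunit^2)}$, while $\max(\sigmaunit',4\max b_i^2)/\sigmaunit'\le 4\max(\sigmaunit,\max_{i\le t} b_i)^2/\sigmaunit^2$. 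Inside the outer $(\log(\cdot)+2)^2$ wrappers these distortions contribute only additive constants (an extra $\log 4$, a factor of $2$ in an argument, etc.) that are comfortably absorbed by the slack in the stated constants $4, 16, 112$, completing the proof.
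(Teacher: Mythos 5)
Your decomposition (write $X_i = Z_i^2 - \E[Z_i^2\mid\mathcal{F}_{i-1}]$, bound $|X_i|$ and $\E[X_i^2\mid\mathcal{F}_{i-1}]$, invoke Proposition~\ref{prop:boundtosubexp} and then Theorem~\ref{thm:mdsconcentration}, and split $2\sqrt{\max_i b_i^2\cdot\sum_i\sigma_i^2\cdot L_1}$ via Young's inequality) is the same skeleton the paper uses, and your slightly tighter local bounds $|X_i|\le 2b_i^2$, $\E[X_i^2\mid\mathcal{F}_{i-1}]\le b_i^2\sigma_i^2$ are correct. However, the last paragraph contains a genuine gap: you apply Young with $x=(\max_{i\le t}b_i^2)L_1$ and $y=\sum_i\sigma_i^2$, thereby attaching the \emph{mixed} double-log factor $L_1$ to the $\max_i b_i^2$ term, and then claim the distortion $\log\bigl(\sqrt{\sum_i b_i^2\sigma_i^2/\sigmaunit^4}\bigr)\le \log(\max_i b_i/\sigmaunit)+\log\bigl(\sqrt{\sum_i\sigma_i^2/\sigmaunit^2}\bigr)$ only contributes additive constants inside the $(\log(\cdot)+2)^2$ wrapper. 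That is not true: the summand $\log\bigl(\sqrt{\sum_i\sigma_i^2/\sigmaunit^2}\bigr)$ is unbounded in $t$, so $\max_ib_i^2\cdot L_1$ contains a factor growing like $\log\log\bigl(\sum_i\sigma_i^2\bigr)$ that does \emph{not} appear in the advertised $\max$-term $20\max(\sigmaunit^2,\max_ib_i^2)\log\bigl(\tfrac{112}{\delta}[\log(2\max(\sigmaunit,\max_ib_i)/\sigmaunit)+1]^2\bigr)$, whose inner log depends only on $\max_ib_i$. When $\max_ib_i^2$ is much larger than $\sum_i\sigma_i^2$ this excess cannot be absorbed by the $\sum\sigma_i^2$-term either.

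The paper's proof handles exactly this obstruction. Rather than putting $L_1$ wholesale on the $\max$ factor, it applies Young with the two candidate double-logs distributed as a ratio, producing $(\max_ib_i^2\cdot L_2)^2 + (\sum_i\sigma_i^2\cdot L_1/L_2)^2$ where $L_2$ is the max-only log, and then proves $L_1/L_2\le$ the sum-only log by a two-case argument on whether $\log\bigl([\sqrt{\sum_i\sigma_i^2/\sigmaunit^2}]_1\bigr)$ exceeds $\tfrac12\log\bigl(\max(\sigmaunit^2,2\max_ib_i^2)/\sigmaunit^2\bigr)$. You would need to add this case analysis (or some equivalent way of ensuring the $\max_ib_i^2$ term ends up multiplied only by a $\max_ib_i$-dependent log) to close the gap; without it, your derivation proves a strictly weaker inequality than the one stated.
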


\begin{proof}
Define $X_t = Z_t^2 - \mathbb{E}[Z_t^2|\mathcal{F}_{t-1}]$ so that $\{X_t,\mathcal{F}_t\}$ is a MDS. Further, notice that $|X_t|\le b_t^2$ for all $t$ with probability 1 and

\begin{align*}
\mathbb{E}[X_t^2|\mathcal{F}_{t-1}]&\le b_t^2\mathbb{E}[|Z_t^2 - \mathbb{E}[Z_t^2|\mathcal{F}_{t-1}]|\ |\ \mathcal{F}_{t-1}]\\
&\le 2b_t^2\sigma_t^2
\end{align*}

Therefore, by Proposition~\ref{prop:boundtosubexp}, $X_t$ is $(\sqrt{2b_t^2\sigma_t^2}, 2b_t^2)$ sub-exponential. 

Thus, by our time-uniform concentration bound (Theorem~\ref{thm:mdsconcentration}), for any $\sigmaunit>0$, with probability at least $1-\delta$ we have:
\begin{align*}
\sum_{i=1}^t X_i &\le 2\sqrt{2\sum_{i=1}^{t}\sigma_i^2 b_i^2 \log\left(\frac{4}{\delta}\left[\log\left(\left[\sqrt{\sum_{i=1}^{t}\sigma_i^2b_i^2/\sigmaunit^4}\right]_1\right)+2\right]^2\right)}\\
    &\qquad +8\max(\sigmaunit^2,2\max_{i\le t} b_{i}^2)\log\left(\frac{28}{\delta}\left[\log\left(\frac{\max(\sigmaunit^2,2\max_{i\le t} b_{t}^2)}{\sigmaunit^2}\right)+2\right]^2\right)\\
    &\le  2\sqrt{2\max_{i\le t} b_{i}^2\sum_{i=1}^{t}\sigma_i^2 \log\left(\frac{4}{\delta}\left[\log\left(\left[\sqrt{\sum_{i=1}^{t}\sigma_i^2b_i^2/\sigmaunit^4}\right]_1\right)+2\right]^2\right)}\\
    &\qquad +8\max(\sigmaunit^2,2\max_{i\le t}b_{i}^2)\log\left(\frac{28}{\delta}\left[\log\left(\frac{\max(\sigmaunit^2,2\max_{i\le t} b_{i}^2)}{\sigmaunit^2}\right)+2\right]^2\right)
\end{align*}

Now, by Young inequality:
\begin{align*}
    &2\max_{i\le t}b_i^2\sum_{i=1}^t \sigma_i^2\log\left(\frac{4}{\delta}\left[\log\left(\left[\sqrt{\sum_{i=1}^{t}\sigma_i^2b_i^2/\sigmaunit^4}\right]_1\right)+2\right]^2\right) \\
    &\qquad\qquad\le\left(\max_{i\le t}b_i^2\log\left(\frac{28}{\delta}\left[\log\left(\frac{\max(\sigmaunit^2,2\max_{i\le t} b_{i}^2)}{\sigmaunit^2}\right)+2\right]^2\right)\right)^2\\
    &\qquad\qquad\qquad + \left(\sum_{i=1}^t \sigma_i^2\frac{\log\left(\frac{4}{\delta}\left[\log\left(\left[\sqrt{\sum_{i=1}^{t}\sigma_i^2b_i^2/\sigmaunit^4}\right]_1\right)+2\right]^2\right)}{\log\left(\frac{28}{\delta}\left[\log\left(\frac{\max(\sigmaunit^2,2\max_{i\le t} b_{i}^2)}{\sigmaunit^2}\right)+2\right]^2\right)}\right)^2
\end{align*}
To simplify this expression, we consider the following identity:
\begin{align*}
    \frac{\log\left(\frac{4}{\delta}\left[\log\left(\left[\sqrt{\sum_{i=1}^{t}\sigma_i^2b_i^2/\sigmaunit^4}\right]_1\right)+2\right]^2\right)}{\log\left(\frac{28}{\delta}\left[\log\left(\frac{\max(\sigmaunit^2,2\max_{i\le t} b_{i}^2)}{\sigmaunit^2}\right)+2\right]^2\right)}&\le \frac{\log\left(\frac{4}{\delta}\left[\log\left(\left[\sqrt{\sum_{i=1}^{t}\sigma_i^2/\sigmaunit^2}\sqrt{\max_{i\le t}b_i^2/\sigmaunit^2}\right]_1\right)+2\right]^2\right)}{\log\left(\frac{28}{\delta}\left[\log\left(\frac{\max(\sigmaunit^2,2\max_{i\le t} b_{i}^2)}{\sigmaunit^2}\right)+2\right]^2\right)}\\
    &\le \frac{\log\left(\frac{4}{\delta}\left[\log\left(\left[\sqrt{\sum_{i=1}^{t}\sigma_i^2/\sigmaunit^2}\right]_1\right) +\frac{1}{2}\log\left(\frac{\max(\sigmaunit^2, 2\max_{i\le t}b_i^2)}{\sigmaunit^2}\right)+2\right]^2\right)}{\log\left(\frac{28}{\delta}\left[\log\left(\frac{\max(\sigmaunit^2,2\max_{i\le t} b_{i}^2)}{\sigmaunit^2}\right)+2\right]^2\right)}
\end{align*}
Now, consider two cases, either 
\begin{align*}
    \log\left(\left[\sqrt{\sum_{i=1}^{t}\sigma_i^2/\sigmaunit^2}\right]_1\right) \le \frac{1}{2}\log\left(\frac{\max(\sigmaunit^2, 2\max_{i\le t}b_i^2)}{\sigmaunit^2}\right)
\end{align*}
or not. In the former case, we have:
\begin{align*}
    \frac{\log\left(\frac{4}{\delta}\left[\log\left(\left[\sqrt{\sum_{i=1}^{t}\sigma_i^2b_i^2/\sigmaunit^4}\right]_1\right)+2\right]^2\right)}{\log\left(\frac{28}{\delta}\left[\log\left(\frac{2\max(\sigmaunit^2,2\max_{i\le t} b_{i}^2)}{\sigmaunit^2}\right)+2\right]^2\right)}&\le\frac{\log\left(\frac{4}{\delta}\left[\log\left(\left[\sqrt{\sum_{i=1}^{t}\sigma_i^2/\sigmaunit^2}\right]_1\right) +\frac{1}{2}\log\left(\frac{\max(\sigmaunit^2, 2\max_{i\le t}b_i^2)}{\sigmaunit^2}\right)+2\right]^2\right)}{\log\left(\frac{28}{\delta}\left[\log\left(\frac{\max(\sigmaunit^2,2\max_{i\le t} b_{i}^2)}{\sigmaunit^2}\right)+2\right]^2\right)}\\
    &\le \frac{\log\left(\frac{4}{\delta}\left[\log\left(\frac{\max(\sigmaunit^2, 2\max_{i\le t}b_i^2)}{\sigmaunit^2}\right)+2\right]^2\right)}{\log\left(\frac{28}{\delta}\left[\log\left(\frac{\max(\sigmaunit^2,2\max_{i\le t} b_{i}^2)}{\sigmaunit^2}\right)+2\right]^2\right)}\\
    &\le 1\\
    &\le \log\left(\frac{4}{\delta}\left[\log\left(\left[\sqrt{\sum_{i=1}^{t}\sigma_i^2/\sigmaunit^2}\right]_1\right)+2\right]^2\right)
\end{align*}
While in the latter case,
\begin{align*}
    \frac{\log\left(\frac{4}{\delta}\left[\log\left(\left[\sqrt{\sum_{i=1}^{t}\sigma_i^2b_i^2/\sigmaunit^4}\right]_1\right)+2\right]^2\right)}{\log\left(\frac{28}{\delta}\left[\log\left(\frac{\max(\sigmaunit^2,2\max_{i\le t} b_{i}^2)}{\sigmaunit^2}\right)+2\right]^2\right)}&\le\frac{\log\left(\frac{4}{\delta}\left[2\log\left(\sqrt{\sum_{i=1}^{t}\sigma_i^2/\sigmaunit^2}\right) +2\right]^2\right)}{\log\left(\frac{28}{\delta}\left[\log\left(\frac{\max(\sigmaunit^2,2\max_{i\le t} b_{i}^2)}{\sigmaunit^2}\right)+2\right]^2\right)}\\
    &\le \log\left(\frac{4}{\delta}\left[\log\left(\left[\sqrt{\sum_{i=1}^{t}\sigma_i^2/\sigmaunit^2}\right]_1\right) +2\right]^2\right)
\end{align*}
So in both cases, we have
\begin{align*}
    \frac{\log\left(\frac{4}{\delta}\left[\log\left(\left[\sqrt{\sum_{i=1}^{t}\sigma_i^2b_i^2/\sigmaunit^4}\right]_1\right)+2\right]^2\right)}{\log\left(\frac{28}{\delta}\left[\log\left(\frac{\max(\sigmaunit^2,2\max_{i\le t} b_{i}^2)}{\sigmaunit^2}\right)+2\right]^2\right)}&\le\log\left(\frac{4}{\delta}\left[\log\left(\left[\sqrt{\sum_{i=1}^{t}\sigma_i^2/\sigmaunit^2}\right]_1\right) +2\right]^2\right)
\end{align*}
Therefore,
\begin{align*}
    &2\max_{i\le t}b_i^2\sum_{i=1}^t \sigma_i^2\log\left(\frac{4}{\delta}\left[\log\left(\left[\sqrt{\sum_{i=1}^{t}\sigma_i^2b_i^2/\sigmaunit^4}\right]_1\right)+2\right]^2\right)\\ &\qquad\qquad\le\left(\max_{i\le t}b_i^2\log\left(\frac{28}{\delta}\left[\log\left(\frac{\max(\sigmaunit^2,2\max_{i\le t} b_{i}^2)}{\sigmaunit^2}\right)+2\right]^2\right)\right)^2\\
    &\qquad\qquad\qquad + \left(\sum_{i=1}^t \sigma_i^2\log\left(\frac{4}{\delta}\left[\log\left(\left[\sqrt{\sum_{i=1}^{t}\sigma_i^2/\sigmaunit^2}\right]_1\right) +2\right]^2\right)\right)^2
\end{align*}

Combining this with the identity $\sqrt{a+b}\le \sqrt{a}+\sqrt{b}$, we have:

\begin{align*}
\sum_{i=1}^t X_i &\le2\sqrt{2\max_{i\le t} b_{i}^2\sum_{i=1}^{t}\sigma_i^2 \log\left(\frac{4}{\delta}\left[\log\left(\left[\sqrt{\sum_{i=1}^{t}\sigma_i^2b_i^2/\sigmaunit^4}\right]_1\right)+2\right]^2\right)}\\
    &\qquad +8\max(\sigmaunit^2,2\max_{i\le t}b_{i}^2)\log\left(\frac{28}{\delta}\left[\log\left(\frac{\max(\sigmaunit^2,2\max_{i\le t} b_{i}^2)}{\sigmaunit^2}\right)+2\right]^2\right)\\
    &\le 2\sum_{i=1}^t \sigma_i^2\log\left(\frac{4}{\delta}\left[\log\left(\left[\sqrt{\sum_{i=1}^{t}\sigma_i^2/\sigmaunit^2}\right]_1\right) +2\right]^2\right) \\
    &\qquad +10\max(\sigmaunit^2,2\max_{i\le t}b_{i}^2)\log\left(\frac{28}{\delta}\left[\log\left(\frac{\max(\sigmaunit^2,2\max_{i\le t} b_{i}^2)}{\sigmaunit^2}\right)+2\right]^2\right)\\
    &\le 2\sum_{i=1}^t \sigma_i^2\log\left(\frac{4}{\delta}\left[\log\left(\left[\sqrt{\sum_{i=1}^{t}\sigma_i^2/\sigmaunit^2}\right]_1\right) +2\right]^2\right) \\
    &\qquad +20\max(\sigmaunit^2,\max_{i\le t}b_{i}^2)\log\left(\frac{112}{\delta}\left[\log\left(\frac{2\max(\sigmaunit,\max_{i\le t} b_{i})}{\sigmaunit}\right)+1\right]^2\right)
\end{align*}

Finally, observe that
\begin{align*}
    \sum_{i=1}^t Z_i^2 &=\sum_{i=1}^t X_i + \E[Z_i^2|\mathcal{F}_{t-1}]\\
    &\le \sum_{i=1}^t \sigma_i^2 + \sum_{i=1}^t X_i\\
    &\le 3\sum_{i=1}^t \sigma_i^2\log\left(\frac{4}{\delta}\left[\log\left(\left[\sqrt{\sum_{i=1}^{t}\sigma_i^2/\sigmaunit^2}\right]_1\right) +1\right]^2\right) \\
    &\qquad +20\max(\sigmaunit^2,\max_{i\le t}b_{i}^2)\log\left(\frac{112}{\delta}\left[\log\left(\frac{2\max(\sigmaunit,\max_{i\le t} b_{i})}{\sigmaunit}\right)+1\right]^2\right)
\end{align*}

\end{proof}

\subsection{From scalar to vector (Hilbert space) concentration}

In this section we extend our results to concentration of norm of vectors in Hilbert space. The technique follows that of \cite{cutkosky2021high}, which makes use of a particular scalar sequence associated to any vector sequence described by \cite{cutkosky2018algorithms}. Given any sequence $X_1,X_2,\dots$ of vectors, define a sequence $s_1,s_2,\dots$ of  scalars as follows:
\begin{enumerate}
\item $s_0=0$.
\item If $\sum_{i=1}^{t-1} X_i\ne 0$, set
$$
s_t = \text{sign}\left(\sum_{i=1}^{t-1} s_i\right)\frac{\left\langle \sum_{i=1}^{t-1} X_i, X_t\right\rangle}{\left\|\sum_{i=1}^{t-1} X_i\right\|}
$$
where we define $\text{sign}(z) = 1$ if $z\ge 0$ and $\text{sign}(z)=-1$ otherwise.
\item If $\sum_{i=1}^{t-1} X_i=0$, set $s_t=0$.
\end{enumerate}
Clearly if $\{X_t\}$ is a random sequence adapted to the filtration $\mathcal{F}_t$, then so is $s_t$. Now, these $s_t$ have the following interesting property:
\begin{Lemma}[\cite{cutkosky2021high}, Lemma 10]\label{lem:1dconstruction}
For all $t$, we have $|s_t|\le \|X_t\|$ and

\begin{align*}
\left\|\sum_{i=1}^t X_i\right\|\le \left|\sum_{i=1}^t s_i\right| + \sqrt{\max_{i\le t}\|X_i\|^2 + \sum_{i=1}^t \|X_i\|^2}
\end{align*}
\end{Lemma}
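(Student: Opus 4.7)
The plan is to establish the two claims separately. The bound $|s_t|\le\|X_t\|$ is immediate from Cauchy--Schwarz applied to the defining formula for $s_t$; the degenerate case $\sum_{i<t}X_i=0$ is already assigned $s_t=0$ by the convention in step~3 of the definition.

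For the main inequality, set $V_t=\sum_{i\le t}X_i$, $W_t=\|V_t\|$, $S_t=\sum_{i\le t}s_i$, $M_t^2=\max_{i\le t}\|X_i\|^2$, and $\Sigma_t=\sum_{i\le t}\|X_i\|^2$, so the target becomes $W_t\le|S_t|+C_t$ with $C_t=\sqrt{M_t^2+\Sigma_t}$. I would decompose each $X_t$ into a component $p_t\hat V_{t-1}$ parallel to $V_{t-1}$ (where $\hat V_{t-1}=V_{t-1}/W_{t-1}$ and $p_t=\langle \hat V_{t-1},X_t\rangle$) and an orthogonal component of squared norm $q_t^2=\|X_t\|^2-p_t^2$. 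Pythagoras then gives $W_t^2=(W_{t-1}+p_t)^2+q_t^2$. Meanwhile, because $s_t=\sign(S_{t-1})\,p_t$ and $\sign(S_{t-1})\,S_{t-1}=|S_{t-1}|$, one also gets the clean recursion $|S_t|=\bigl||S_{t-1}|+p_t\bigr|$, so $|S_t|^2=(|S_{t-1}|+p_t)^2$.

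I would then induct on $t$, with base case $W_0=|S_0|=C_0=0$ being trivial. For the inductive step I split into two cases. In the easy case $W_{t-1}+p_t<0$, necessarily $|p_t|>W_{t-1}\ge 0$, hence $(W_{t-1}+p_t)^2\le p_t^2$ and $W_t^2\le p_t^2+q_t^2=\|X_t\|^2$, giving $W_t\le\|X_t\|\le M_t\le C_t\le|S_t|+C_t$. In the main case $W_{t-1}+p_t\ge 0$, the inductive hypothesis yields $W_t^2\le (|S_{t-1}|+C_{t-1}+p_t)^2+\|X_t\|^2-p_t^2$; I then sub-case on the sign of $|S_{t-1}|+p_t$. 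In each sub-case, expand $(|S_t|+C_t)^2$ using $|S_t|^2=(|S_{t-1}|+p_t)^2$, subtract the bound on $W_t^2$, and verify that the remainder decomposes into visibly non-negative pieces built from $C_t^2-C_{t-1}^2=(M_t^2-M_{t-1}^2)+\|X_t\|^2$, the non-negative $p_t^2$, and a signed factor $\pm 2(|S_{t-1}|+p_t)(C_t\pm C_{t-1})$ whose sign matches what the sub-case provides.

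The main obstacle is the sub-case $|S_{t-1}|+p_t<0$ inside the main case: the reflection $|S_t|=-|S_{t-1}|-p_t$ flips the sign of the cross term $2|S_t|C_t$ relative to the naive expectation, and the algebra must be organised so that the term $-2(|S_{t-1}|+p_t)(C_t+C_{t-1})$, which is positive precisely because $|S_{t-1}|+p_t<0$, absorbs the $\|X_t\|^2-p_t^2$ defect inherited from the inductive step. Once the case split is arranged this way, the remaining verifications collapse to the monotonicities $M_t\ge M_{t-1}$ and $C_t\ge C_{t-1}$, closing the induction.
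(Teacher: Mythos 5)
Your proof is correct. The paper does not itself prove this lemma (it is cited directly from \cite{cutkosky2021high}), but your inductive argument via the Pythagorean split $X_t = p_t\hat V_{t-1} + X_t^\perp$ together with the scalar recursion $|S_t| = \big| |S_{t-1}| + p_t \big|$ (which follows from $\sign(S_{t-1})S_{t-1}=|S_{t-1}|$) is the natural way to establish it, and the case analysis closes exactly as you describe. Two small remarks. First, a bookkeeping detail: when $V_{t-1}=0$ you should explicitly set $p_t=0$ and $q_t=\|X_t\|$, so that both recursions $W_t^2=(W_{t-1}+p_t)^2+q_t^2$ and $|S_t|=\big||S_{t-1}|+p_t\big|$ remain valid and that degenerate round (which always occurs at $t=1$) is covered by the main case without a separate argument. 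Second, a slight misattribution in your narrative of the hard sub-case: the $\|X_t\|^2-p_t^2$ defect is actually cancelled by
\begin{align*}
C_t^2-C_{t-1}^2=(M_t^2-M_{t-1}^2)+\|X_t\|^2,
\end{align*}
leaving $(M_t^2-M_{t-1}^2)+p_t^2\ge 0$; the cross term $-2\big(|S_{t-1}|+p_t\big)(C_t+C_{t-1})$ is indeed nonnegative when $|S_{t-1}|+p_t<0$, but it is a separate surplus rather than the piece that absorbs the defect. Neither point affects the validity of the argument.
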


Now, we need to use this result. The key is that if $X_t$ is a MDS, then the $s_t$ will be also. Thus, we can bound the sum of the $X_t$ using the sum of the $s_t$, which can in turn be bounded by \emph{scalar} martingale concentration bounds. Let us instantiate this using our previous  bounds to obtain:

\thmvectorconcentration*

% \begin{restatable}{Theorem}{thmvectorconcentration}\label{thm:vectorconcentration}
% Suppose that $\{X_t,\mathcal{F}_t\}$ is  a vector-valued MDS such that $\E[\|X_t\|^2|\mathcal{F}_{t-1}]\le \sigma_{t}^2$ and $\|X_t\|\le b_{t}$ almost everywhere for some sequence $\{\sigma_t, b_t\}$ such that $\sigma_t,b_t$ is $\mathcal{F}_{t-1}$-measurable. Then with  probability at least $1-\delta$, for all $t$ we have:
% \begin{align*}
%     \left\|\sum_{i=1}^t X_i\right\|&\le 5\sqrt{\sum_{i=1}^t \sigma_i^2\log\left(\frac{32}{\delta}\left[\log\left(\sqrt{2\sum_{i=1}^{t}\sigma_i^2/\sigmaunit^2}\right) +1\right]^2\right)}\\
%     &\qquad +21 \max(\sigmaunit,\max_{i\le t}b_{i})\log\left(\frac{224}{\delta}\left[\log\left(\frac{4\max(\sigmaunit,\max_{i\le t} b_{i})}{\sigmaunit}\right)+1\right]^2\right)
% \end{align*}
% \end{restatable}

\begin{proof}

Observe from the construction  of the $s_t$ sequence that $\{s_t\}$ is a MDS adapted to $\mathcal{F}_t$, and that $\E[s_t^2\ |\ \mathcal{F}_{t-1}]\le \sigma_{t-1}^2$ and $|s_t|\le b_{t-1}$. Therefore $s_t$ is $\sigma_t, 2b_t$ subgaussian. Invoking Theorem~\ref{thm:mdsconcentration} (with a union bound for a two-sided inequality), with probability at least $1-\delta/2$ we have:
\begin{align*}
    \left|\sum_{i=1}^t  s_i\right|&\le 2\sqrt{\sum_{i=1}^{t}\sigma_i^2  \log\left(\frac{16}{\delta}\left[\log\left(\left[\sqrt{\sum_{i=1}^{t}\sigma_i^2/2\sigmaunit^2}\right]_1\right)+2\right]^2\right)}\\
    &\qquad +16\max(\sigmaunit,\max_{i\le t} b_i)\log\left(\frac{112}{\delta}\left[\log\left(\frac{2\max(\sigmaunit,\max_{i\le t} b_i)}{\sigmaunit}\right)+2\right]^2\right)
\end{align*}
Next, observe that $\|X_t\|$ satisfies the conditions of Theorem~\ref{thm:sumsquares} so that also with probability at least $1-\delta/2$:
\begin{align*}
    \sum_{i=1}^t \|X_i\|^2&\le3\sum_{i=1}^t \sigma_i^2\log\left(\frac{8}{\delta}\left[\log\left(\left[\sqrt{\sum_{i=1}^{t}\sigma_i^2/\sigmaunit^2}\right]_1\right) +2\right]^2\right) \\
    &\qquad +20\max(\sigmaunit^2,\max_{i\le t}b_{i}^2)\log\left(\frac{224}{\delta}\left[\log\left(\frac{2\max(\sigmaunit,\max_{i\le t} b_{i})}{\sigmaunit}\right)+1\right]^2\right)
\end{align*}

Putting this together with  Lemma~\ref{lem:1dconstruction} we  have
\begin{align*}
    \left\|\sum_{i=1}^t X_i\right\|&\le \left|\sum_{i=1}^{t} s_i\right| + \sqrt{\max_{i\le t-1}\|X_i\|^2 + \sum_{i=1}^{t} \|X_i\|^2}\\
    &\le \left|\sum_{i=1}^{t} s_i\right| + \sqrt{2\sum_{i=1}^{t} \|X_i\|^2}\\
    &\le 2\sqrt{\sum_{i=1}^{t}\sigma_i^2  \log\left(\frac{16}{\delta}\left[\log\left(\left[\sqrt{\sum_{i=1}^{t}\sigma_i^2/2\sigmaunit^2}\right]_1\right)+2\right]^2\right)}\\
    &\qquad +16\max(\sigmaunit,\max_{i\le t} b_i)\log\left(\frac{112}{\delta}\left[\log\left(\frac{2\max(\sigmaunit,\max_{i\le t} b_i)}{\sigmaunit}\right)+2\right]^2\right)\\
    &\qquad+\sqrt{6\sum_{i=1}^t \sigma_i^2\log\left(\frac{8}{\delta}\left[\log\left(\left[\sqrt{\sum_{i=1}^{t}\sigma_i^2/\sigmaunit^2}\right]_1\right) +2\right]^2\right)} \\
    &\qquad +\sqrt{40\max(\sigmaunit^2,\max_{i\le t}b_{i}^2)\log\left(\frac{224}{\delta}\left[\log\left(\frac{2\max(\sigmaunit,\max_{i\le t} b_{i})}{\sigmaunit}\right)+1\right]^2\right)}\\
    &\le 5\sqrt{\sum_{i=1}^t \sigma_i^2\log\left(\frac{16}{\delta}\left[\log\left(\left[\sqrt{\sum_{i=1}^{t}\sigma_i^2/\sigmaunit^2}\right]_1\right) +2\right]^2\right)}\\
    &\qquad +23 \max(\sigmaunit,\max_{i\le t}b_{i})\log\left(\frac{224}{\delta}\left[\log\left(\frac{2\max(\sigmaunit,\max_{i\le t} b_{i})}{\sigmaunit}\right)+2\right]^2\right)
\end{align*}
% where in the last line we have used $\log\left(\frac{224}{\delta}\left[\log\left(\frac{2\max(\sigmaunit,\max_{i\le t} b_{i})}{\sigmaunit}\right)+1\right]^2\right)\ge 1$.

\end{proof}
\subsection{Proof of Proposition~\ref{prop:boundtosubexp}}
\propboundtosubexp*
\begin{proof}
Suppose $\lambda \le 1/2b_t$. Then we compute for any $k\ge 2$:

\begin{align*}
\mathbb{E}\left[\frac{\lambda^kX_t^k}{k!}\ |\ \mathcal{F}_{t-1}\right]&\le \frac{\lambda^kb_t^{k-2}}{k!}\mathbb{E}[X_t^2| \mathcal{F}_{t-1}]\\
&\le \frac{\lambda^k b_t^{k-2}\sigma_t^2}{k!}\\
&\le \frac{\lambda^2\sigma_t^2}{2^{k-2}k!}
\end{align*}

Further, since $X_t$ is a MDS, we also have $\mathbb{E}\left[\lambda X_t\ |\ \mathcal{F}_{t-1}\right]=0$.
Therefore:
\begin{align*}
\mathbb{E}[\exp(\lambda X_t)|\mathcal{F}_{t-1}]&\le 1+\sum_{k=2}^\infty \frac{\lambda^2\sigma_t^2}{2^{k-2}k!}\\
&\le 1+\frac{\lambda^2 \sigma_t^2}{2}\sum_{i=0}^\infty \frac{1}{2^i}\\
&=1+ \frac{\lambda^2\sigma_t^2}{2}\\
&\le \exp(\lambda^2\sigma_t^2/2)
\end{align*}
where the last line uses the identity $1+x\le \exp(x)$.
\end{proof}

\subsection{Classical Concentration Bound for MDS}
\begin{Lemma}[Scaled Sub-exponential] \label{lemma:scaled_berinstein}
Suppose that $\{ X_t, \mathcal{F}_t \} $ is a MDS such that $\E[X_t|\mathcal{F}_t] = 0$, $\E[X_t^2 \mid \mathcal{F}_t] \leq \sigma$ and $|X_t|\leq b$ almost surely for some fixed $\sigma, b$.  Let $\nu$ be an arbitrary fixed number, then for all $t$ with probability at least $1-\delta$:
\begin{align*}
    \nu X_t \leq 2|\nu|b \log\frac{1}{\delta} + |\nu| \sigma \sqrt{ 2 \log \frac{1}{\delta}}
\end{align*}
\end{Lemma}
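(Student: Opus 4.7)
The plan is to apply a one-sided Chernoff bound to the scaled variable $\nu X_t$, leveraging the machinery already set up by Proposition~\ref{prop:boundtosubexp}. The conditional mean of $\nu X_t$ given $\mathcal{F}_{t-1}$ is zero, $|\nu X_t| \le |\nu|b$ almost surely, and $\E[(\nu X_t)^2 \mid \mathcal{F}_{t-1}] \le \nu^2 \sigma^2$. Plugging these into Proposition~\ref{prop:boundtosubexp} immediately yields that $\nu X_t$ (conditionally on $\mathcal{F}_{t-1}$) is $(|\nu|\sigma,\, 2|\nu|b)$ sub-exponential, i.e.\ $\E[\exp(\lambda \nu X_t)\mid \mathcal{F}_{t-1}]\le \exp(\lambda^2 \nu^2\sigma^2/2)$ for every $\lambda\in(0,\,1/(2|\nu|b))$.

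With that MGF bound in hand, I would apply Markov to $\exp(\lambda\nu X_t)$ and take total expectation to get the unconditional tail bound
\[
P[\nu X_t \ge y] \;\le\; \exp\!\bigl(\lambda^2 \nu^2\sigma^2/2 - \lambda y\bigr)
\]
for any admissible $\lambda$. The next step is to optimize $\lambda$ with the standard two-regime trick. The unconstrained minimizer is $\lambda^\star = y/(\nu^2\sigma^2)$, and it lies in the admissible range iff $y\le |\nu|\sigma^2/(2b)$. In that ``sub-Gaussian regime'' the optimized bound is $\exp(-y^2/(2\nu^2\sigma^2))$; equating to $\delta$ gives the threshold $y = |\nu|\sigma\sqrt{2\log(1/\delta)}$, accounting for the second summand in the lemma. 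In the complementary ``large deviations regime'' $y>|\nu|\sigma^2/(2b)$, set $\lambda=1/(2|\nu|b)$; then the right-hand side becomes $\exp(\sigma^2/(8b^2) - y/(2|\nu|b))$, and in this regime the constant offset $\sigma^2/(8b^2)$ is dominated by $y/(4|\nu|b)$, leaving $\exp(-y/(4|\nu|b))$. Setting this equal to $\delta$ gives $y = 4|\nu|b\log(1/\delta)$, and I would combine the two regimes into a single additive bound $y \le |\nu|\sigma\sqrt{2\log(1/\delta)} + 4|\nu|b\log(1/\delta)$.

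To shave the constant from $4$ down to the $2$ stated in the lemma, I would instead invoke Bernstein's inequality directly on the conditionally zero-mean, bounded, variance-controlled variable $\nu X_t$: $P[\nu X_t \ge y\mid \mathcal{F}_{t-1}]\le \exp(-y^2/(2\nu^2\sigma^2 + 2|\nu|by/3))$. Setting the right-hand side equal to $\delta$ and solving the resulting quadratic in $y$ with $\sqrt{a+b}\le \sqrt{a}+\sqrt{b}$ gives $y\le |\nu|\sigma\sqrt{2\log(1/\delta)} + (2/3)|\nu|b\log(1/\delta)$, which is strictly stronger than the claim since $2/3\le 2$. There is no real obstacle here; the entire argument is a textbook Bernstein/Chernoff computation, and the only thing that needs care is the case split at $y = |\nu|\sigma^2/(2b)$ (in the sub-exponential route) or the quadratic-inversion simplification (in the Bernstein route). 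I would present the Bernstein version since it gives the stated constants cleanly in one shot.
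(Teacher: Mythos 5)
Your proposal is correct, and it actually takes a genuinely different route from the paper in the part that matters. The paper's own proof follows your first route: invoke Proposition~\ref{prop:boundtosubexp} to conclude that $\nu X_t$ is $(|\nu|\sigma, 2|\nu|b)$ sub-exponential, state a two-regime tail bound, and invert. However, the paper writes the large-deviation tail as $\exp(-a/(2|\nu|b))$; with the sub-exponential parameter $b_{\text{se}}=2|\nu|b$ (MGF valid only for $\lambda < 1/(2|\nu|b)$), the Chernoff bound in the large-$a$ regime actually yields $\exp(-a/(2b_{\text{se}})) = \exp(-a/(4|\nu|b))$, which after inversion gives the $4|\nu|b\log(1/\delta)$ term you correctly derived. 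Your observation that the sub-exponential route only yields the constant $4$ rather than $2$ is accurate, and the paper's argument appears to drop this factor of two. Your second route — applying Bernstein's MGF bound directly to the conditionally zero-mean, bounded, variance-controlled $\nu X_t$ and inverting the resulting quadratic with $\sqrt{a+b}\le\sqrt{a}+\sqrt{b}$ — is a cleaner and more careful derivation that gives $(2/3)|\nu|b\log(1/\delta)$, which strictly dominates the stated bound. That route is preferable precisely because it sidesteps the need to track the extra factor of two hidden in the $(\sigma,2b)$ parameterization of Proposition~\ref{prop:boundtosubexp}. One cosmetic note: the lemma statement has typos ($\E[X_t^2|\mathcal{F}_t]\le \sigma$ should be $\le\sigma^2$, and the conditioning should be on $\mathcal{F}_{t-1}$); your proposal silently uses the intended reading, which is the right call.
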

\begin{proof}
First we have $\E[\nu^2X_t^2]\leq \nu^2 \sigma^2, |\nu X_t| \leq |\nu|b$ almost surely and $\{ \nu X_t, \mathcal{F}_t \}$ is also a MDS. By Proposition \ref{prop:boundtosubexp}, $\nu X_t$ is $(|\nu|\sigma, 2|\nu|b)$ sub-exponential. Use definition \ref{def:subexp} and tower rule,
\begin{align*}
    \E[\exp(\lambda \nu X_t)] = \E[\E[\exp(\lambda \nu X_t) \mid \mathcal{F}_{t-1}]] = \exp(\lambda^2 |\nu| \sigma^2/2 )
\end{align*}
for $\lambda \leq 1/(2|\nu|b)$. The above inequality make us returns to the standard result of independent sub-exponential random variable.
\begin{align*}
    \E[\exp(\lambda \nu X_t)] = \exp(\lambda^2 |\nu| \sigma^2/2 )
\end{align*}
Hence, from the standard sub-exponential tails 
\begin{align*}
    P[\nu X_t \geq a] \leq 
    \begin{cases}
    \exp(-a^2/2\nu^2 \sigma^2) \quad 0 \leq a \leq \sigma^2|\nu|/2b, \\
    \exp(-a/2|\nu|b) \quad \quad a >\sigma^2|\nu|/2b
    \end{cases}
\end{align*}
Set above quantities as $\delta$ and rearrange for $a$:
\begin{align*}
    a = \max \left(2|\nu|b \log \frac{1}{\delta}, \sqrt{2|\nu|^2 \sigma^2 \log \frac{1}{\delta}} \right)
\end{align*}
Hence with probability at least $1-\delta$
\begin{align*}
    \nu X_t &\leq \max \left(2|\nu|b \log \frac{1}{\delta}, \sqrt{2|\nu|^2 \sigma^2 \log \frac{1}{\delta}} \right)\\
    & \leq 2|\nu|b \log \frac{1}{\delta} + |\nu| \sigma \sqrt{2 \log \frac{1}{\delta}}
\end{align*}
\end{proof}

\begin{Lemma}[Scaled Sub-exponential Sum] \label{lemma:scaled_sum_berinstein}
Suppose that $\{ X_t, \mathcal{F}_t \} $ is a MDS such that $\E[X_t \mid \mathcal{F}_t] = 0$, $\E[X_t^2 \mid \mathcal{F}_t] \leq \sigma$ and $|X_t|\leq b$ almost surely.  Let $\nu$ be an arbitrary fixed number, then for all $t$ with probability at least $1-\delta$:
\begin{align*}
   \sum_{i=1}^{t} \nu X_i \leq 2|\nu|b \log \frac{1}{\delta} + |\nu| \sigma \sqrt{2 t \log \frac{1}{\delta}} , \quad \forall t
\end{align*}
\end{Lemma}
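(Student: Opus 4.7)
The plan is to prove this as a direct extension of Lemma~\ref{lemma:scaled_berinstein} to partial sums, following the standard sub-exponential Bernstein argument for martingale difference sequences. Let $Y_i = \nu X_i$. Then $\{Y_i,\mathcal{F}_i\}$ is again a MDS with $\E[Y_i^2\mid \mathcal{F}_{i-1}]\le \nu^2\sigma^2$ and $|Y_i|\le |\nu|b$ almost surely. By Proposition~\ref{prop:boundtosubexp}, each $Y_i$ is $(|\nu|\sigma,\,2|\nu|b)$ sub-exponential, i.e.\ for all $\mathcal{F}_{i-1}$-measurable $\lambda$ with $|\lambda|<1/(2|\nu|b)$,
\[
\E[\exp(\lambda Y_i)\mid\mathcal{F}_{i-1}]\le \exp(\lambda^2\nu^2\sigma^2/2).
\]

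The first step is to promote this per-step MGF bound to the partial sum $S_t=\sum_{i=1}^t Y_i$. I would proceed by induction using the tower rule: for any deterministic $\lambda$ with $|\lambda|<1/(2|\nu|b)$,
\[
\E[\exp(\lambda S_t)] = \E\!\left[\exp(\lambda S_{t-1})\,\E[\exp(\lambda Y_t)\mid \mathcal{F}_{t-1}]\right]\le \exp(\lambda^2\nu^2\sigma^2/2)\cdot \E[\exp(\lambda S_{t-1})],
\]
which by induction yields $\E[\exp(\lambda S_t)]\le \exp(\lambda^2 t\nu^2\sigma^2/2)$ on the same range of $\lambda$.

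Next, I would apply Markov's inequality to $\exp(\lambda S_t)$ to get $P[S_t\ge a]\le \exp(-\lambda a+\lambda^2 t\nu^2\sigma^2/2)$, then optimize $\lambda\in[0,1/(2|\nu|b)]$. As in the proof of Lemma~\ref{lemma:scaled_berinstein}, this gives the standard sub-exponential two-regime tail
\[
P[S_t\ge a]\le \begin{cases}\exp(-a^2/(2 t\nu^2\sigma^2)), & 0\le a\le t|\nu|\sigma^2/(2b),\\ \exp(-a/(2|\nu|b)), & a>t|\nu|\sigma^2/(2b).\end{cases}
\]
Setting the right-hand side equal to $\delta$ and inverting in each regime yields a deviation of
$a=\max\!\left(2|\nu|b\log(1/\delta),\ |\nu|\sigma\sqrt{2t\log(1/\delta)}\right)$, and bounding the max by the sum of its two arguments produces the stated bound.

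There is essentially no obstacle here; the argument is entirely routine. The only mildly delicate point is verifying that one may iterate the conditional MGF inequality to get the factor $t$ in the exponent, which is exactly the content of the tower-rule induction above and uses only that $\lambda$ is deterministic (hence $\mathcal{F}_{i-1}$-measurable for every $i$) and that $|Y_i|\le |\nu|b$ uniformly so a single range of $\lambda$ works at every step.
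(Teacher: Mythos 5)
Your proposal is correct and follows essentially the same MGF-plus-Chernoff route as the paper: reduce to the sub-exponential bound of Lemma~\ref{lemma:scaled_berinstein} and extend to the partial sum by iterating the conditional MGF inequality. If anything, your tower-rule induction is written more carefully than the paper's, which states $\E[e^{\lambda \sum_i \nu X_i}] = \prod_i \E[e^{\lambda \nu X_i}]$ as an equality (that holds only for independent increments) where the correct MDS statement is the inequality you derive.
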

\begin{proof}

\begin{align*}
    \E[e^{\lambda \sum_{i=1}^{t} \nu X_i }] &= \prod_{i=1}^{t} \E[e^{\lambda \nu X_i}]
    \intertext{for $|\lambda| \leq \frac{1}{2|\nu|b}$, invoke Lemma \ref{lemma:scaled_berinstein}}
    & \leq \exp( t \lambda^2 |\nu| \sigma^2/2 )\\
\end{align*}
Let $\sigma' = \sigma \sqrt{t}, \sigma' \in \R$, hence we can directly use the concentration bound derived in Lemma \ref{lemma:scaled_berinstein} to complete the proof.
\end{proof}

\begin{Lemma}[Sub-exponential Squared]\label{lemma:truncated_grad_concentration}
Suppose $\{X_t, \mathcal{F}_t \}$ is a MDS with $|X_t| \leq b$ and $\E[X_t^2 \mid \mathcal{F}_t] \leq \sigma^2$ almost surely for some fixed $\sigma, b$. Then with probability at least $1-\delta$
\begin{align*}
    \sum_{t=1}^{T} X_t^2 \leq \frac{3\sigma^2}{2}T + \frac{5}{3}b^2 \log \frac{1}{\delta}
\end{align*}
\end{Lemma}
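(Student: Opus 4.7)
The plan is to reduce the claim to a standard Freedman/Bernstein-style concentration inequality applied to the centered sequence $Y_t = X_t^2 - \mathbb{E}[X_t^2 \mid \mathcal{F}_{t-1}]$, and then clean up the resulting cross term using AM--GM so that the final bound takes the stated additive form.

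First, I would observe that $\{Y_t, \mathcal{F}_t\}$ is itself a martingale difference sequence, and the almost sure bound $|X_t| \le b$ gives $|Y_t| \le b^2$ almost surely. For the conditional variance, the inequality
\[
\mathbb{E}[Y_t^2 \mid \mathcal{F}_{t-1}] \le \mathbb{E}[X_t^4 \mid \mathcal{F}_{t-1}] \le b^2 \, \mathbb{E}[X_t^2 \mid \mathcal{F}_{t-1}] \le b^2 \sigma^2
\]
follows from $X_t^2 \le b^2$. Thus $\{Y_t\}$ has uniform variance proxy $b^2\sigma^2$ and almost sure bound $b^2$. By Proposition~\ref{prop:boundtosubexp}, $Y_t$ is $(b\sigma, 2b^2)$ sub-exponential.

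Next, I would apply a standard Bernstein tail bound (e.g.\ via the Chernoff argument on $\mathbb{E}[\exp(\lambda \sum_t Y_t)]$, optimizing $\lambda \le 1/(2b^2)$) to conclude that with probability at least $1-\delta$,
\[
\sum_{t=1}^T Y_t \;\le\; \sqrt{2Tb^2\sigma^2 \log(1/\delta)} + \tfrac{2}{3} b^2 \log(1/\delta).
\]
Rearranging and using $\mathbb{E}[X_t^2\mid\mathcal{F}_{t-1}] \le \sigma^2$ on each term of the predictable compensator yields
\[
\sum_{t=1}^T X_t^2 \;\le\; T\sigma^2 + \sqrt{2Tb^2\sigma^2\log(1/\delta)} + \tfrac{2}{3} b^2\log(1/\delta).
\]

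Finally, I would apply AM--GM to the cross term,
\[
\sqrt{2Tb^2\sigma^2\log(1/\delta)} \;\le\; \tfrac{1}{2} T\sigma^2 + b^2 \log(1/\delta),
\]
and combine to obtain the stated bound $\tfrac{3}{2}T\sigma^2 + \tfrac{5}{3} b^2\log(1/\delta)$. There is no real obstacle: the only mild subtlety is that $\{X_t^2\}$ is not itself a MDS, which is handled by subtracting the predictable compensator, and the specific constants $\tfrac{3}{2}$ and $\tfrac{5}{3}$ come out exactly right after the AM--GM split.
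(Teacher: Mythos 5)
Your proposal is correct and follows essentially the same route as the paper: center $X_t^2$ to form a bounded MDS with conditional variance at most $b^2\sigma^2$, apply Freedman's inequality, and absorb the cross term via AM--GM (the paper uses Young's inequality, which is the same algebraic step). One minor note: the $\tfrac{2}{3}b^2\log(1/\delta)$ term requires the Bennett/Freedman form of the martingale Bernstein inequality, which is what the paper cites; a Chernoff argument based only on the $(b\sigma, 2b^2)$ sub-exponential MGF from Proposition~\ref{prop:boundtosubexp} would yield a strictly larger constant in front of $b^2\log(1/\delta)$, so the parenthetical sub-exponential observation is not actually the device that produces the stated constant and can be dropped in favor of a direct appeal to Freedman.
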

\begin{proof}
Let $Z_t = X_t^2 - \E[X_t^2]$, $Z_0, \cdots, Z_T$ is a martingale difference sequence adapted to $\mathcal{F}_t$. Also  $|Z_t| < X_t^2 \leq b^2$. Also
\begin{align*}
    \E[Z_t^2] & = \E[|Z_t|\cdot |Z_t|] \leq b^2 \E[|Z_t|] \leq  b^2 \E[x_t^2]  \leq b^2 \sigma^2
\end{align*}
From Freedman's inequality for martingale sequences (see e.g. \cite{tropp2011freedman}, or Lemma 11 in \cite{cutkosky2021high} for the form we use here), with probability at least $1-\delta$,
\begin{align*}
    \sum_{t=1}^{T} Z_t \leq \frac{2}{3} b^2 \log \frac{1}{\delta} + \sigma b \sqrt{2T \log \frac{1}{\delta}}
\end{align*}
Rearranging the definition of $Z_t$, with probability at least $1-\delta$:
\begin{align*}
    \sum_{t=1}^{T} X_t^2 & = \sum_{t=1}^{T} Z_t + \sum_{t=1}^{T} \E[X_t^2]\\
    & \leq  \frac{2}{3} b^2 \log \frac{1}{\delta} + \sigma b \sqrt{2T \log \frac{1}{\delta}} + T \sigma^2 \numberthis \label{eqn:clip_grad_concentration}
\end{align*}
by young's inequality $\sigma b \leq \sigma^2/(2\lambda) + \lambda b^2 /2$, set $ \lambda = \sqrt{ 2 \log \frac{1}{\delta} / T }$, we complete the proof
\end{proof}

\subsection{Another Technical Lemma}

\begin{Lemma}\label{lem:selfbound}
Suppose $Z$ is such that
\begin{align*}
    Z&\le A \log\left(B\left[\log\left(\left[C\sqrt{Z}\right]_1\right)+2\right]^2\right)
\end{align*}
for some constants $A, B, C\ge 0$, where $[x]_1=\max(1,x)$ Then
\begin{align*}
    Z&\le  4A\log\left(4\sqrt{B} \log(e+16C^2A)\right)
\end{align*}
\end{Lemma}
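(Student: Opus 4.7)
The hypothesis has the schematic form $Z \lesssim A\log B + A\log\log Z$: the dependence on $Z$ on the right-hand side is only through a double logarithm (courtesy of the $\log([C\sqrt{Z}]_1)+2$ term). So this is a standard self-bounding inequality, and the proof is a bootstrap / proof-by-contradiction against the target $Z^{\star} := 4A\log(4\sqrt{B}\log(e+16C^{2}A))$.

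\textbf{Step 1 (easy case).} First dispose of $C\sqrt{Z}\le 1$: then $[C\sqrt{Z}]_{1}=1$, the hypothesis reduces to $Z\le A\log(4B)$, and one checks $A\log(4B)\le 2A\log(16B)\le Z^{\star}$ directly (the $\log(e+\cdot)\ge 1$ factor inside $Z^{\star}$ gives the slack needed when $B$ is small).

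\textbf{Step 2 (reformulation).} In the main case $C\sqrt{Z}>1$ use the identity $\log(x)+2=\log(e^{2}x)$ to rewrite the hypothesis as
\begin{align*}
    Z \le A\log B + 2A\log\log(e^{2}C\sqrt{Z}).
\end{align*}
Introduce the change of variable $u:=\log(e^{2}C\sqrt{Z})$, so $Z = e^{2u-4}/C^{2}$. The hypothesis becomes
\begin{align*}
    e^{2u-4} \le C^{2}A\bigl(\log B + 2\log u\bigr),
\end{align*}
or equivalently, after taking logarithms,
\begin{align*}
    2u \le 4 + \log(C^{2}A) + \log\bigl(\log B + 2\log u\bigr).
\end{align*}

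\textbf{Step 3 (bootstrap).} Now argue by contradiction: suppose $Z>Z^{\star}$, equivalently $u>u^{\star}$ where $u^{\star}$ is the value of $u$ corresponding to $Z^{\star}$, namely $u^{\star} = 2 + \tfrac{1}{2}\log(C^{2}A) + \tfrac{1}{2}\log\log(4\sqrt{B}\log(e+16C^{2}A))$ (up to constants coming from the $16$ vs $e^{4}$ bookkeeping). Plugging the supposed lower bound on $u$ into $\log(\log B + 2\log u)$ and using the elementary inequality $\log y \le y/e$ (valid for all $y>0$) to decouple the implicit $u$, one obtains the chain
\begin{align*}
    2u \le 4 + \log(C^{2}A) + \log\log(4\sqrt{B}\log(e+16C^{2}A)) < 2u,
\end{align*}
a contradiction. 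The bootstrap is closed in one round because the gap between $\log u$ and $u$ is already strong enough: the additive $\log(e+16C^{2}A)$ inside the target's $\log\log$ absorbs the first-pass bound on $u$ produced by ignoring the $\log u$ term.

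\textbf{Main obstacle.} The conceptual content is modest; essentially all the work is constant-chasing. Matching the coefficients $4A$, $4\sqrt{B}$, and the $e+16C^{2}A$ inside the double log exactly requires careful tracking of the $e^{4}$, $16$, and the difference between $\log(e^{2}D)$ and $\log(D)+2$, and care must also be taken to avoid degenerate regimes where $B<1$ or $C^{2}A$ is small. These are handled by using the safety offset $e+\cdot$ (which keeps every logarithm bounded below by $1$) and by noting that the target $Z^{\star}$ is monotone non-decreasing in each of $A$, $B$, $C$, so one can work in the worst-case regime at each step.
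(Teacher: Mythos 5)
Your overall approach---treating this as a self-bounding inequality and bootstrapping against the target $Z^{\star}$---is the same as the paper's, and the change of variables $u = \log(e^{2} C\sqrt{Z})$ is a reasonable reformulation that cleans up some arithmetic. However, Step 3 as written has a genuine gap. From the rearranged hypothesis $2u \le 4 + \log(C^{2} A) + \log\bigl(\log B + 2\log u\bigr)$, the term $\log\bigl(\log B + 2\log u\bigr)$ is \emph{increasing} in $u$. You are arguing under the assumption $u > u^{\star}$, which is a lower bound on $u$; plugging a lower bound into an increasing function gives a lower bound, which is the wrong direction for replacing this term by the constant $\log\log\bigl(4\sqrt{B}\log(e+16C^{2}A)\bigr)$ as your chain requires. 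The invocation of $\log y \le y/e$ does not remove $u$ from the right-hand side, so the claimed chain $2u \le 4 + \log(C^{2}A) + \log\log\bigl(4\sqrt{B}\log(e+16C^{2}A)\bigr) < 2u$ is asserted rather than derived, and the ``bootstrap closes in one round'' claim is exactly the thing that needs to be proved.

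The paper's proof handles precisely this point by showing that the map $f(x) = 4A\log\bigl(\log([C\sqrt{x}]_{1}) + 2\bigr)$ satisfies $f'(x) < 1$ for all $x \ge A$, so $x \mapsto x - f(x)$ is strictly increasing past $A$; once a single $Z_{\star} \ge A$ is exhibited with $Z_{\star} \ge f(Z_{\star})$, monotonicity forces every solution of $Z \le f(Z)$ to satisfy $Z \le Z_{\star}$. Your write-up never establishes the analogous monotonicity in the $u$-variable, and that is the step carrying all the content. To repair the argument you would need to show that $u \mapsto 2u - 4 - \log(C^{2}A) - \log\bigl(\log B + 2\log u\bigr)$ is increasing on the relevant range, or carry out an explicit two-pass bootstrap (a crude first bound on $u$, fed back in). As a secondary point, your Step 1 inequality $A\log(4B) \le 2A\log(16B)$ uses $B \ge 1$, an assumption the lemma as stated does not give you---the paper's proof has the same implicit restriction, so this is a shared blemish rather than an error unique to your proof, but it is worth flagging if you are aiming to state the argument cleanly.
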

\begin{proof}
Expanding the logarithms in the given bound on $Z$, we have:
\begin{align*}
    Z&\le A\log(B) + 2A\log(\log([C\sqrt{Z}]_1)+2)
\end{align*}
Now, since $a+b\le \max(2a,2b)$, we have that either $Z\le 2A\log(B)$, or
\begin{align*}
    Z&\le 4A\log(\log([C\sqrt{Z}]_1)+2)
\end{align*}
In the first case, we are done since $2A\log(B)\le 4A\log\left(4\sqrt{B} \log(e+16C^2A)\right)$, so let us consider only the  second case $Z\le 4A\log(\log([C\sqrt{Z}]_1)+2)$.
Now, we define the function 
\begin{align}
f(x) = 4A\log(\log([C\sqrt{x}]_1))+2)    
\end{align}
Notice that for all $x\ge A$, we have either $f'(x)=0$ or $C\sqrt{x}\ge 1$ and
\begin{align*}
    f'(x) = \frac{2A}{x\log(C\sqrt{x})+2x}< 1
\end{align*}
so that if $Z_\star$ is any value satisfying $Z_\star \ge A$ and $Z_\star \ge f(Z_\star)$, then we must have $Z\le Z_\star$: otherwise $f(Z)= f(Z_\star) +  \int_{Z_\star}^Z f'(z)dz < f(Z_\star)+ Z-Z_\star\le Z$,  a contradiction. Let us consider:
\begin{align*}
    Z_\star &=4A\log\left[4\log(e+QA)\right]
\end{align*}
for some to-be-specified $Q\ge 0$.
Notice that this $Z_\star$ clearly satisfies $Z_\star \ge 8A\log(2)$. Let us show that $Z_\star \ge f(Z_\star)$.

Again using $x+y\le 2\max(x,y)$, we have:
\begin{align*}
    f(Z_\star) &\le 4A\log\left(\max(4,\ 2\max(C\sqrt{Z_\star},1))\right)\\
    &=\max\left(4A\log(4),\ 4A\log(2\log(\max(C\sqrt{Z_\star}, 1)))\right)
\end{align*}
Now, if $f(Z_\star)\le 4A\log(4)$ then we clearly have $f(Z_\star)\le Z_\star$ as desired. So, let us focus on the case $f(Z_\star)\le 4A\log(2\log(\max(C\sqrt{Z_\star}, 1)))$. Next, if $C\sqrt{Z_\star} \le e$, then we have $f(Z_\star)\le  4A\log(2)\le Z_\star$ again as desired. Thus we may further restrict to the case $C\sqrt{Z_\star} \ge e$ so that $\max(C\sqrt{Z_\star},1)=C\sqrt{Z_\star}$ and the bound on $f(Z_\star)$ is $4A\log(2\log(C\sqrt{Z_\star}))$. Plugging in our expression for $Z_\star$:
\begin{align*}
    f(Z_\star)&\le 4A\log\left[2\log\left[2C\sqrt{A\log \left[4\log(e+QA)\right]}\right]\right]
\end{align*}

Comparing with the expression for $Z_\star$, we see that to establish $Z_\star \ge f(Z_\star)$, it suffices to show:
\begin{align*}
(e+QA)^2&\ge 2C\sqrt{A\log \left[4\log(e+QA)\right]}
\end{align*}
Now, using $\log(x)\le x$ twice:
\begin{align}
    2C\sqrt{A\log(4\log(e+QA))}&\le 4C\sqrt{A}\sqrt{(e+QA)}\\
    &= \sqrt{16C^2A}\sqrt{e+QA}\\
    &\le \sqrt{e+16C^2A}\sqrt{e+QA}
\intertext{if we set $Q=16C^2$, we will have:}
&= e+QA\\
&\le (e+QA)^2
\end{align}
Thus, by setting $Q=16C^2$, we will have $Z_\star \ge f(Z_\star)$ and so we have $f(Z_\star)\le Z_\star$, which implies 
\begin{align*}
    Z&\le Z_\star\\
    &=4A\log\left[4\log(e+16C^2A)\right]\\
    &\le 4A\log\left(4\sqrt{B} \log(e+16C^2A)\right)
\end{align*}
as desired since $B\ge 1$.
\end{proof}
\end{document}